\theoremstyle{plain}
\newtheorem{thm}{\protect\theoremname}
\theoremstyle{remark}
\newtheorem{rem}[thm]{\protect\remarkname}
\theoremstyle{plain}
\newtheorem{prop}[thm]{\protect\propositionname}
\theoremstyle{plain}
\newtheorem{lem}[thm]{\protect\lemmaname}
\newtheorem{cor}[thm]{\protect\corollaryname}
\theoremstyle{definition}
\newtheorem{defn}[thm]{\protect\definitionname}
\providecommand{\definitionname}{Definition}
\providecommand{\lemmaname}{Lemma}
\providecommand{\propositionname}{Proposition}
\providecommand{\remarkname}{Remark}
\providecommand{\theoremname}{Theorem}
\providecommand{\corollaryname}{Corollary}
 \newif\ifnotes
\newcommand{\rnote}[1]{ [\textcolor{red}{Raef: #1}] }
\newcommand{\mnote}[1]{ [\textcolor{purple}{Mike: #1}] }
\newcommand{\cnote}[1]{ [\textcolor{blue}{Cristobal: #1}] }
\newcommand{\rnote}[1]{}
\newcommand{\mnote}[1]{}
\newcommand{\cnote}[1]{}
\def\checkmark{\tikz\fill[scale=0.4](0,.35) -- (.25,0) -- (1,.7) -- (.25,.15) -- cycle;}
\newcommand{\cA}{\mathcal{A}}
\newcommand{\cO}{\ensuremath{\mathcal{O}}}
\newcommand{\cX}{\ensuremath{\mathcal{X}}}
\newcommand{\cY}{\ensuremath{\mathcal{Y}}}
\newcommand{\cZ}{\ensuremath{\mathcal{Z}}}	
\newcommand{\cD}{\mathcal{D}}
\newcommand{\cQ}{\mathcal{Q}}
\newcommand{\cV}{\mathcal{V}}
\newcommand{\cM}{\mathcal{M}}
\newcommand{\cN}{\mathcal{N}}
\newcommand{\cW}{\mathcal{W}}
\newcommand{\cP}{\mathcal{P}}
\newcommand{\bE}{\mathbf{E}}
\newcommand{\re}{\mathbb{R}}
\newcommand{\E}{\mathbb{E}}
\newcommand{\dir}{e}
\newcommand{\grd}{\nabla}
\newcommand{\tgrd}{\widetilde{\nabla}}
\newcommand{\tdel}{\widetilde{\Delta}}
\newcommand{\tkappa}{\widetilde{\kappa}}
\newcommand{\tN}{\widehat{N}}
\newcommand{\tg}{\widetilde{g}}
\newcommand{\tsigma}{\widehat{\sigma}}
\newcommand{\lap}{\mathsf{Lap}}
\newcommand{\polyfw}{\mathsf{polySFW}}
\newcommand{\nsfw}{\mathsf{nSFW}}
\newcommand{\norm}[1]{\left\|#1\right\|}
\newcommand{\dual}[1]{\left\|#1\right\|_*}
\newcommand{\eqnorm}[1]{\left\|#1\right\|_+}
\newcommand{\twonorm}[1]{\left\|#1\right\|_2}
\newcommand{\infnorm}[1]{\left\|#1\right\|_{\infty}}
\newcommand{\ex}[2]{\underset{#1}{\mathbb{E}}\left[ #2 \right]}
\newcommand{\PP}{\mathbb{P}}
\newcommand{\ind}{\mathbf{1}}
\newcommand{\gap}{\mathsf{Gap}}
\newcommand{\bDel}{\overline{\Delta}}
\newcommand{\ip}[2]{\langle #1,#2\rangle}
\newcommand{\ipwx}{\langle w,x \rangle}
\newcommand{\elly}{\ell^{(y)}}
\newcommand{\ellb}{\ell_{\beta}^{(y)} }
\newcommand{\dellb}{\ell_{\beta}^{(y)\prime}}
\newcommand{\prox}{\mathsf{prox}}
\newcommand{\Asc}{{\cal A}_{\mbox{\sc\footnotesize sc}}}
\newcommand{\oracle}{\cO_{\beta,\alpha,R}}
\newcommand{\rank}{\theta}
\DeclareMathOperator*{\argmin}{arg\,min}
\begin{document}
\title{Differentially Private Stochastic Optimization: \\ New Results in Convex and Non-Convex Settings}
\author{%
	Raef Bassily\thanks{Department of Computer Science \& Engineering, Translational Data Analytics Institute (TDAI), The Ohio State University. \texttt{bassily.1@osu.edu}} 
		\and  Crist\'obal Guzm\'an \thanks{Department of Applied Mathematics, University of Twente
        and Institute for Mathematical and Computational Engineering, Pontificia Universidad Cat\'olica de Chile  \texttt{c.guzman@utwente.nl}}
		\and Michael Menart \thanks{Department of Computer Science \& Engineering, The Ohio State University. \texttt{menart.2@osu.edu}}
	}
\date{}	
\maketitle

\begin{abstract}
  We study differentially private stochastic optimization in convex and non-convex settings. For the convex case, we focus on the family of non-smooth generalized linear losses (GLLs). Our algorithm for the $\ell_2$ setting achieves optimal excess population risk in near-linear time, while the best known differentially private algorithms for general convex losses run in super-linear time. Our algorithm for the $\ell_1$ setting has nearly-optimal excess population risk $\tilde{O}\big(\sqrt{\frac{\log{d}}{n\varepsilon}}\big)$, and circumvents the dimension dependent lower bound of \cite{Asi:2021} for general non-smooth convex losses. In the differentially private non-convex setting, we provide several new algorithms for approximating stationary points of the population risk. For the $\ell_1$-case with smooth losses and polyhedral constraint, we provide the first nearly dimension independent rate, $\tilde O\big(\frac{\log^{2/3}{d}}{{(n\varepsilon)^{1/3}}}\big)$ in linear time. For the constrained $\ell_2$-case with smooth losses, we obtain a linear-time algorithm with rate $\tilde O\big(\frac{1}{n^{1/3}}+\frac{d^{1/5}}{(n\varepsilon)^{2/5}}\big)$. 
  Finally, for the $\ell_2$-case we provide the first method  for {\em non-smooth weakly convex} stochastic optimization with rate $\tilde O\big(\frac{1}{n^{1/4}}+\frac{d^{1/6}}{(n\varepsilon)^{1/3}}\big)$ which matches the best existing non-private algorithm when $d= O(\sqrt{n})$. We also extend all our results above for the non-convex $\ell_2$ setting to the $\ell_p$ setting, where $1 < p \leq 2$, with only polylogarithmic (in the dimension) overhead in the rates.

  
\end{abstract}
\section{Introduction}\label{sec:intro}



    
    Stochastic optimization (SO) is a fundamental and pervasive problem in machine learning, statistics and operations research. Here, the goal is to minimize the expectation of a loss function (often referred to as the \emph{population risk}), given only access to a sample of i.i.d. draws from a distribution. When such a sample entails privacy concerns, differential privacy (DP) becomes an important algorithmic desideratum.

    
    Consequently, differentially private stochastic optimization (DP-SO) has been actively investigated for over a decade. Despite major progress in this area, some crucial problems remain with existing methods. One major problem is the lack of linear-time\footnote{In this work, complexity is measured by the number of gradient evaluations, omitting other operations. This is in line with the oracle complexity model in optimization \cite{NY82}.} 
    algorithms for nonsmooth DP-SO (even in the convex case), whereas its non-private counterpart has minimax optimal-risk algorithms which make a single pass over the data \cite{NY82}. A second challenge arises in DP-SCO for non-Euclidean settings; i.e., when the diameter of the feasible set, and Lipschitzness and/or smoothness of losses are measured w.r.t.~a non-Euclidean norm (e.g., $\ell_p$ norm). 
    In particular, in the $\ell_1$-setting there is a stark contrast between the polylogarithmic dependence on the dimension in the risk achievable for the smooth case and the necessary polynomial dependence on the dimension in the non-smooth case \cite{Asi:2021}.
    
    Finally, our understanding of DP-SO in the non-convex case is still quite limited.
    In the non-convex domain,
    there are only a few prior results,  
    all of which have several limitations.
    First, all existing works either assume that the optimization problem is unconstrained or only consider the empirical version of the problem known as differentially private empirical risk minimization (DP-ERM). Obtaining population guarantees based on the empirical risk potentially limits the applicability of the existing methods either in terms of accuracy or in terms of computational efficiency. In particular, all existing methods require super-linear running time w.r.t. the dataset size. Second, most of the existing works consider only the Euclidean setting.\footnote{One exception is \cite{WX:19} who study the $\ell_1$ setting in the context of DP-ERM under a fairly strong assumption (see Related Work section).} Finally, none of the prior works have studied non-convex DP-SO when the loss is non-smooth.  
    
    
    
    The goal of this work is to provide faster and more accurate methods for DP-SO. Some of the settings we investigate are also novel in the DP literature.

\subsection{Our Results}

{\renewcommand{\arraystretch}{2}
\begin{table}
\centering
\begin{tabular}{|c|c|c|c|c|}
\hline 
Loss & $\ell_p$-Setting & Rate & Linear Time? & Thm. \\
\hline 
\hline 
{\renewcommand{\arraystretch}{0.5}\begin{tabular}{c}
      \small{Convex GLL}\\
     \small{(Nonsmooth)}
\end{tabular}}
& $p=1$ & $\sqrt{\frac{\log d}{n\varepsilon}}$ &  & \ref{thm:l1_glm} \\ 
\cline{2-5}
 & $p=2$ & $\frac{1}{\sqrt{n}}+\frac{\sqrt{d}}{n\varepsilon} $ & Nearly & \ref{thm:l2_glm} \\
\hline
\small{Nonconvex Smooth} & $p=1$ & $\frac{\log^{2/3}{d}}{(n\varepsilon)^{1/3}}$ & \checkmark & \ref{thm:convergence-polySFW} \\ 
\cline{2-5}
& $1 < p \leq 2$ &  $\frac{ \kappa^{2/3}}{n^{1/3}}+\kappa^{2/3}\left(\frac{d\tilde{\kappa}}{n^2\varepsilon^2}\right)^{1/5}$ & \checkmark & \ref{thm:privacy-noisySFW} \\
\hline
{\renewcommand{\arraystretch}{0.5}\begin{tabular}{c}
      \small{Weakly Convex}\\
     \small{(Nonsmooth)}
\end{tabular}}
& $1\leq p \leq 2$ & $\frac{\kappa^{5/4}}{n^{1/4}}+\kappa^{4/3}\Big(\frac{d\tilde{\kappa}}{n^2 \varepsilon^2}\Big)^{1/6}$ &  & \ref{thm:nsmth_ncnvx} \\ 
\hline 
\end{tabular}
\caption{Accuracy bounds and running time for our algorithms. Here, $n$ is sample size, $d$ is dimension, $\varepsilon,\delta$ are the privacy parameters, $\kappa=\min\{\frac{1}{p-1},\log{d}\}$ and $\tilde{\kappa} = 1+\log{d}\cdot\ind(p<2)$. We omit the dependence on factors of order $\mbox{polylog}(n,1/\delta)$. Bounds shown for unit $\ell_p$ ball as a feasible set.
}
\label{table:results}
\end{table}}

We enumerate the different settings we investigate in DP-SO, together with our main contributions.
    
\noindent{\bf Convex generalized linear losses.} Our first case of the study is {\em non-smooth} DP-SCO in the case of {\em generalized linear losses} (GLL). This model encompasses a broad class of problems, particularly those which arise in supervised learning, making it a very important particular case. Here, our contributions are two-fold. First, in the $\ell_2$-setting, we provide the first nearly linear-time algorithm that attains the optimal excess risk. The fastest existing methods with similar risk work for general convex losses, but they run in superlinear time w.r.t.~sample size \cite{Asi:2021,KLL:2021}. Our second contribution here is a nearly-dimension independent excess risk bound in the $\ell_1$-setting\footnote{As in all existing works on DP-SO, in the $\ell_1$-setting we also assume the feasible set to be polyhedral.} for convex non-smooth GLL.  
This result circumvents a general DP-SCO excess risk lower bound in the non-smooth $\ell_1$-setting which shows polynomial dependence on the dimension \cite{Asi:2021}, and it matches the minimax risk in the non-private case when $\varepsilon=\Theta(1)$ \cite{Agarwal:2012}. 
    
    Our two contributions for GLL follow the same simple idea. 
    We leverage the GLL structure, namely the fact that these losses are effectively ``one-dimensional,'' to make a fast approximation of the Moreau envelope of the loss \cite{Moreau:1965}. We can then exploit the smoothness of the envelope  to improve algorithmic performance. A similar approach was taken by \cite{BFTT:19}, but their approach suffered from an increase in the running time by a factor of $n^3$ due to the high cost of approximating the gradient of the envelope, which involves solving a high dimensional strongly convex optimization problem at each iteration. 
    In the case of $\ell_2$, we use an existing linear-time algorithm for smooth DP-SCO with optimal excess risk \cite{FKT:2020} combined with our smoothing approach, which results in an $O(n\log{n})$-time algorithm.  
    In the case of $\ell_1$, we use an existing noisy Frank-Wolfe algorithm that attains {\em optimal empirical risk} for {\em smooth losses} \cite{TTZ15a}, together with generalization bounds for GLLs based on Rademacher complexity \cite{shalev2014understanding}. This algorithm is not linear time, and hence it is tempting to instead use a variant of one pass stochastic Frank-Wolfe algorithms, as in \cite{Asi:2021, BGN:2021}. However, the excess risk of these algorithms has a linear dependence on the smoothness constant, which prevents us from obtaining the optimal risk via smoothing. Hence, it is an interesting future direction to improve the running time in the $\ell_1$-setting. 
    
\noindent{\bf Non-convex Smooth Losses.} Next, we move to the setting of smooth non-convex losses, where the goal is to approximate {\em  first-order stationary points}\footnote{Unless otherwise stated, we will refer to first-order stationary points as stationary points.} (see (\ref{eqn:st-gap}) in Section \ref{sec:Preliminaries}). 
This case has attracted significant attention recently, and it brings major theoretical challenges since most tools used to derive optimal excess risk in DP-SCO, such as uniform stability \cite{hardt-recht-singer'16,bassily2020stability} or privacy amplification by iteration \cite{FKT:2020}, no longer apply.
Here, we provide the first linear time private algorithms. In the $\ell_1$-setting, we obtain a nearly-dimension independent rate $O((\log^2 d/[n\varepsilon])^{1/3})$, which to the best of our knowledge is new, even in the non-private case. 
We suspect that our rates for the $\ell_1$-setting are essentially tight for linear-time algorithms (at least when $\varepsilon=\Theta(1)$): in \cite{ACDFSW:2019}, for non-convex smooth SO
in the $\ell_2$-setting, a lower bound $\Omega(1/n^{1/3})$ is proved for minimizing the norm of the gradient via a stochastic gradient oracle. 
In the $\ell_2$-setting (and more generally, for $\ell_p$-setting, where $1\leq p\leq 2$), our stationarity rate (see Table \ref{table:results}) is slightly worse than the state of the art, $O((d/n^2)^{1/4})$ \cite{ZCHWB:20}. However, in \cite{ZCHWB:20}, only the unconstrained case is considered, and the accuracy measure is the norm of the gradient; moreover, the running time is superlinear, $O(n^2\varepsilon/\sqrt{d})$. 
    
    Our workhorse for these results is a recently developed variance-reduced stochastic Frank-Wolfe method \cite{Hassani:2020,zhang2020one}, which has also proved useful in DP-SCO \cite{Asi:2021,BGN:2021}. This method is based on reducing variance through a recursive estimate of the gradient at the current point, leveraging past gradient estimates and the fact that step-sizes are small. Applying this technique in DP is challenging, as we need to carefully schedule the algorithm in rounds (to prevent gradient error accumulation) and to properly tune step-sizes and noise, in order to trade-off accuracy and privacy. 
    
\noindent{\bf Non-convex non-smooth losses.} We conclude with the case of weakly convex non-smooth stochastic optimization, where we devise algorithms to compute {\em close to nearly-stationary points}. Weakly convex functions are a natural and rather common model in some machine learning applications, including convex composite losses, robust phase retrieval, non-smooth trimmed estimation, covariance matrix estimation, sparse dictionary learning, etc.~(see \cite{DG:2019,DD:2019} and references therein). Moreover, this class subsumes smooth non-convex functions. 
To the best of our knowledge, this setting has not been previously addressed in the DP literature. Our algorithm is inspired by the proximally-guided stochastic subgradient method from \cite{DG:2019}, and it is based on approximating proximal steps w.r.t.~the risk function, where each proximal subproblem is solved through an optimal DP-SCO method for strongly convex losses \cite{Asi:2021}. This algorithm works similarly for the $\ell_1$ and $\ell_2$ settings (and, in fact, $\ell_p$ for any $1\leq p\leq 2$), for which we exploit the strong convexity properties of these spaces. Here again, our non-Euclidean extensions seem to be new, even in the non-private case. Our rates for $\ell_2$-setting match the best existing non-private rates, $O(1/n^{1/4})$, in the  
regime $d=O(\sqrt{n})$ (when $\varepsilon=\Theta(1)$). Finally, we observe that our algorithm runs in time $\tilde O(\min\{n^{3/2},n^2\varepsilon/\sqrt{d}\})$.

\subsection{Related Work}
Differentially private convex optimization has been studied extensively for over a decade (see, e.g., \citep{CMS,jain2012differentially,kifer2012private, BST,JTOpt13, TTZ15a, BFTT:19, FKT:2020}). 
Most of the early works in this area focused 
on the empirical risk minimization problem. The first work to derive minimax optimal excess risk in DP-SCO is \cite{BFTT:19}, which has been further improved, in terms of running time (e.g. \cite{FKT:2020, bassily2020stability, KLL:2021}). 
Non-Euclidean settings in DP convex optimization were studied in \cite{jain2012differentially,TTZ15a}. Nearly optimal rates for non-Euclidean DP-SCO were only recently discovered in \cite{Asi:2021,BGN:2021}. \cite{JTOpt13} was one of the first works to focus on the case of private optimization for GLLs, and showed that dimension independent excess risk was possible in $\ell_1$ and $\ell_2$ settings. These results have since been superseded in the $\ell_1$ case by \cite{Asi:2021} and in the $\ell_2$ case by \cite{SSTT:21}.  



In the non-convex case, \cite{zhang_efficient, wang_differentially_2017, WJEG:19} studied smooth unconstrained DP-ERM in the Euclidean setting. 
Smooth unconstrained DP-SO was studied in \cite{wang19c}, where relatively weak guarantees on the excess risk were shown. Convergence to second-order stationary points of the empirical risk was also studied in the same reference under stronger smoothness assumptions. Smooth constrained DP-ERM was studied in \cite{WX:19} in both $\ell_2$ and $\ell_1$ settings. However, their result in the $\ell_1$ setting entails the strong assumption that the loss is smooth w.r.t.~the $\ell_2$ norm. 
The special case of non-convex smooth GLLs was studied in \cite{SSTT:21}, however, their result is limited to the empirical risk (DP-ERM) in the unconstrained setting. The work of \cite{ZCHWB:20} studied DP-SO in the Euclidean setting, and gave convergence guarantees in terms of the population gradient, however, their results are limited to smooth unconstrained optimization.

\section{\label{sec:Preliminaries}Preliminaries}

\paragraph{Normed Spaces.~} 
Let $(\bE,\|\cdot\|)$ be a normed space of dimension $d,$ 
and let $\langle \cdot,\cdot\rangle$ an arbitrary inner product over $\bE$ (not necessarily inducing the norm $\|\cdot\|$). Given $x\in \bE$ and $r>0$, let ${\cal B}_{\|\cdot\|}(x,r)=\{y\in \bE:\|y-x\|\leq r\}$. 
The dual norm over $\bE$ is defined as usual, $\|y\|_{\ast}\triangleq\max_{\|x\|\leq 1} \langle y,x\rangle$. With this definition, $(\bE,\|\cdot\|_{\ast})$ is also a $d$-dimensional normed space. As a main example, consider the case of $\ell_p^d\triangleq(\re^d,\|\cdot\|_p)$, where $1\leq p\leq \infty$ and $\|x\|_p\triangleq\big(\sum_{j\in[d]} |x_j|^p \big)^{1/p}$. As a consequence of the H\"older inequality, one can prove that the dual of $\ell_p^d$ corresponds to $\ell_q^d$, where $1\leq q\leq \infty$ is the conjugate exponent of $p$, determined by $1/p+1/q=1$. 

\paragraph{Differential Privacy \cite{DKMMN06}.~} 
A randomized algorithm $\cA$ is said to be $(\varepsilon,\delta)$ differentially private (abbreviated $(\varepsilon,\delta)$-DP) if for any pair of datasets $S$ and $S'$ differing in one point and any event $\mathcal{E}$ in the range of $\cA$ it holds that 
\[
\PP[\cA(S)\in\mathcal{E}] \leq e^{\varepsilon}\PP[\cA(S')\in \mathcal{E}] + \delta.
\]

\begin{lem}[Advanced composition \cite{DRV10,DR14}]\label{lem:adv_comp}
For any $\varepsilon > 0, \delta \in [0,1),$ and $\delta' \in (0,1)$, the class of $(\varepsilon,\delta)$-differentially private algorithms satisfies $(\varepsilon', k\delta + \delta')$-differential privacy under $k$-fold adaptive composition,
for $\varepsilon'= \varepsilon\sqrt{2k \log(1/\delta')} + k\varepsilon(e^\varepsilon-1)$.
\end{lem}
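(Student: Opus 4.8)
The plan is to control the \emph{privacy loss random variable} of the $k$-fold composition and show that it concentrates. Fix neighbouring datasets $S\sim S'$ and an adaptively chosen sequence of $(\varepsilon,\delta)$-DP mechanisms $\cA_1,\dots,\cA_k$, where $\cA_i$ may depend on the previously released outputs $o_{<i}=(o_1,\dots,o_{i-1})$. Writing $o=(o_1,\dots,o_k)$ for the output of the composed mechanism $\cA_{1:k}$ evaluated on $S$, the quantity governing privacy is $L(o)=\ln\frac{\PP[\cA_{1:k}(S)=o]}{\PP[\cA_{1:k}(S')=o]}=\sum_{i=1}^k L_i$, where $L_i$ is the per-step privacy loss of $\cA_i$ conditioned on $o_{<i}$. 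I would use the elementary fact that if $\PP_{o\sim\cA(S)}[L(o)>\eta]\le\gamma$ for every pair of neighbours, then $\cA$ is $(\eta,\gamma)$-DP; indeed $\PP[\cA(S)\in T]\le\gamma+\sum_{o\in T:\,L(o)\le\eta}\PP[\cA(S)=o]\le\gamma+e^{\eta}\PP[\cA(S')\in T]$. So it suffices to show $\PP[\sum_i L_i>\varepsilon']\le k\delta+\delta'$.

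First I would reduce to the pure-DP case. By a standard argument (see \cite{DRV10,DR14}), for each $i$ and each fixed prefix $o_{<i}$ there is a ``good event'' in the output space of $\cA_i$ of conditional probability at least $1-\delta$ on which the conditional privacy loss satisfies $|L_i|\le\varepsilon$; a union bound over $i=1,\dots,k$ shows that, except on an event $\mathcal{E}$ of probability at most $k\delta$, we have $|L_i|\le\varepsilon$ for all $i$ simultaneously. On the complement of $\mathcal{E}$ we are effectively composing $k$ pure $\varepsilon$-DP mechanisms, so it remains to bound $\sum_i L_i$ in that regime and add the $k\delta$ at the end.

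For the pure-DP core I would run a martingale MGF (Chernoff) argument. Condition on the prefix $o_{<i}$. Two facts about a pure $\varepsilon$-DP mechanism: its privacy loss is deterministically bounded, $|L_i|\le\varepsilon$, and its \emph{expected} privacy loss obeys $\mu_i:=\E[L_i\mid o_{<i}]=D_{\mathrm{KL}}\big(\cA_i(S)\,\|\,\cA_i(S')\big)\le\varepsilon(e^{\varepsilon}-1)$, the last inequality being the standard bound on the relative entropy between $\varepsilon$-indistinguishable distributions. Applying Hoeffding's lemma to $L_i-\mu_i$, which lies in an interval of length $2\varepsilon$, gives $\E[e^{\lambda(L_i-\mu_i)}\mid o_{<i}]\le e^{\lambda^2\varepsilon^2/2}$ for every $\lambda>0$; chaining these conditional bounds via the tower rule yields $\E\big[\exp(\lambda(\sum_i L_i-\sum_i\mu_i))\big]\le e^{k\lambda^2\varepsilon^2/2}$. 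A Markov bound optimised at $\lambda=t/(k\varepsilon^2)$ gives $\PP[\sum_i L_i>\sum_i\mu_i+t]\le e^{-t^2/(2k\varepsilon^2)}$, and since $\sum_i\mu_i\le k\varepsilon(e^{\varepsilon}-1)$, choosing $t=\varepsilon\sqrt{2k\ln(1/\delta')}$ makes the right-hand side equal to $\delta'$ while $k\varepsilon(e^{\varepsilon}-1)+t=\varepsilon'$. Combining with the probability-$k\delta$ bad event, $\PP[\sum_i L_i>\varepsilon']\le k\delta+\delta'$, and by the reduction in the first paragraph the composition is $(\varepsilon',k\delta+\delta')$-DP.

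The main obstacle is the reduction from approximate to pure DP: the defining inequality of $(\varepsilon,\delta)$-DP does not literally assert that the privacy loss is at most $\varepsilon$ with probability $1-\delta$, so one has to invoke (and adapt to the adaptive-composition setting) the technical lemma producing the good events, which is where the $+k\delta$ term is paid. A secondary point requiring care is the constant $\sqrt{2}$ in front of $\sqrt{k\ln(1/\delta')}$: a black-box Azuma inequality applied to the $2\varepsilon$-bounded martingale differences would lose a factor of two, so it is essential to recenter at $\mu_i$ and use the sharp sub-Gaussian MGF bound from Hoeffding's lemma. Adaptivity itself is harmless, since every MGF estimate and union bound above is carried out conditionally on $o_{<i}$ and then composed.
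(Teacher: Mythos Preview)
The paper does not prove this lemma; it is simply stated with citations to \cite{DRV10,DR14} and used as a black box throughout. Your proof plan is the standard argument from those references: bound each conditional privacy loss $L_i$ via the KL estimate $\E[L_i\mid o_{<i}]\le\varepsilon(e^{\varepsilon}-1)$, apply Hoeffding's lemma to the $2\varepsilon$-bounded centered increments to get a sub-Gaussian MGF, chain via the tower rule, and pay $k\delta$ for the reduction from approximate to pure DP. The core martingale calculation is correct and yields exactly the stated $\varepsilon'$.

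One small caveat worth flagging: your description of the approximate-to-pure reduction (``a good event in the output space of $\cA_i$ of conditional probability at least $1-\delta$ on which $|L_i|\le\varepsilon$'') is a slight oversimplification of what \cite{DRV10} actually do. The $(\varepsilon,\delta)$-DP definition does not directly give such an event that works symmetrically for both $S$ and $S'$; the original argument instead constructs, for each step, a simulating pure $\varepsilon$-DP mechanism that agrees with $\cA_i$ except with probability $\delta$ (a ``dense model'' or coupling construction), and composes those. You already acknowledge this is the delicate point, so this is not a gap in your plan, just a place where the eventual write-up needs to invoke the precise lemma rather than the informal good-event picture.
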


\paragraph{Stochastic Optimization.~}


In the Stochastic Optimization problem with $(\bE, \|\cdot\|)$-setting, we have a normed space $(\bE, \|\cdot\|)$; a feasible set $\cW\subseteq\bE$ which is closed, convex and with diameter at most $D$ w.r.t.~$\|\cdot\|$; and loss functions $f:\cW\times\cZ\mapsto\re$ are assumed to be $L_0$-Lipschitz w.r.t.~$\|\cdot\|$. Sometimes, we also consider losses which are $L_1$-smooth: i.e., for all $w,v\in{\cal W}$, $\|\nabla f(w)-\nabla f(v)\|_{\ast}\leq L_1\|w-v\|$. In this problem, there is an unknown distribution ${\cal D}$ over a set $\cZ$, and our goal is to minimize a certain accuracy measure that depends on the population risk, defined as $F_{\cal D}(w)=\mathbb{E}_{z\sim {\cal D}}[f(w,z)]$, when only given access to a sample $S=(z_1,...,z_n)\stackrel{i.i.d.}{\sim}\cD$. In Differentially Private Stochastic Optimization (DP-SO) one is concerned with solving this problem under the constraint that the algorithm used is $(\varepsilon,\delta)$-DP w.r.t.~$S$. 

Depending on additional assumptions of the losses, the accuracy measure in DP-SO may vary. 
In the {\em convex case}, the accuracy of a stochastic optimization algorithm is naturally measured by the excess population risk, defined as
$F_{\cD}(w)-\min_{v\in\cW} F_{\cD}(v)$. For the non-convex case, providing guarantees on the excess population risk is often intractable. 

\paragraph{Non-Convex Stochastic Optimization.~}
In the {\em non-convex smooth} case, a common performance measure to use is the {\em stationarity gap} of the population risk, which for $w\in {\cal W}$ is defined as
\begin{align}
\gap_{F_{\cal D}}(w)=\max_{v\in {\cal W}}\langle \nabla F_{\cal D}(w),w-v\rangle. \label{eqn:st-gap}
\end{align}
Note that if the stationarity gap is zero, then $w$ is indeed a stationary point of the risk. 
For the {\em non-convex non-smooth} case, near stationarity (i.e., small stationarity gap) 
is often a stringent concept, as the set of points with small stationarity gap may coincide with the stationary points themselves. Hence, we will consider instead the goal of finding {\em close to nearly-stationary points} \cite{DG:2019,DD:2019}, which we formally introduce in Section~\ref{sec:weakcnvx}. 


\section{Algorithms for Convex Non-smooth Generalized Linear Losses}\label{sec:cnvx_glms}

In this section we consider the case when $f$ is a non-smooth generalized linear loss. 
\begin{defn}[Generalized Linear Loss] \label{def:glm}
 Let $\cX \subset \re^d$ and $\cY \subseteq \re$. We say that $f:\cW\times(\cX\times\cY)\rightarrow\re$
is an $L_0$-Lipschitz, $R$-bounded GLL with respect to norm $\|\cdot\|$ if $\max_{x\in\cX} \|x\|_* \leq R$ and for every $y\in\cY$ there exists a function
$\elly:\re\rightarrow\re$ such that 
$f(w,(x,y))=\elly(\langle x,w\rangle)$ and
$\elly$ is $L_0$-Lipschitz. 
\end{defn}
We will occasionally refer to the $x$ component of a datapoint as the feature vector. Note the GLL definition implies that $f(\cdot,z)$ is $(L_0 R)$-Lipschitz. 
By smoothing the function $f$ through $\ell$, one can obtain a smoothing which is both efficient and invariant to the norm. The first property can be used to attain an optimal rate for DP-SCO in nearly linear time. The later property allows for an essentially optimal, nearly dimension independent rate in the $\ell_1$ setting for {\em non-smooth} GLLs. 

A critical component of the following results is a technique known as Moreau envelope smoothing \cite{Moreau:1965}. Let $\cM$ be a (potentially unbounded) closed interval, $y\in\re$, and $\beta>0$. Consider a function $\elly:\cM\mapsto\re$  as in Definition \ref{def:glm}. The $\beta$-Moreau envelope of $\elly$ is given as 
\[ \ellb(m) \triangleq \min_{u\in\cM}\big[\elly(u)+\frac{\beta}{2}|u-m|^2\big]. \]

Denote the proximal operator with respect to $\elly$ as
\[ \mbox{prox}_{\elly}^{\beta }(m) = \arg\min\limits_{u\in\cM}\big[\elly(u)+\frac{\beta}{2}|u-m|^2\big]. \]

For convex functions, the Moreau envelope satisfies the following properties.

\begin{lem} \label{lem:moreau}
(See \cite{Nes05,Can11}) 
Let $\elly:\mathbf{\cM}\mapsto\mathbb{R}$ be a convex function and $L_0$-Lipschitz. 
Then the following hold:
\begin{itemize}
\item[(a)] $\ellb$ is convex, $2L_0$-Lipschitz and $\beta$-smooth. 
\item[(b)] $\dellb(m)=\beta[m-\mbox{prox}_{\elly}^{\beta}(m)]$.
\item[(c)] $\ellb(m) \leq \elly(m) \leq \ellb(m)+L_0 ^2/(2\beta)$.
\end{itemize}
\end{lem}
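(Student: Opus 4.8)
The plan is to recognize $\ellb$ as the (unconstrained) Moreau envelope of the proper, closed, convex function $g:=\elly+\iota_{\cM}$, where $\iota_{\cM}$ is the convex indicator of the interval $\cM$; that is, $\ellb(m)=\min_{u\in\re}\big[g(u)+\frac{\beta}{2}|u-m|^2\big]$ and $\prox_{\elly}^{\beta}(m)=\prox_{g}^{\beta}(m)=:u_m$. First I would note this is well posed: since $\elly$ is $L_0$-Lipschitz, $u\mapsto g(u)+\frac{\beta}{2}|u-m|^2$ is $\beta$-strongly convex, lower semicontinuous and coercive, so $u_m$ exists and is unique and $\ellb$ is finite. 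The three items then reduce to (i) a structural fact about infimal projections, (ii) a squeezing argument for differentiability, and (iii) elementary one-dimensional estimates.

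Convexity in (a): the map $(u,m)\mapsto g(u)+\frac{\beta}{2}(u-m)^2$ is jointly convex (sum of $g(u)$ and the convex quadratic $(u-m)^2$), and partial minimization over $u$ of a jointly convex function yields a convex function of $m$; hence $\ellb$ is convex. Gradient formula (b) and $\beta$-smoothness: fixing $m$ and expanding the square, for every $m'$ one gets the quadratic upper bound $\ellb(m')\le g(u_m)+\frac{\beta}{2}|u_m-m'|^2=\ellb(m)+\beta(m-u_m)(m'-m)+\frac{\beta}{2}(m'-m)^2$. Pairing this with the convexity lower bound $\ellb(m')\ge\ellb(m)+s(m'-m)$ for $s\in\partial\ellb(m)$, subtracting, and taking $m'-m=t\big(s-\beta(m-u_m)\big)$ with $t\downarrow 0$, forces $s=\beta(m-u_m)$; thus $\partial\ellb(m)=\{\beta(m-u_m)\}$, so $\ellb$ is differentiable with $\dellb(m)=\beta(m-u_m)$. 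For Lipschitzness of $\dellb$, the optimality condition $\beta(m-u_m)\in\partial g(u_m)$ together with monotonicity of $\partial g$ gives $(m-m')(u_m-u_{m'})\ge(u_m-u_{m'})^2$, whence $|(m-m')-(u_m-u_{m'})|^2\le|m-m'|^2-(u_m-u_{m'})^2\le|m-m'|^2$ and therefore $|\dellb(m)-\dellb(m')|=\beta|(m-m')-(u_m-u_{m'})|\le\beta|m-m'|$.

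For the norm bounds I take $m\in\cM$ (the only regime in which $\elly(m)$, hence (c), is meaningful). Plugging $u=m$ into the definition gives $\ellb(m)\le\elly(m)$, the left inequality of (c); conversely $\ellb(m)=\elly(u_m)+\frac{\beta}{2}|u_m-m|^2\ge\elly(m)-\max_{t\ge0}\big(L_0t-\frac{\beta}{2}t^2\big)=\elly(m)-L_0^2/(2\beta)$ by $L_0$-Lipschitzness of $\elly$, the right inequality of (c). Finally, $\elly(u_m)+\frac{\beta}{2}|u_m-m|^2\le\elly(m)$ combined with Lipschitzness gives $\frac{\beta}{2}|u_m-m|^2\le L_0|u_m-m|$, so $|u_m-m|\le 2L_0/\beta$ and $|\dellb(m)|=\beta|m-u_m|\le 2L_0$; since $\cM$ is an interval this makes $\ellb$ $2L_0$-Lipschitz on $\cM$.

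The only genuinely delicate point is (b): the quadratic upper bound by itself does not yield differentiability, and convexity by itself does not pin down the subgradient, so the two must be combined to collapse $\partial\ellb(m)$ to one element (equivalently, one could invoke Danskin's theorem for the strongly convex inner problem). Everything else is routine; I fold the constraint into $g=\elly+\iota_{\cM}$ precisely so that \emph{monotonicity} of $\partial g$ — which holds unconditionally — rather than boundedness of $\partial\elly$, which may fail at the endpoints of $\cM$, is what drives the smoothness estimate.
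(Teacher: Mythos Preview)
The paper does not prove this lemma at all; it simply records the statement with the citation ``(See \cite{Nes05,Can11})'' and uses the conclusions as black boxes. Your self-contained argument is correct: the joint-convexity/partial-minimization step gives (a), the quadratic upper bound squeezed against the convex lower bound pins down the unique subgradient and yields (b), the monotonicity-of-$\partial g$ computation gives the $\beta$-Lipschitz gradient (and your trick of folding the constraint into $g=\elly+\iota_{\cM}$ to avoid boundary issues is exactly right), and the elementary maximization gives (c) and the $2L_0$ bound. There is nothing to compare against in the paper, so your write-up in fact supplies what the paper only cites.
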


\subsection{Smoothing Generalized Linear Losses}  \label{subsec:smoothing_glm}

\begin{algorithm}[!h]
\caption{$\oracle$: Gradient Oracle for Smoothed GLL}
\begin{algorithmic}[1]

\REQUIRE Parameter Vector $w\in\cW$, Datapoint $(x,y)\in(\cX\times\cY)$

\STATE $m = \ipwx$

\STATE Let $[a,b] = \cM \cap \left[m - \frac{2L_0}{\beta}, m + \frac{2L_0}{\beta}\right]$

\STATE $T=\left\lceil \log_2\left(\frac{16L_0^2R^2}{\alpha^2}\right) \right\rceil $

\FOR{$t=1$ to $T$}
\STATE Let~$m_{t}=\frac{a+b}{2}$

\IF{$\elly(\frac{a+m_{t}}{2}) + |\frac{a+m_{t}}{2} - m |^2 \geq \elly(\frac{m_{t}+b}{2}) + |\frac{m_{t}+b}{2} - m|^2$}
\STATE $b=m_{t}$
\ELSE
\STATE $a=m_{t}$
\ENDIF
\ENDFOR

\STATE $\bar{u}=\argmin\limits_{\{m_t:t\in[T]\}}\{\elly(m_{t}) + |m_{t} - m|^2\}$
\STATE Output: $\beta(m-\bar{u})x$
\end{algorithmic}
\label{Alg:gll_oracle}
\end{algorithm}

Existing works such as \cite{BFTT:19} have used the Moreau envelope smoothing for DP-SCO, but suffer from the high computational cost of computing the proximal operator. For GLLs, we can smooth $\ell$ instead of $f$ to obtain a smoothed function efficiently. We have the following guarantee for the smoothed version of $f$.

\begin{lem} \label{lem:f_smoothness}
Let $(x,y)\in(\cX\times\cY)$. Let $\ellb$ be the Moreau envelope of $\elly$ and define
$f_{\beta}(w,(x,y)) = \ellb(\langle w,x \rangle)$. Then $f_\beta$ is $2L_0R$-Lipschitz and $\beta\|x\|_{*}^2$-smooth with respect to $\|\cdot\|$ and 
$|f(w,(x,y))-f_{\beta}(w,(x,y))| \leq \frac{2L_0^2}{\beta}$ 
for all $w\in\cW$.
\end{lem}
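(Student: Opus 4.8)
The plan is to reduce all three claims to the one-dimensional facts about the Moreau envelope $\ellb$ recorded in Lemma~\ref{lem:moreau} (applicable since $\elly$ is convex throughout this section), pulled back through the linear functional $w\mapsto\ip{w}{x}$. Fix $(x,y)\in(\cX\times\cY)$ and abbreviate $r=\dual{x}$, so $r\le R$ by Definition~\ref{def:glm}. The first thing I would record is that $f_\beta(\cdot,(x,y))$ is differentiable on $\cW$ with $\grd_w f_\beta(w,(x,y))=\dellb(\ip{w}{x})\,x$; this is just the chain rule, legitimate because $\ellb$ is everywhere differentiable by Lemma~\ref{lem:moreau}(b).

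For the Lipschitz bound I would combine that $\ellb$ is $2L_0$-Lipschitz (Lemma~\ref{lem:moreau}(a)) with $|\ip{w-w'}{x}|\le\dual{x}\norm{w-w'}$, yielding $|f_\beta(w,(x,y))-f_\beta(w',(x,y))|\le 2L_0 r\norm{w-w'}\le 2L_0R\norm{w-w'}$. For smoothness I would use that $\beta$-smoothness of $\ellb$ makes $\dellb$ $\beta$-Lipschitz on $\re$, so that
\[
\dual{\grd_w f_\beta(w,(x,y))-\grd_w f_\beta(w',(x,y))}=|\dellb(\ip{w}{x})-\dellb(\ip{w'}{x})|\cdot\dual{x}\le\beta r\,|\ip{w-w'}{x}|\le\beta r^2\norm{w-w'},
\]
which is exactly $\beta\dual{x}^2$-smoothness. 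Finally, for the approximation bound I would instantiate Lemma~\ref{lem:moreau}(c) pointwise at $m=\ip{w}{x}$: $0\le f(w,(x,y))-f_\beta(w,(x,y))=\elly(\ip{w}{x})-\ellb(\ip{w}{x})\le L_0^2/(2\beta)\le 2L_0^2/\beta$, the stated (slightly loose) constant following a fortiori.

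Honestly there is no substantive obstacle here: the lemma is a bookkeeping exercise composing the scalar Moreau-envelope estimates with a bounded linear functional. The only two spots deserving a line of justification are (i) checking differentiability of $\ellb$ before invoking the chain rule, which is exactly Lemma~\ref{lem:moreau}(b), and (ii) ensuring $\ip{w}{x}$ lies in the interval $\cM$ on which $\elly$ is defined so that Lemma~\ref{lem:moreau}(c) applies at $m=\ip{w}{x}$; this is part of the standing setup for GLLs, since $\cM$ is taken to contain the range of $w\mapsto\ip{w}{x}$ over $w\in\cW$, $x\in\cX$.
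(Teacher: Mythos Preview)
Your proposal is correct and follows essentially the same argument as the paper: compute the gradient via the chain rule as $\dellb(\ip{w}{x})\,x$, then read off Lipschitzness, smoothness, and the approximation error directly from the scalar Moreau-envelope properties in Lemma~\ref{lem:moreau} combined with $|\ip{w-w'}{x}|\le\dual{x}\norm{w-w'}$. The paper's proof is in fact slightly terser (it does not spell out the Lipschitz step or the domain check for $\cM$), and it records the sharper bound $L_0^2/(2\beta)$ for the approximation error, which you correctly note implies the stated $2L_0^2/\beta$.
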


By smoothing $f$ through $\ell$, we reduce the evaluation of the proximal operator to a {\em $1$-dimensional convex problem}. This allows us to use the bisection method to obtain the following oracle for $f_\beta$ which runs in logarithmic time.
\begin{lem} \label{lem:oracle}
Let $\beta,\alpha>0$ and let $\norm{\cdot}$ be a norm.
Then the there exists a gradient oracle, $\oracle$~ for $f_\beta$ (Algorithm \ref{Alg:gll_oracle})
which satisfies
$\dual{\nabla f_{\beta}(w,(x,y)) - \oracle(w,(x,y))} \leq \alpha$ for any $x$ such that $\dual{x} \leq R$. Further, $\oracle$~ has running time 
$O\left(\log(L_0^2 R^2/\alpha^2)\right)$. \mnote{Term inside log was $L_0 R \beta/ \alpha^2$ but should have been $L_0^2 R^2 / \alpha^2$}
\end{lem}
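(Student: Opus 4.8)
The plan is to analyze Algorithm~\ref{Alg:gll_oracle} as a bisection method solving the one-dimensional proximal subproblem $\min_{u\in\cM}[\elly(u)+\tfrac{\beta}{2}|u-m|^2]$ where $m=\ipwx$. First I would argue we may restrict the search interval from $\cM$ to $[a,b]=\cM\cap[m-2L_0/\beta,\,m+2L_0/\beta]$ without loss: since $\elly$ is $L_0$-Lipschitz and convex, the unconstrained minimizer $u^\star=\prox_{\elly}^{\beta}(m)$ satisfies $\beta|u^\star-m| = |\dellb(m)| \le 2L_0$ by Lemma~\ref{lem:moreau}(a)--(b) (indeed the subgradient bound gives $|u^\star-m|\le L_0/\beta$, and the factor $2$ leaves room), so clipping to this interval does not change the minimizer; equivalently $\prox$ over $\cM$ equals $\prox$ over $[a,b]$. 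Note the objective in the algorithm's comparison uses $|u-m|^2$ rather than $\tfrac{\beta}{2}|u-m|^2$; since $\beta>0$ is a fixed positive scaling this does not affect which of two points is smaller, so the bisection correctly tracks the true proximal objective $g(u):=\elly(u)+\tfrac{\beta}{2}|u-m|^2$ (up to the harmless rescaling), and $g$ is $\beta$-strongly convex on $[a,b]$.

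Next I would set up the standard bisection-on-a-strongly-convex-1D-function argument. At each step we compare $g$ at the midpoints of the left and right halves; because $g$ is convex (hence unimodal), the comparison reveals which half of $[a,b]$ contains the minimizer, and we keep that half. After $T$ steps the retained interval has length $(b_0-a_0)/2^T \le (4L_0/\beta)/2^T$, and the minimizer $u^\star$ lies within it along with the best evaluated midpoint $\bar u$, so $|\bar u - u^\star| \le 4L_0/(\beta 2^T)$. (If $\cM$ is so small that $[a,b]$ is already short, the bound only improves.) I would then convert this domain-accuracy bound into a gradient-accuracy bound: by Lemma~\ref{lem:moreau}(b), $\dellb(m)=\beta(m-u^\star)$, while the oracle outputs $\beta(m-\bar u)x$; viewing $f_\beta(w,(x,y))=\ellb(\ipwx)$ so that $\nabla f_\beta(w,(x,y)) = \dellb(m)\,x$, we get
\[
\dual{\nabla f_\beta(w,(x,y)) - \oracle(w,(x,y))} = \beta|u^\star - \bar u|\cdot\dual{x} \le \beta|u^\star-\bar u|\cdot R \le \frac{4 L_0 R}{2^{T}}.
\]
Plugging $T=\lceil \log_2(16 L_0^2 R^2/\alpha^2)\rceil$ gives $4L_0 R/2^T \le 4L_0R \cdot \alpha^2/(16L_0^2R^2) = \alpha^2/(4L_0R) \le \alpha$ provided $\alpha \le 4L_0 R$ (and the claim is trivial otherwise, since the output and true gradient both have dual norm $\le 2L_0R$). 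Actually a cleaner route: $2^{-T}\le \alpha^2/(16L_0^2R^2)$, but we want a linear-in-$\alpha$ bound, so I would instead just take $T=\lceil\log_2(4L_0R/\alpha)\rceil$; I suspect the exponent in the paper's $T$ is chosen for a downstream reason (e.g.\ it actually wants error $\alpha^2/(L_0R)$ or measures a squared quantity), so I would double-check the intended target accuracy and adjust the arithmetic accordingly — this bookkeeping is the one place to be careful. The running time is $O(T)=O(\log(L_0^2R^2/\alpha^2))$ since each of the $T$ iterations does $O(1)$ evaluations of $\elly$ and arithmetic, plus the final $\argmin$ over $T$ stored values, which is also $O(T)$.

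The main obstacle is not conceptual but the constant/exponent bookkeeping: reconciling the ``$|u-m|^2$ vs.\ $\tfrac{\beta}{2}|u-m|^2$'' discrepancy in the pseudocode, confirming the $[a,b]$ clipping is valid (i.e.\ that $2L_0/\beta$ is a genuine a priori bound on $|u^\star-m|$, using $L_0$-Lipschitzness of $\elly$ and optimality: $0\in \partial\elly(u^\star)+\beta(u^\star-m)$ forces $\beta|u^\star-m|\le L_0$), and ensuring the chosen $T$ actually yields dual-norm error $\le\alpha$ rather than $\le\alpha^2$ or $\le\sqrt\alpha$. Once those are pinned down, the bisection convergence and the $\dual{\cdot}$-to-$|\cdot|$ conversion via $\dual{x}\le R$ are routine.
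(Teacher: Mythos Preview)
Your approach is correct but takes a genuinely different route from the paper, and that difference explains exactly the bookkeeping confusion you flagged. You bound $|\bar u-u^\star|$ directly via interval halving: the bisection interval has length $(b-a)/2^T\le (4L_0/\beta)/2^T$, so $|\bar u-u^\star|$ shrinks linearly in $2^{-T}$. The paper instead invokes the \emph{function-value} guarantee of bisection, $h_\beta(\bar u)-h_\beta(u^\star)\le \tau\,2^{-T}$ with $\tau=\max_{u\in[a,b]}\{h_\beta(u)-h_\beta(u^\star)\}\le 8L_0^2/\beta$, and only then converts to parameter distance via $\beta$-strong convexity of $h_\beta$: $|\bar u-u^\star|\le\sqrt{2(h_\beta(\bar u)-h_\beta(u^\star))/\beta}$. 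That square root is precisely why $\alpha^2$ (rather than $\alpha$) appears in the paper's choice of $T$: one needs $\tau\,2^{-T}\le \alpha^2/(2\beta R^2)$ to get $|\bar u-u^\star|\le\alpha/(\beta R)$. So there is no downstream reason or hidden squared target; the $\alpha^2$ is an artifact of the function-value-then-strong-convexity path.

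Your direct argument is more elementary and would justify the smaller $T=\lceil\log_2(4L_0R/\alpha)\rceil$ you suggested; the paper's argument has the virtue of plugging into the black-box bisection guarantee cited from Nemirovski's notes without needing the invariant that $u^\star$ stays in the current interval (which you are implicitly using and should state). Either way the conclusion $\dual{\nabla f_\beta-\oracle}\le\beta R\,|\bar u-u^\star|\le\alpha$ follows, and the running time is $O(T)=O(\log(L_0^2R^2/\alpha^2))$ in both accountings. Your observations that the $|u-m|^2$ vs.\ $\tfrac{\beta}{2}|u-m|^2$ discrepancy is immaterial for the comparisons, and that $|u^\star-m|\le 2L_0/\beta$ follows from optimality plus Lipschitzness, are both correct and match the paper's handling.
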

\begin{proof}
Let $x,y$ and $w$ be the inputs to Algorithm \ref{Alg:gll_oracle}. Note as defined in Algorithm \ref{Alg:gll_oracle}, $m=\ipwx$ and
$\cP=\cM \cap \big[m - \frac{2L_0}{\beta}, m + \frac{2L_0}{\beta}\big]$. 
Define $h_{\beta}(u) \triangleq \elly(u) + \frac{\beta}{2}|u-m|^2$, i.e. the proximal loss. 
Let $u^{*}=\argmin\limits_{u\in\re}\{h_{\beta}(u)\}$. 
We first show that $|\bar{u}-u^*|$ is small by noting that lines 1-10 of Algorithm \ref{Alg:gll_oracle} implement the bisection method on $h_\beta$ (see, e.g., \cite[ Theorem~1.1.1]{Nemirovski95_notes}). Thus, so long as $\cP$ is a closed interval, $u^*\in\cP$, and $\max\limits_{u\in\cP}\{h_\beta(u)-h_\beta(u^*)\}\leq\tau$, standard guarantees of the bisection method give that 
$h_{\beta}(\bar{u})-h_{\beta}(u^{*})\leq\tau 2^{-T}$.
Clearly $\cP$ is a closed interval since $\cM$ is closed.
To see that $u^*\in \cP$, note that since $u^*$ is the minimizer of $h_{\beta}$ it holds that 
\[
0 \leq \elly(m) + \frac{\beta}{2}|m-m|^2 - \elly(u^*) - \frac{\beta}{2}|u^*-m|^2 
   = \elly(m) - \elly(u^*) - \frac{\beta}{2}|u^*-m|^2. 
\]
Further since $\elly$ is $L_0$-Lipschitz we have that $\elly(m)-\elly(u^*) \leq L_0|u^*-m|$. Using this fact in the above inequality we obtain
$|m-u^*| \leq 2L_0/\beta$ and thus $u^* \in \cP$.
Using the bound on the radius of $\cP$ and Lipschitz constant of $\elly$ it holds that 
$\tau\leq 8L_0^2/\beta$.
The setting of $T=\left\lceil \log_2\left(\frac{16L_0^2R^2}{\alpha^2}\right) \right\rceil$ and the accuracy gaurantees of the bisection method then gives that
$h_{\beta}(\bar{u})-h_{\beta}(u^{*})\leq\frac{\alpha^{2}}{2\beta R^2}$.
Since $h_{\beta}$ is $\beta$-strongly convex we then have
\[ 
|\bar{u}-u^{*}|  \leq\sqrt{\frac{2\left(h_{\beta}(\bar{u})-h_{\beta}(u^{*})\right)}{\beta}} 
 \leq \frac{\alpha}{\beta R}.
\]
The accuracy guarantee $\dual{\oracle(w,(x,y)) - \nabla f_\beta(w,(x,y))} \leq\alpha$ then follows straightforwardly using part (b) of Lemma \ref{lem:moreau} and the facts that $\dual{x}\leq R$ and $u^*=\mbox{prox}_{\elly}^{\beta}(m)$.
\end{proof}

\mnote{Removed "New results from smoothing" header and made paragraph sections the subsections}
\subsection{Linear Time DP-SCO in the $\ell_2$ Setting~}

\begin{algorithm}[!h]
\caption{Phased SGD for GLL}
\begin{algorithmic}[1]

\REQUIRE Private dataset $\big(z_1,\dots,z_n\big) \in (\cX\times\cY)^n$, constraint set $\cW\subseteq\re^d$, privacy parameters $(\varepsilon, \delta)$ s.t. $\varepsilon\leq\sqrt{\log(1/\delta)}$, constraint diameter (for constrained case) $D$, Lipschitz constant $L_0$, 
smoothness parameter $\beta$, oracle accuracy $\alpha$, feature vector norm bound $R$

\STATE Let $\tilde{w}_0\in\cW$ be arbitrary

\STATE $\rho=\frac{\varepsilon}{2\sqrt{\log(1/\delta)}}$

\STATE  $K=\log_2(n)$

\STATE  For \textbf{Constrained} setting: $\eta=\frac{D}{3L_0R}\min\{\frac{\rho}{\sqrt{d}},\frac{1}{\sqrt{n}}\}$

\STATE  For \textbf{Unconstrained} setting: $\eta=\frac{1}{3L_0R}\min\{\frac{\rho}{\sqrt{\rank}},\frac{1}{\sqrt{n}}\}$, where $\rank$ is an upper bound on the \emph{expected rank} of $\sum_{i=1}^n x_i x_i^{\top}$. (Note that we always have $\rank\leq n$.)

\STATE  $s=1$

\FOR{$k=1$ to $K$}
    \STATE  $T_{k}=\frac{n}{2^{k}}$
    
    \STATE  $\eta_{k}=\frac{\eta}{4^{k}}$ 
    
    \STATE Initialize PSGD algorithm of \cite{FKT:2020} (over domain $\cW$) at $\tilde{w}_{k-1}$ and run with oracle $\oracle$ in place of $\nabla f$ and step size $\eta_k$ for $T_k$ steps over dataset $\{z_s,...,z_{s+T_k}\}$. Let $w_k$ be the average of the iterate of PSGD.
    
    \STATE  $\tilde{w}_{k}=w_{k}+\xi_{k}$~where~$\xi_{k}\sim\mathcal{N}(0,\mathbb{I}_{d}\sigma_{k}^{2})$~with~$\sigma_{k}=\frac{4L_0R\eta_{k}}{\rho}$
    
    \STATE  $s=s+T_{k}$
\ENDFOR

\STATE Output: $\tilde{w}_{K}$
\end{algorithmic}
\label{Alg:phased}
\end{algorithm}

Given the oracle described in Algorithm \ref{Alg:gll_oracle}, we can optimize $f_{\beta}$ using the linear time Phased-SGD algorithm of \cite{FKT:2020}. When using $\oracle$ instead of the true gradient oracle, $\nabla f$, we need account for two additive penalties, the increase in error due to using the approximate gradient and the increase in error to due to minimizing the smoothed function. We ultimately have the following guarantee. 

\begin{thm} \label{thm:l2_glm}
Let $\mathcal{W}\subset\re^d$ have $\|\cdot\|_2$-diameter at most $D$.
Let $f:\cW\times(\cX\times\cY)\rightarrow\re$
be a $L_0$-Lipschitz and $R$-bounded GLL with respect to $\|\cdot\|_2$.
Let $\beta=\sqrt{n}L_0/R$, $\alpha=\frac{L_0 R}{n\log{n}}$. Then Phased-SGD run with 
oracle $\oracle$ and dataset $S\in(\cX\times\cY)^n$ satisfies $(\varepsilon,\delta)$ differential privacy and has running time $O(n\log{n})$. Further, if $S\sim\cD^n$ the output of Phased-SGD has expected excess population risk 
$O\left(L_0R D \left(\frac{\sqrt{d\log(1/\delta)}}{n\varepsilon}+\frac{1}{\sqrt{n}}\right)\right)$.
\end{thm}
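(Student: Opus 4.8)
The plan is to reduce the statement to (i) the approximation guarantees already proved for the smoothed loss $f_{\beta}$ and its oracle $\oracle$, and (ii) the known privacy/utility analysis of the linear-time phased SGD of \cite{FKT:2020}, which Algorithm \ref{Alg:phased} invokes verbatim except that $\oracle$ replaces $\nabla f_{\beta}$. Accordingly I would track three error sources separately: the bias of optimizing $f_{\beta}$ instead of $f$ (controlled by Lemma \ref{lem:f_smoothness}), the bias of the inexact oracle (controlled by Lemma \ref{lem:oracle}), and the intrinsic cost of phased SGD.

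\textbf{Running time and privacy.} Since the phases consume disjoint chunks of the sample (the index $s$ is advanced by $T_k$ and $\sum_{k=1}^{\log_2 n}T_k<n$), changing one datapoint affects exactly one phase; hence by parallel composition over disjoint data it suffices to show each phase is $(\varepsilon,\delta)$-DP, and the whole algorithm then inherits the same guarantee. Within phase $k$ we run a single pass of projected SGD using $\oracle$ and release the averaged iterate perturbed by $\mathcal{N}(0,\sigma_k^2\mathbb{I}_d)$. By Lemmas \ref{lem:f_smoothness} and \ref{lem:oracle}, $\twonorm{\oracle(w,z)}\le 2L_0R+\alpha\le 3L_0R$, so the standard $\ell_2$-sensitivity bound for the average of a single pass of projected SGD on a convex Lipschitz loss (as in \cite{FKT:2020}) gives sensitivity $O(\eta_kL_0R)$; with $\sigma_k=4L_0R\eta_k/\rho$ and $\rho=\varepsilon/(2\sqrt{\log(1/\delta)})$, the Gaussian mechanism makes the phase $(\varepsilon,\delta)$-DP. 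For the running time, each of the fewer than $n$ SGD steps makes one call to $\oracle$, which by Lemma \ref{lem:oracle} costs $O(\log(L_0^2R^2/\alpha^2))=O(\log(n^2\log^2 n))=O(\log n)$; projections and the $O(d\log n)$ noise draws are lower order in the gradient-evaluation model, so the total is $O(n\log n)$.

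\textbf{Excess risk.} Write $F_{\beta,\cD}(w)=\ex{z\sim\cD}{f_{\beta}(w,z)}$. Lemma \ref{lem:f_smoothness} gives $\sup_{w\in\cW}|F_{\cD}(w)-F_{\beta,\cD}(w)|\le 2L_0^2/\beta$, hence for the output $\tilde w_K$,
\[
F_{\cD}(\tilde w_K)-\min_{\cW}F_{\cD}\;\le\;\Big(F_{\beta,\cD}(\tilde w_K)-\min_{\cW}F_{\beta,\cD}\Big)+\frac{4L_0^2}{\beta}.
\]
Next, $f_{\beta}$ is convex and $2L_0R$-Lipschitz (Lemma \ref{lem:f_smoothness}), and Algorithm \ref{Alg:phased} is precisely the phased SGD of \cite{FKT:2020} on $f_{\beta}$ with $\oracle$ in place of $\nabla f_{\beta}$; since $\oracle$ is within $\alpha$ (in $\ell_2$) of $\nabla f_{\beta}$ and is $3L_0R$-bounded, their utility analysis goes through with an extra additive term of order $\alpha D$ (the usual biased-gradient perturbation term, bounded by $\alpha$ times the diameter of the current localization region and summed over phases), yielding
\[
\ex{}{F_{\beta,\cD}(\tilde w_K)-\min_{\cW}F_{\beta,\cD}}\;=\;O\!\left(L_0RD\Big(\tfrac{1}{\sqrt n}+\tfrac{\sqrt{d\log(1/\delta)}}{n\varepsilon}\Big)\right)+O(\alpha D).
\]
Substituting $\beta=\sqrt n L_0/R$ gives $4L_0^2/\beta=4L_0R/\sqrt n=O(L_0RD/\sqrt n)$ (in the headline regime $D=\Theta(1)$), and $\alpha=L_0R/(n\log n)$ gives $O(\alpha D)=O(L_0RD/(n\log n))$, which is dominated by the $1/\sqrt n$ term; combining the two displays proves the claimed bound.

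\textbf{Main obstacle.} The smoothing-bias and oracle-bias bookkeeping are routine. The delicate step is importing the analysis of \cite{FKT:2020} faithfully: one must verify that substituting $\oracle$ for the exact gradient degrades neither the per-phase $\ell_2$-sensitivity (so $\sigma_k$ still suffices) nor the excess-risk rate, and that the single-pass projected-SGD arguments used are exactly those that require only convexity and Lipschitzness of $f_{\beta}$ — not its smoothness constant $\beta R^2=\sqrt n L_0R$, which is too large to be exploited through smoothness-based arguments — while carrying the $L_0,R$ dependence correctly throughout.
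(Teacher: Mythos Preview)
Your overall architecture matches the paper's: smooth via $f_\beta$, run phased SGD from \cite{FKT:2020} with $\oracle$ in place of the gradient, then pay the $O(L_0^2/\beta)$ smoothing bias and the $O(\alpha D)$ oracle bias at the end. The running-time accounting and the decomposition of the excess risk are essentially what the paper does.

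The gap is in the privacy argument. You assert that the $\ell_2$-sensitivity of the averaged iterate in a phase is $O(\eta_k L_0R)$ ``for a convex Lipschitz loss,'' and in your final paragraph you explicitly say the imported analysis must use only convexity and Lipschitzness because the smoothness constant $\beta R^2=\sqrt{n}L_0R$ is ``too large to be exploited.'' Both claims are wrong. The per-phase sensitivity bound in \cite{FKT:2020} comes from the \emph{nonexpansiveness} of the projected gradient step $w\mapsto \Pi_{\cW}(w-\eta\nabla f_\beta(w,z))$, which holds only for convex \emph{smooth} losses under the step-size condition $\eta\le 2/(\beta R^2)$; convexity and Lipschitzness alone do not give an $O(\eta_k L_0R)$ sensitivity for the average (without contraction the divergence from a single changed sample can propagate). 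And the smoothness constant is not too large: with $\eta\le \frac{D}{3L_0R\sqrt{n}}$ and $\beta R^2=\Theta(\sqrt{n}L_0R)$ (the paper in fact takes $\beta=\sqrt{n}L_0/(RD)$ in the proof, making the check clean), the condition $\eta\le 2/(\beta R^2)$ is exactly satisfied. The paper invokes precisely this smoothness-based condition (their appeal to \cite[Lemma~4.5]{FKT:2020}) to certify each phase is $(\varepsilon,\delta)$-DP. So you need to use the smoothness of $f_\beta$---it is the whole reason the Moreau-envelope preprocessing is done---rather than discard it.
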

\begin{proof}
The proof follows similarly to \cite{FKT:2020}, but additionally we account for the change in gradient sensitivity and extra error introduced by using the approximate gradient oracle of the smoothed loss, $\oracle$. 
Let PSGD($\cO,\eta,w_0,T$) (used in Algorithm \ref{Alg:phased}) denote the process which computes $w_{t}=\Pi_\cW[w_{t-1}+\eta\cO(w_{t-1})]:\forall t\in[T]$, where $\Pi_\cW$ is the projection onto constraint set $\cW$.
By Lemma~\ref{lem:f_smoothness}, $f_{\beta}$ is a $(2L_0 R)$-Lipschitz and $(\beta R^2)$-smooth loss function. 
Further, the increase in error due to using $\alpha$-approximate gradients in SGD is at most $2\alpha D$ (see, e.g., \cite{FGV:16, BFTT:19}). 
Let $F_{\beta,\cD}(w) = \ex{z\sim\cD}{f_\beta(w)}$ and let $w_{\beta}^* = \argmin\limits_{w\in\cW}\{F_{\beta,\cD}(w)\}$. For notational convenience, let $w_0=w_{\beta}^*$ and $\sigma_0=D$. 
We have (following from \cite[Lemma~4.5 \& Proof of Theorem~4.4]{FKT:2020}): \cnote{What is $T$ below?} \mnote{Should be n. I believe there should also be a $L^2R^2\eta$ term.} \mnote{Actually, I think the better way to handle this is to not pull the first term out and define $\sigma_0 = D$ (instead of defining $\xi_0$, as we don't actually use $\xi_0$ in this proof. We can still use $\xi_0$ in the unconstrained proof} \cnote{Ideally, we should phrase it in a way where this follows directly from some bound in FKT.}\rnote{It's ok to make it more self contained as long as we cite FKT, which we do.}

\begin{align*}
\ex{}{F_{\beta,\cD}(w_K) - F_{\beta,\cD}(w_{\beta}^*)} & =
\sum_{k=1}^{K}\ex{}{F_{\beta,\cD}(w_k) - F_{\beta,\cD}(w_{k-1})} + \ex{}{F_{\beta,\cD}(\tilde{w}_K) - F_{\beta,\cD}(w_K)} \\
& \leq \sum_{k=1}^{K} \left( \frac{d\sigma_{k-1}^2}{2\eta_k T_k} + 2\eta_k L_0^2 R^2 + 2D\alpha \right) + 2L_0 R\, \E[\|\xi_K\|_2]  \\
& = \sum_{k=2}^{K} \left( \frac{d\sigma_{k-1}^2}{2\eta_k T_k} + 2\eta_k L_0^2 R^2 \right) + 2L_0 R \sqrt{d}\sigma_K + 2DK\alpha. 
\end{align*}
By the setting of $\alpha=\frac{L_0 R}{n\log(n)}$, we have $2DK\alpha = \frac{2 L_0 R D}{n}$. 
It can be verified that the rest of the expression is 
$O \left(L_0 R D \left( \frac{1}{\sqrt{n}} + \frac{\sqrt{d}}{\rho n} \right) \right)$ 
(see \cite[Proof~of~Theorem 4.4]{FKT:2020}). 
To convert to population loss with respect to the original function, we provide the following analysis. Let $w^* =\min\limits_{w\in\cW}F_{\cD}(w^*)$. By Lemma \ref{lem:f_smoothness} we have for any $w\in\cW$
\begin{align*}
F_{\cD}(w) - F_{\cD}(w^*) & \leq F_{\beta,\cD}(w) - F_{\beta,\cD}(w^*) + \frac{L_0^2}{\beta} \\
& \leq F_{\beta,\cD}(w) - F_{\beta,\cD}(w_{\beta}^*) + \frac{L_0^2}{\beta}.
\end{align*}

Thus by the setting $\beta=\sqrt{n}L_0/(RD) $ 
we have
\[\ex{}{F_\cD(\tilde{w}_K) - F_\cD(w^*)} = O\left(L_0 R D \left(\frac{1}{\sqrt{n}} + \frac{\sqrt{d}}{\rho} \right)\right). \]
Plugging in our value of $\rho$ into the above we have the final result.
\[
\ex{}{F_{\cD}(\tilde{w}_K)-F_{\cD}(w^{*})} = O\left(L_0 R D \left(\frac{1}{\sqrt{n}}+\frac{\sqrt{d\log(1/\delta)}}{n\varepsilon}\right)\right).
\]

For privacy, note that
$\|\oracle(w,z)\| \leq (2L_0R + \frac{L_0R}{n})$, 
and thus the sensitivity of the approximate gradient is bounded by $3L_0R$.
Thus, by setting the parameters of Phased SGD as they would be for a $(3L_0R)$-Lipschitz function, Lemma 4.5 of \cite{FKT:2020} implies that Algorithm \ref{Alg:phased} satisfies ($\varepsilon,\delta)$-DP so long as $\eta \leq \frac{2}{\beta R^2}$. It's easy to see that the condition on $\eta$ holds. 
\end{proof}

Furthermore, it is possible to adapt this technique to the unconstrained case as well ($\cW = \re^d$). It was shown in \cite{SSTT:21} that in the unconstrained case, dimension independent rates are attainable. The following theorem establishes that such rates are achievable in near linear time (as opposed to the super linear rates of \cite{SSTT:21}). 

Before stating the theorem, a few preliminaries are necessary. Let $V$ be a matrix whose columns are an eigenbasis for $\sum_{i=1}^n x_i x_i^{\top}$.
For any $u,u'\in\re^d$, let $\|u\|_V = \sqrt{u^{\top} V V^{\top} u}$ denote the semi-norm of $u$ induced by $V$, and let $\ip{u}{u'}_V=u^{\top} V V^T u'$. Here, we assume knowledge of some upper bound $\rank$ on $\ex{S\sim\cD}{\mathsf{Rank}(V)}$. Note that this is no loss of generality since we always have $\ex{S\sim\cD}{\mathsf{Rank}(V)}\leq n$; hence, if we don't have this additional knowledge, we can set $\rank = n$. 

\begin{thm} \label{thm:l2_glm_unconstrained}
Let $\mathcal{W}=\re^d$.
Let $f:\cW\times(\cX\times\cY)\rightarrow\re$
be a $L_0$-Lipschitz and $R$-bounded GLL with respect to $\|\cdot\|_2$.
Let $\beta=\sqrt{n}L_0/R$, $\alpha=\frac{L_0 R}{n\log{n}}$. Let $\rank$ be as defined above. Then Phased-SGD run with 
oracle $\oracle$ and dataset $S\in(\cX\times\cY)^n$ satisfies $(\varepsilon,\delta)$ differential privacy and has running time $O(n\log{n})$. Further, if $S\sim\cD^n$ the output of Phased-SGD has expected excess population risk 
$O\left(L_0 R\left(\|\tilde{w}_0 - w_\beta^*\|^2+1\right)\left(\frac{\sqrt{\rank\log(1/\delta)}}{n\varepsilon} + \frac{1}{\sqrt{n}}\right)\right)$ 
\end{thm}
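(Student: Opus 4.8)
The whole point is that the oracle $\oracle$ of Algorithm~\ref{Alg:gll_oracle} returns a scalar multiple of the feature vector $x$ (it outputs $\beta(m-\bar u)x$), so every step taken by the inner PSGD of Algorithm~\ref{Alg:phased} lies in $\mathrm{span}\{x_1,\dots,x_n\}=\mathrm{range}(V)$. Writing $\Pi\triangleq VV^\top$ for the orthogonal projection onto $\mathrm{range}(V)$, the component $\Pi^\perp w$ of an iterate never moves during a phase, and $f_\beta(w,(x_i,y_i))=\ell_\beta^{(y_i)}(\ip{w}{x_i})$ depends on $w$ only through $\Pi w$. The plan is therefore to repeat the proof of Theorem~\ref{thm:l2_glm} (i.e.\ the analysis of \cite{FKT:2020}), but with $\mathsf{Rank}(V)$ playing the role that the ambient dimension $d$ played in the Euclidean bound, so that $\rank\geq\ex{S\sim\cD}{\mathsf{Rank}(V)}$ controls the final rate. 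There is no projection step to worry about, since gradient steps for the $(\beta R^2)$-smooth convex $f_\beta$ are non-expansive for the step sizes used.

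\textbf{Key step: the SGD recursion in the semi-norm.} For the PSGD run in phase $k$ with comparator $w_\beta^*=\argmin_w F_{\beta,\cD}(w)$, convexity of $\ell_\beta^{(y)}$ gives $f_\beta(w_t,z_t)-f_\beta(w_\beta^*,z_t)\leq\ip{g_t}{w_t-w_\beta^*}$ for $g_t\in\partial_w f_\beta(w_t,z_t)$, and since $g_t\in\mathrm{range}(V)$ this equals $\ip{g_t}{\Pi w_t-\Pi w_\beta^*}$; telescoping $\|\Pi w_{t+1}-\Pi w_\beta^*\|_2^2$ over the phase (the sequence $\Pi w_t$ is ordinary SGD) and applying online-to-batch (as $z_t$ is independent of $w_t$) yields
\[
\ex{}{F_{\beta,\cD}(w_k)-F_{\beta,\cD}(w_\beta^*)}\ \lesssim\ \frac{\ex{}{\|\tilde w_{k-1}-w_\beta^*\|_V^2}}{\eta_k T_k}+\eta_k L_0^2R^2+\text{(oracle-error term)},
\]
where $\|u\|_V^2=u^\top\Pi u$. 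For $k\geq 2$, $\tilde w_{k-1}=w_{k-1}+\xi_{k-1}$ with $\xi_{k-1}$ mean zero and independent of both $\Pi$ and $w_{k-1}$, so $\ex{}{\|\tilde w_{k-1}-w_\beta^*\|_V^2}=\ex{}{\|w_{k-1}-w_\beta^*\|_V^2}+\ex{}{\mathsf{Rank}(V)}\sigma_{k-1}^2\leq\ex{}{\|w_{k-1}-w_\beta^*\|_V^2}+\rank\,\sigma_{k-1}^2$, which is exactly where $\rank$ replaces the $d$ of Theorem~\ref{thm:l2_glm}; for $k=1$ we simply use $\|\tilde w_0-w_\beta^*\|_V^2\leq\|\tilde w_0-w_\beta^*\|_2^2$, which is the origin of the $\|\tilde w_0-w_\beta^*\|^2$ factor in the statement. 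The recursion for $\ex{}{\|w_k-w_\beta^*\|_V^2}$ and the summation over $k$ are then identical to \cite{FKT:2020}: with $T_k=n/2^k$, $\eta_k=\eta/4^k$, $\sigma_k=4L_0R\eta_k/\rho$ the resulting series is geometric and dominated by its first term, and substituting $\eta=\tfrac{1}{3L_0R}\min\{\rho/\sqrt\rank,\,1/\sqrt n\}$ and $\rho=\varepsilon/(2\sqrt{\log(1/\delta)})$ produces a term $O\!\big(L_0R\,\|\tilde w_0-w_\beta^*\|^2(\tfrac{\sqrt{\rank\log(1/\delta)}}{n\varepsilon}+\tfrac1{\sqrt n})\big)$ from the $k=1$ distance, plus additive terms of order $L_0R(\tfrac{\sqrt{\rank\log(1/\delta)}}{n\varepsilon}+\tfrac1{\sqrt n})$ coming from $\sum_k\eta_kL_0^2R^2$ and $\sum_k\rank\,\sigma_{k-1}^2/(\eta_kT_k)$.

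\textbf{Finishing.} Two points need care before converting back to $F_{\cD}$. First, the last-phase noise term $\ex{}{F_{\beta,\cD}(\tilde w_K)-F_{\beta,\cD}(w_K)}$ must \emph{not} be bounded by $2L_0R\,\ex{}{\|\xi_K\|_2}=2L_0R\sqrt d\,\sigma_K$, which would reintroduce $\sqrt d$; instead, using $f_\beta(w,(x,y))=\ell_\beta^{(y)}(\ip{w}{x})$ and that $\ell_\beta^{(y)}$ is $2L_0$-Lipschitz, this term is at most $2L_0\,\ex{(x,y)\sim\cD}{\ex{\xi_K}{|\ip{\xi_K}{x}|}}=2L_0\sqrt{2/\pi}\,\sigma_K\,\ex{(x,y)}{\|x\|_2}\leq 2\sqrt{2/\pi}\,L_0R\,\sigma_K$, which by the choice of $\sigma_K$ is $o(L_0R/\sqrt n)$. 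Second, the approximate-gradient penalty---which in the constrained case was $O(D\alpha)$ per step---is controlled here using that the (unconstrained) iterates obey $\|w_t-w_\beta^*\|_2\leq\|\tilde w_{k-1}-w_\beta^*\|_2+\eta_kT_k\cdot 3L_0R$ with $\eta_kT_k\cdot3L_0R=O(\sqrt n)$, so with $\alpha=L_0R/(n\log n)$ it contributes at most $O(L_0R/(\sqrt n\log n))$ plus a vanishing $\|\tilde w_0-w_\beta^*\|$-dependent term. Finally, Lemma~\ref{lem:f_smoothness} gives $F_{\cD}(w)-F_{\cD}(w^*)\leq F_{\beta,\cD}(w)-F_{\beta,\cD}(w_\beta^*)+4L_0^2/\beta$, and $\beta=\sqrt n L_0/R$ makes the smoothing bias $O(L_0R/\sqrt n)$; collecting all contributions and bounding every term that does not carry the $\|\tilde w_0-w_\beta^*\|^2$ factor by $1\cdot(\tfrac{\sqrt{\rank\log(1/\delta)}}{n\varepsilon}+\tfrac1{\sqrt n})$ yields the claimed bound.

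\textbf{Privacy, running time, and the main obstacle.} Privacy and running time are inherited from Theorem~\ref{thm:l2_glm} without change: by Lemma~\ref{lem:oracle} and $\|\nabla f_\beta\|_2\leq 2L_0R$ (Lemma~\ref{lem:f_smoothness}) we have $\|\oracle(w,(x,y))\|_2\leq 2L_0R+\alpha\leq 3L_0R$, so each phase is a run of the smooth $(3L_0R)$-Lipschitz PSGD of \cite{FKT:2020} with Gaussian noise $\sigma_k=4L_0R\eta_k/\rho$ added at its end, which is $(\varepsilon,\delta)$-DP provided $\eta_k\leq 2/(\beta R^2)=2/(\sqrt nL_0R)$---true since $\eta_k\leq\eta\leq 1/(3L_0R\sqrt n)$---and the a-priori bound $\rank$ used in $\eta$ is data-independent; since consecutive phases use disjoint subsamples $\{z_s,\dots,z_{s+T_k}\}$, every data point influences a single phase and the output is $(\varepsilon,\delta)$-DP by post-processing. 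The oracle is called $\sum_kT_k<n$ times, each call costing $O(\log(L_0^2R^2/\alpha^2))=O(\log n)$ operations, for total time $O(n\log n)$. The crux is the second paragraph: seeing that the operative geometric quantity is the semi-norm $\|\cdot\|_V$, so that isotropic noise in $\re^d$ costs only $\sqrt{\mathsf{Rank}(V)}$ in the utility bound, and that since $\rank$ bounds $\ex{S}{\mathsf{Rank}(V)}$ (not a worst-case rank) the rare samples whose span misses population directions contribute only negligibly in expectation.
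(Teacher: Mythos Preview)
Your high-level strategy is right and coincides with the paper's: work in the seminorm $\|\cdot\|_V$ induced by the feature span, so that the isotropic Gaussian noise $\xi_{k-1}$ contributes $\mathbb{E}\|\xi_{k-1}\|_V^2\le\rank\,\sigma_{k-1}^2$ rather than $d\,\sigma_{k-1}^2$, and use the GLL structure (loss depends on $w$ only through $\langle w,x\rangle$) to bound the last-phase noise by $L_0R\,\sigma_K$ instead of $L_0R\sqrt{d}\,\sigma_K$. Those two observations are exactly what the paper exploits.

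There is however a structural difference in the per-phase analysis that matters. You compare phase $k$ against the \emph{global} minimizer $w_\beta^\ast$ and then appeal to a ``recursion for $\mathbb{E}\|w_k-w_\beta^\ast\|_V^2$ identical to \cite{FKT:2020}''. The paper does not do this; it compares phase $k$ against $w_{k-1}$ (the previous \emph{unnoised} output), so that the starting displacement is exactly $\tilde w_{k-1}-w_{k-1}=\xi_{k-1}$ and the per-phase bound reads $\mathbb{E}[F_{\beta,\cD}(w_k)-F_{\beta,\cD}(w_{k-1})]\lesssim \mathbb{E}\|\xi_{k-1}\|_V^2/(\eta_kT_k)+\eta_kL_0^2R^2+\,\mbox{oracle term}$, after which one telescopes in \emph{function value}. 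This is cleaner: there is no separate distance recursion to establish (for merely convex objectives SGD gives no contraction, so your claimed recursion requires the extra observation $\mathbb{E}\|w_k-w_\beta^\ast\|_V^2\le\mathbb{E}\|\tilde w_{k-1}-w_\beta^\ast\|_V^2+T_k\eta_k^2L_0^2R^2$, which you should state rather than defer to \cite{FKT:2020}).

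There is also a genuine gap in your oracle-error step. You bound the inexact-gradient penalty via $\alpha\|w_t-w_\beta^\ast\|_2$, i.e.\ the \emph{full} $\ell_2$ norm, and then write $\|w_t-w_\beta^\ast\|_2\le\|\tilde w_{k-1}-w_\beta^\ast\|_2+O(\sqrt n)$. For $k\ge2$ the term $\|\tilde w_{k-1}-w_\beta^\ast\|_2$ contains $\|\xi_{k-1}\|_2\asymp\sqrt{d}\,\sigma_{k-1}$, which reintroduces the ambient dimension: with $\sigma_{k-1}=\Theta(1/\sqrt{\rank})$ the contribution $\alpha\sqrt{d/\rank}$ need not be $O(L_0R/\sqrt n)$ once $d\gg n\rank$. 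The fix is the same trick you already used for the main SGD term: since $\oracle(w,z)-\nabla f_\beta(w,z)\in\mathrm{range}(V)$, Cauchy--Schwarz applied after projecting gives $\langle \oracle-\nabla f_\beta,\,u_t-w_{k-1}\rangle\le\alpha\|u_t-w_{k-1}\|_V$, and then one bounds $\|u_t-w_{k-1}\|_V\le\|\xi_{k-1}\|_V+\eta_k t(2L_0R+\alpha)$. This is precisely what the paper's Lemma~\ref{lem:phase_bound} does; with that correction your argument goes through.
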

To prove the theorem, we start by providing the following lemma. 
As before, denote $w_0=w_{\beta}^*$ and define $\xi_0=\tilde{w}_0-w_\beta^*$. \mnote{Added "define" to indicate that $\xi_0$ is new. See pg 8}   
\begin{lem}\label{lem:phase_bound}
Let $\alpha,\beta,R$ be as in Theorem \ref{thm:l2_glm_unconstrained}. Then the output, $w_k$, of phase $k$ of Phased SGD using $\oracle$ satisfies 
\[
\ex{}{F_{\beta,D}(w_k)-F_{\beta,D}(w_{k-1})} \leq \frac{\ex{}{\|\tilde{w}_{k-1} - w_{k-1}\|_V^2}}{2\eta_k T_k} + \frac{5\eta_k L_0^2R^2}{2} + \frac{L_0 R \left(\ex{}{\|\tilde{w}_{k-1} - w_{k-1}\|_V} + 1\right)}{\sqrt{n}\log(n)}.
\]
\end{lem}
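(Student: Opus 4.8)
The plan is to carry out the textbook averaged-SGD (online-to-batch) analysis of phase $k$, but performed \emph{inside the subspace} $\mathrm{col}(V)=\mathrm{span}\{x_1,\dots,x_n\}$ rather than in all of $\re^d$. This is possible because of the defining feature of a GLL: by line~13 of Algorithm~\ref{Alg:gll_oracle} and part~(b) of Lemma~\ref{lem:moreau}, both $\oracle(w,(x,y))$ and $\nabla f_\beta(w,(x,y))$ are scalar multiples of $x$, so every gradient estimate queried during a phase lies in $\mathrm{col}(V)$.

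I would first fix notation for phase $k$. Since we are in the unconstrained setting ($\cW=\re^d$), write the PSGD steps as the descent recursion $w_t=w_{t-1}-\eta_k g_t$ with $w_0=\tilde w_{k-1}$, $g_t=\oracle(w_{t-1},z_t)$ on a fresh sample $z_t$, $e_t:=g_t-\nabla f_\beta(w_{t-1},z_t)$ (so $\|e_t\|_2\le\alpha$ by Lemma~\ref{lem:oracle}), $v_{t-1}:=w_{t-1}-\tilde w_{k-1}=-\eta_k\sum_{s<t}g_s$, and $w_k$ the average of the phase-$k$ iterates. By Lemma~\ref{lem:f_smoothness}, $f_\beta$ and hence $F_{\beta,\cD}$ is convex and $2L_0R$-Lipschitz, so $\|g_t\|_2\le 2L_0R+\alpha$. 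Let $P_V=VV^\top$ be the orthogonal projector onto $\mathrm{col}(V)$, so that $\|u\|_V=\|P_V u\|_2$ and, for any $h\in\mathrm{col}(V)$ and any $u$, $\langle h,u\rangle=\langle h,P_V u\rangle$; note $g_t,\nabla f_\beta(w_{t-1},z_t),e_t,v_{t-1}\in\mathrm{col}(V)$.

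The crux is the \emph{shadow iterate} $\hat w_t:=P_V w_t$, with comparator $\hat u:=P_V w_{k-1}$. Since $P_V g_t=g_t$, the shadow sequence obeys $\hat w_t=\hat w_{t-1}-\eta_k g_t$ with $\hat w_0=P_V\tilde w_{k-1}$, and $\langle g_t,\hat w_{t-1}-\hat u\rangle=\langle g_t,w_{t-1}-w_{k-1}\rangle$. Expanding $\|\hat w_t-\hat u\|_2^2$, summing over $t\in\{1,\dots,T_k\}$, and telescoping (dropping the nonnegative $\|\hat w_{T_k}-\hat u\|_2^2$) gives
\[
\sum_{t=1}^{T_k}\langle g_t,w_{t-1}-w_{k-1}\rangle\ \le\ \frac{\|P_V(\tilde w_{k-1}-w_{k-1})\|_2^2}{2\eta_k}+\frac{\eta_k}{2}\sum_{t=1}^{T_k}\|g_t\|_2^2\ =\ \frac{\|\tilde w_{k-1}-w_{k-1}\|_V^2}{2\eta_k}+\frac{\eta_k}{2}\sum_{t=1}^{T_k}\|g_t\|_2^2 .
\]
This is the only step that produces the seminorm $\|\cdot\|_V$ in place of $\|\cdot\|_2$, and is what keeps the bound dimension-light. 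For the left-hand side, split $g_t=\nabla f_\beta(w_{t-1},z_t)+e_t$ and $w_{t-1}-w_{k-1}=v_{t-1}+\xi_{k-1}$. Since $e_t\in\mathrm{col}(V)$, we have $\langle e_t,\xi_{k-1}\rangle=\langle e_t,P_V\xi_{k-1}\rangle\ge-\alpha\|\tilde w_{k-1}-w_{k-1}\|_V$, and $\langle e_t,v_{t-1}\rangle\ge-\alpha\|v_{t-1}\|_2\ge-\alpha\sqrt n$, where the last bound uses $\|v_{t-1}\|_2\le 3\eta_k T_k L_0R\le\sqrt n$ under the unconstrained step-size schedule. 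Taking expectations and conditioning step $t$ on the data seen before $z_t$, freshness of $z_t$ yields $\E[\langle\nabla f_\beta(w_{t-1},z_t),w_{t-1}-w_{k-1}\rangle\mid\mathcal{F}_{t-1}]=\langle\nabla F_{\beta,\cD}(w_{t-1}),w_{t-1}-w_{k-1}\rangle\ge F_{\beta,\cD}(w_{t-1})-F_{\beta,\cD}(w_{k-1})$ by convexity. Dividing by $T_k$, applying Jensen to the averaged iterate $w_k$, using $\tfrac{\eta_k}{2}\|g_t\|_2^2\le\tfrac52\eta_k L_0^2R^2$ (as $\alpha=L_0R/(n\log n)\le L_0R$), and collecting the $\alpha$-terms as $\alpha(\sqrt n+\E\|\tilde w_{k-1}-w_{k-1}\|_V)\le\frac{L_0R}{\sqrt n\log n}(1+\E\|\tilde w_{k-1}-w_{k-1}\|_V)$ (using $n\ge\sqrt n$) yields exactly the claimed inequality.

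The point requiring the most care --- and the crux of the whole argument --- is decoupling the isotropic Gaussian perturbation $\xi_{k-1}$, whose $\mathrm{col}(V)^\perp$ component may be as large as $\Theta(\sqrt d\,\sigma_{k-1})$, from the rank-deficient gradient geometry without paying this $\sqrt d$ factor. This is arranged by letting $\xi_{k-1}$ enter only through $P_V\xi_{k-1}$: the descent telescoping is run on the shadow iterates, so only $\|P_V\xi_{k-1}\|_2=\|\tilde w_{k-1}-w_{k-1}\|_V$ appears, and the stochastic-gradient error term $\langle e_t,\xi_{k-1}\rangle$ collapses to $\langle e_t,P_V\xi_{k-1}\rangle$ precisely because $e_t$ is a multiple of a feature vector. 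A secondary subtlety is that $V$ depends on the entire sample (including phase $k$'s points) and is therefore not adapted to the natural filtration; this is harmless because every quantity involving $V$ is bounded deterministically via Cauchy--Schwarz, whereas the only conditional expectation taken concerns $\nabla f_\beta(w_{t-1},z_t)$, which does not involve $V$ at all.
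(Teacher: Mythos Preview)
Your proposal is correct and follows essentially the same approach as the paper. The paper works directly with the potential $\Phi^{(t)}=\|u_t-w_{k-1}\|_V^2$, while you phrase the same thing as ``shadow iterates'' $\hat w_t=P_V w_t$; since $\|\hat w_t-\hat u\|_2=\|u_t-w_{k-1}\|_V$, these are identical formulations. The only cosmetic difference is that you split the oracle-error term via $w_{t-1}-w_{k-1}=v_{t-1}+\xi_{k-1}$ and bound $\langle e_t,v_{t-1}\rangle$ and $\langle e_t,\xi_{k-1}\rangle$ separately, whereas the paper bounds $\alpha\|u_t-w_{k-1}\|_V$ directly by telescoping $\|u_t-w_{k-1}\|_V\le\|\xi_{k-1}\|_V+\sum_j\|u_j-u_{j-1}\|_V$; both routes yield the same $\alpha(\sqrt{\Phi^{(0)}}+3\eta_kT_kL_0R)$ contribution. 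One small quibble: your justification ``as $\alpha\le L_0R$'' for $\|g_t\|_2^2\le 5L_0^2R^2$ is loose (that bound only gives $9L_0^2R^2$); the constant $5$ actually requires the much stronger $\alpha=L_0R/(n\log n)$, which you have.
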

\begin{proof}
Let $\{u_0,\dots,u_{T_k}\}$ denote the iterates generated by round $k$ of PSGD (where $u_0=\tilde{w}_{k-1}$), and let $z_t$ be the datapoint sampled during iteration $t$. For all $t\in\{0,...T_k\}$, define the potential function $\Phi^{(t)} \triangleq \|u_t - w_{k-1}\|_V^2$. Using standard algebraic manipulation, we have 
\begin{align*}
 \Phi^{(t+1)} & = \Phi^{(t)} - 2\eta_k \ip{\oracle(u_t,z_t)}{u_t-w_{k-1}}_V + \eta^2_k\|\oracle(u_t,z_t)\|^2_V \\
 & \leq \Phi^{(t)} - 2\eta_k \ip{\nabla f_\beta(u_t,z_t)}{u_t-w_{k-1}}_V + 2\eta_k \alpha \|u_t - w_{k-1}\|_V + \eta^2_k (\alpha^2 + 4L_0^2R^2),
\end{align*}
where the inequality follows from the fact that 
$\|\oracle(u_t,z_t) - \nabla f_{\beta}(u_t, z_t) \| \leq \alpha$
and the nonexpansiveness of the projection onto the span of $V$.
Since the gradient is in the span of $V$, we have
\[\Phi^{(t+1)} \leq \Phi^{(t)} - 2\eta_k \ip{\nabla f_\beta(u_t,z_t)}{u_t-w_{k-1}} + 2\eta_k \alpha \|u_t - w_{k-1}\|_V + \eta^2_k (\alpha^2 + 4L_0^2R^2).\]
Hence
\[
\ip{\nabla f_\beta(u_t,z_t)}{u_t - w_{k-1}} \leq \frac{\Phi^{(t)} - \Phi^{(t+1)}}{2\eta_k} + \alpha \|u_t - w_{k-1}\|_V + \frac{\eta_k}{2}(\alpha^2 + 4L_0^2 R^2).
\]
Taking the expectation w.r.t. all randomness (i.e., w.r.t. $S\sim \cD^n$ and the Gaussian noise random variables), we have
\[
\ex{}{\ip{\nabla F_{\beta,D}(u_t)}{u_t - w_{k-1}}} \leq \frac{\ex{}{\Phi^{(t)} - \Phi^{(t+1)}}}{2\eta_k} + \alpha \ex{}{\|u_t - w_{k-1}\|_V} + \frac{\eta_k}{2}(\alpha^2 + 4L_0^2 R^2).
\]
Moreover, by the convexity of $F_{\beta,D}$ we have 
$\ex{}{\ip{\nabla F_{\beta,D} (u_t)}{u_t - w_{k-1}}} \geq \ex{}{F_{\beta,D}(u_t) - F_{\beta,D}(w_{k-1})}$.
Combining this inequality with the above, and using the fact that 
$w_k = \frac{1}{T_k}\sum_{t=1}^{T_k} u_t$ together with the convexity of $F_{\beta,D}$, 
we have
\begin{align*}
\ex{}{F_{\beta,D}(w_k) - F_{\beta,D}(w_{k-1})} 
& \leq \frac{1}{T_k}\sum_{t=1}^{T_k}\left(\ex{}{F_{\beta,D}(u_t) - F_{\beta,D}(w_{k-1})}\right) \\
& \leq \frac{\ex{}{\Phi^{(0)}}}{2\eta_k T_k} + \frac{\alpha}{T_k}\ex{}{\sum_{t=1}^{T_k} \|u_t - w_{k-1}\|_V} + \frac{\eta_k}{2}(\alpha^2 + 4L_0^2 R^2).
\end{align*}

To bound $\ex{}{\sum_{t=1}^{T_k} \|u_t - w_{k-1}\|_V}$ in the above, observe that,
\begin{align*}
\|u_t - w_{k-1}\|_V & \leq \|u_{t-1} - w_{k-1}\|_V + \|u_t - u_{t-1}\|_V \\
& \vdots \\
& \leq \|\tilde{w}_{k-1} - w_{k-1}\|_V + \sum_{j=1}^t\|u_{j} - u_{j-1}\|_V. 
\end{align*}
Hence
\begin{align*}
\ex{}{\|u_t - w_{k-1}\|_V} & \leq \ex{}{\|\tilde{w}_{k-1} - w_{k-1}\|_V} + \sum_{j=1}^t \ex{}{\|u_j - u_{j-1}\|_V} \\
& \leq \ex{}{\sqrt{\Phi^{(0)}}} + \eta_k t (2L_0 R + \alpha),
\end{align*}
where the last inequality follows from the definition of $\Phi^{(0)}$ and the fact that 
$$\ex{}{\|u_j - u_{j-1}\|_V} = \eta_k\ex{}{\|\oracle(u_{j-1}, z_{j-1})\|}\leq \eta_k(2L_0 R + \alpha).$$
Thus we have
\begin{align*}
\ex{}{F_{\beta,D}(w_k) - F_{\beta,D}(w_{k-1})} 
& \leq \frac{\ex{}{\Phi^{(0)}}}{2\eta_k T_k} + \alpha \left(\ex{}{\sqrt{\Phi^{(0)}}} + T_k \eta_k(2L_0 R + \alpha)\right) + \frac{\eta_k}{2}(\alpha^2 + 4L_0^2R^2)\\
& = \frac{\ex{}{\Phi^{(0)}}}{2\eta_k T_k} + \frac{5 \eta_k L_0^2 R^2}{2} + \alpha \left(\ex{}{\sqrt{\Phi^{(0)}}} + 3T_k \eta_kL_0 R \right). 
\end{align*}
The last step follows from the fact that $\alpha=\frac{L_0 R}{n\log(n)} \leq L_0R$.
Further, since 
$\eta_k = \frac{1}{3L_0R_0}\min\{\frac{\rho}{\sqrt{\rank}},\frac{1}{\sqrt{n}}\} \leq \frac{1}{3L_0R\sqrt{n}}$ and $T_k\leq n$
it holds that $3T_k\eta_kL_0R \leq \sqrt{n}$.
Thus by the setting of $\alpha$, we have
\begin{align*}
\ex{}{F_{\beta,D}(w_k) - F_{\beta,D}(w_{k-1})} \leq \frac{\ex{}{\Phi^{(0)}}}{2\eta_k T_k} + \frac{5\eta_k L_0^2 R^2}{2} + \frac{L_0 R\left(\ex{}{\sqrt{\Phi^{(0)}}} + 1\right)}{\sqrt{n}\log(n)}.
\end{align*}
\end{proof}

With this result established, we can now prove Theorem \ref{thm:l2_glm_unconstrained}.
\paragraph{Proof of Theorem \ref{thm:l2_glm_unconstrained}}
Recall that we denote $w_0=w_{\beta}^*$ and $\xi_0=\tilde{w}_0-w_\beta^*$. Using the above lemma and noting that $\tilde{w}_{k-1} - w_{k-1}=\xi_{k-1}$, the excess risk of the $\tilde{w}_K$ is bounded by 
\begin{align}
\ex{}{F_{\beta,\cD}(\tilde{w}_K) - F_{\beta,\cD}(w_{\beta}^*)} 
& = \sum_{k=1}^{K}\ex{}{F_{\beta,\cD}(w_k) - F_{\beta,\cD}(w_{k-1})} +
\ex{}{F_{\beta,\cD}(\tilde{w}_K) - F_{\beta,\cD}(w_K)} \nonumber \\
& \leq \sum_{k=1}^{K} \left( \frac{\ex{}{\|\xi_{k-1}\|_V^2}}{2\eta_k T_k} +
\frac{5\eta_k L_0^2 R^2}{2} + \frac{L_0 R\left(\ex{}{\|\xi_{k-1}\|_V}+1\right)}{\sqrt{n}\log(n)} \right) \nonumber \\
& \quad + \ex{}{F_{\beta,\cD}(\tilde{w}_K) - F_{\beta,\cD}(w_K)}.  \label{eq:un_ex_risk}
\end{align}
Note that for any $2\leq k\leq K$, we have
\begin{align*}
\ex{}{\|\xi_{k-1}\|_V^2} & = \ex{V}{\ex{\xi_{k-1}}{\xi_{k-1}^{\top}VV^{\top}\xi_{k-1}|V}} \leq \ex{V}{\mathsf{Rank}(V)}\sigma_{k-1}^2\leq \rank\sigma_{k-1}^2
\end{align*}
At round $k=1$, we simply have $\ex{V}{\|\xi_0\|_V} \leq \|\tilde{w}_0 - w_\beta^*\|$.
Finally, since $f$ is a GLL, the expected increase in loss due to $\xi_K$ is bounded as  
\begin{align*}
 \ex{}{F_{\beta,D}(\tilde{w}_K) - F_{\beta,D}(w_K)} 
 & = \ex{(x,y)\sim \cD}{\ex{\xi_K}{\ellb(\ip{\tilde{w}_K}{x}) - \ellb(\ip{w_K}{x})}} \\
 & \leq \ex{(x,y)\sim \cD}{\ex{\xi_K}{L_0 |\ip{\xi_K}{x}|}} \\
 & \leq L_0 R \sigma_K  \\
 & = \frac{L_0 R}{4^{K-1}\sqrt{n}} \\
 & = \frac{L_0R}{4n^{5/2}}
\end{align*}
The second inequality follows from the fact that $\ellb$ is $L_0$-Lipschitz, and the last two steps follow form the fact that $\sigma_k \leq \frac{1}{4^{k-1}\sqrt{n}}$ and $K=\log_2(n)$.
Thus, using inequality \eqref{eq:un_ex_risk} above, we have

\begin{align*}
\ex{}{F_{\beta,\cD}(\tilde{w}_K) - F_{\beta,\cD}(w_{\beta}^*)} 
& = O\left(L_0 R\left(\|\tilde{w}_0 - w_\beta^*\|^2+1\right)\left(\frac{\sqrt{\rank}}{n\rho} + \frac{1}{\sqrt{n}}\right)\right) + \\
& \quad \sum_{k=2}^{K} \left( \frac{\rank\sigma_{k-1}^2}{2\eta_k T_k} +
\frac{5\eta_k L_0^2 R^2}{2} + \frac{L_0 R(\sqrt{\rank}\sigma_{k-1}+1)}{\sqrt{n}\log(n)} \right) 
 + \frac{L_0R}{4n^{5/2}}\\
& = O\left(L_0 R\left(\|\tilde{w}_0 - w_\beta^*\|^2+1\right)\left(\frac{\sqrt{\rank}}{n\rho} + \frac{1}{\sqrt{n}}\right)\right) + 
\sum_{k=2}^{K} \left( \frac{\rank\sigma_{k-1}^2}{2\eta_k T_k} +
\frac{5\eta_k L_0^2 R^2}{2} \right) + \frac{3L_0 R}{\sqrt{n}} \\
& = O\left(L_0 R\left(\|\tilde{w}_0 - w_\beta^*\|^2+1\right)\left(\frac{\sqrt{\rank}}{n\rho} + \frac{1}{\sqrt{n}}\right)\right) +
O\left( L_0R\left(\frac{\sqrt{\rank}}{n\rho} + \frac{1}{\sqrt{n}}\right) \right) \\
& = O\left(L_0 R\left(\|\tilde{w}_0 - w_\beta^*\|^2+1\right)\left(\frac{\sqrt{\rank}}{n\rho} + \frac{1}{\sqrt{n}}\right)\right).
\end{align*}

The first line comes from bounding the term corresponding to $k=1$ in the sum in (\ref{eq:un_ex_risk}), and the settings of $\eta_1 = \frac{\rho}{12L_0R\sqrt{n}}$ and $T_1 = n/2$. The second equality follows from the fact that $\sqrt{\rank}\sigma_{k-1} = 4\sqrt{\rank}L_0R\,\eta_{k-1}/\rho \leq  4\sqrt{\rank}L_0R\,\eta/\rho
\leq 2$, and the fact that $K=\log_2(n)$. The third step follows from the choices of $\eta_k,T_k$ and $\sigma_{k-1}$. 
To reach the final result, we convert the guarantee above to a guarantee for the original (unsmoothed) loss and use the setting of $\beta=\sqrt{n}L_0/R$ (as done in the proof of Theorem~\ref{thm:l2_glm}).

\subsection{Better Rate in the $\ell_{1}$ Setting~}
Another interesting consequence of the smoothing method described in section \ref{subsec:smoothing_glm} is that, because it is scalar in nature, it allows one to achieve better rates in the $\ell_1$-setting. In \cite{Asi:2021} it was shown that the optimal rate for general non-smooth losses under $(\varepsilon,\delta)$-DP was roughly
$\Omega(\sqrt{d}/[n\varepsilon\log{d}])$. 
However, their lower bound does not apply to GLLs.
In the following, we show that using the smoothing technique previously described we can achieve a better rate of $\tilde{O}(1/\sqrt{n\varepsilon})$.
We note this rate is optimal in the regime $\varepsilon=\Theta(1)$ \cite{Agarwal:2012}.

\begin{algorithm}[!h]
    \caption{Noisy Frank Wolfe}
    \begin{algorithmic}[1]
    
    \REQUIRE Private dataset $S=(z_1,...,z_n)\in(\cX\times\cY)^n$, polyhedral set $\cW$ with vertices $\cV$, Lipschitz constant $L_0$, constraint diameter $D$, privacy parameters $(\varepsilon, \delta)$,
    smoothness parameter $\beta$, oracle accuracy $\alpha$, feature vector norm bound $R$
    
    \STATE Let $w_1\in\cW$ be arbitrary
    \STATE $T= \frac{n\varepsilon}{\log(|V|)\log(n)\sqrt{\log(1/\delta)}}$
    \STATE $s = \frac{3 L_0 R D \sqrt{8T\log(1/\delta)}}{{n\varepsilon}}$
    
    \FOR{$t=1$ to $T$}
    \STATE $\tilde{\nabla}_t = \frac{1}{n}\sum_{z\in S}\oracle(w_t,z)$
    \STATE Draw $\{b_{v,t}\}_{v\in \cV}$ i.i.d from $\lap(s)$
    \STATE $\tilde{v_t} = \argmin\limits_{v\in \cV} \{\langle v, \tilde{\nabla}_t \rangle + b_{v,t}\}$
    \STATE $w_{t+1} = (1-\mu_t)w_t + \mu_t\tilde{v_t}$, where $\mu_t=\frac{3}{t+2}$
    \ENDFOR
    
    \STATE Output: $w_T$
    \end{algorithmic}
    \label{Alg:nfw}
\end{algorithm}

\begin{thm} \label{thm:l1_glm}
Let $\mathcal{W}\subset\re^d$ be a polytope defined by a set of vertices $\cV$ of cardinality $J$, where $\cW=Conv(\cV)$ and $\cW$ has $\|\cdot\|_1$-diameter at most $D$.  
Let $f:\cW\times(\cX\times\cY)\rightarrow\re$
be a $L_0$-Lipschitz and $R$-bounded GLL with respect to $\|\cdot\|_1$.
Let $\beta=\frac{L_0 \sqrt{n\varepsilon}}{RD\log^{1/4}(1/\delta)\sqrt{\log(J)\log(n)}}$ and
$\alpha=\frac{1}{n\log(n)}$. 
Then Noisy Frank Wolfe (Algorithm \ref{Alg:nfw}) 
with oracle 
$\oracle$
and dataset $S\in(\cX\times\cY)^n$ satisfies $(\varepsilon,\delta)$-differential privacy. Further, if $S\sim\cD^n$ the output of Noisy Frank Wolfe has expected excess population risk 
$O\left(L_0 R D \left( \frac{\log^{1/4}(1/\delta)\sqrt{\log(J)\log(n)}}{\sqrt{n\varepsilon}} + \frac{\sqrt{\log{d}}}{\sqrt{n}} \right)\right)$. 
\end{thm}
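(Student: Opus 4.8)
The plan is to run the whole argument on the \emph{smoothed} loss $f_\beta(w,(x,y))=\ellb(\langle w,x\rangle)$ and pay the smoothing bias at the end. By Lemma~\ref{lem:f_smoothness}, in the $\ell_1$-setting $f_\beta$ is $2L_0R$-Lipschitz and $\beta R^2$-smooth w.r.t.\ $\|\cdot\|_1$ (since $\|x\|_\infty\le R$), and $|f-f_\beta|\le 2L_0^2/\beta$ pointwise. First I would prove an \emph{empirical} guarantee: Algorithm~\ref{Alg:nfw} is the noisy Frank--Wolfe method of \cite{TTZ15a} applied to $\widehat F_\beta(w)=\tfrac1n\sum_{z\in S}f_\beta(w,z)$, with the true gradient replaced by the $\alpha$-accurate oracle $\oracle$ (Lemma~\ref{lem:oracle}). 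Combining the Frank--Wolfe descent inequality for a $\beta R^2$-smooth objective over a set of $\|\cdot\|_1$-diameter $D$ with (i) the $\ell_\infty$-accuracy $\alpha$ of $\oracle$, which perturbs each linearized subproblem by at most $\alpha\|v\|_1\le\alpha D$, and (ii) the fact that the noisy argmin over $J$ vertices loses at most $2\max_{v\in\cV}|b_{v,t}|$ with $\ex{}{\max_{v\in\cV}|b_{v,t}|}=O(s\log J)$, I would unroll $h_{t+1}\le(1-\mu_t)h_t+\mu_t\cdot O(\alpha D+\max_{v}|b_{v,t}|)+\tfrac{\mu_t^2}{2}\beta R^2D^2$ with $\mu_t=3/(t+2)$ (here $h_t=\widehat F_\beta(w_t)-\min_{w\in\cW}\widehat F_\beta(w)$) to obtain $\ex{}{h_T}=O\big(\tfrac{\beta R^2D^2}{T}+\alpha D+s\log J\big)$.

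Next I would pass to the population risk $F_\cD$. With $w^*_\beta\in\argmin_\cW F_{\beta,\cD}$ and $w^*\in\argmin_\cW F_\cD$, Lemma~\ref{lem:f_smoothness} gives $F_\cD(w_T)-F_\cD(w^*)\le F_{\beta,\cD}(w_T)-F_{\beta,\cD}(w^*_\beta)+4L_0^2/\beta$, and the first difference splits into the optimization error above (using $\widehat F_\beta(w^*_\beta)\ge\min_\cW\widehat F_\beta$ and $\ex{}{\widehat F_\beta(w^*_\beta)}=F_{\beta,\cD}(w^*_\beta)$) plus the generalization term $\ex{}{F_{\beta,\cD}(w_T)-\widehat F_\beta(w_T)}$. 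Since $w_T$ is data-dependent I would bound the latter by $\ex{}{\sup_{w\in\cW}(F_{\beta,\cD}(w)-\widehat F_\beta(w))}\le 2\,\mathrm{Rad}_n(\{z\mapsto f_\beta(w,z):w\in\cW\})$, and then use the contraction inequality (as $\ellb$ is $2L_0$-Lipschitz by Lemma~\ref{lem:moreau}) and the classical $\ell_1/\ell_\infty$ Rademacher bound for linear predictors \cite{shalev2014understanding} to get $\mathrm{Rad}_n=O(L_0RD\sqrt{\log d/n})$; one may translate $\cW$ so $0\in\cW$ (which only relabels each $\ellb$, preserving all hypotheses) to make $\|w\|_1\le D$ on $\cW$. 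Plugging in the stated $T,s,\beta,\alpha$: the terms $\beta R^2D^2/T$ and $L_0^2/\beta$ both evaluate to $L_0RD\,\log^{1/4}(1/\delta)\sqrt{\log J\log n}/\sqrt{n\varepsilon}$, the noise term $s\log J$ is smaller by a $\log n$ factor (precisely why $\log n$ sits in $T$), and $\alpha D=D/(n\log n)$ is lower order, giving the claimed rate.

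For privacy, fix round $t$ and condition on the previously released iterates (hence on $w_t$); the round-$t$ mechanism reports $\argmin_{v\in\cV}\{\langle v,\tfrac1n\sum_{z\in S}\oracle(w_t,z)\rangle+b_{v,t}\}$. Since $\ellb$ is $2L_0$-Lipschitz we have $\|\oracle(w_t,z)\|_\infty\le 2L_0R+\alpha\le 3L_0R$, so changing one datapoint moves each score $\langle v,\tfrac1n\sum_z\oracle(w_t,z)\rangle$ by at most $\tfrac1n\|v\|_1\cdot 2\cdot 3L_0R\le 6L_0RD/n$ (using $\|v\|_1\le D$ after the same translation). By the report-noisy-max guarantee with $\mathsf{Lap}(s)$ noise, this round is $\varepsilon_0$-DP with $\varepsilon_0=O\big(\tfrac{L_0RD/n}{s}\big)=O\big(\tfrac{\varepsilon}{\sqrt{T\log(1/\delta)}}\big)$; advanced composition (Lemma~\ref{lem:adv_comp}) over the $T$ rounds then yields $(\varepsilon,\delta)$-DP after fixing the absolute constants in $s$.

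The step I expect to be the main obstacle is the interface between the optimization and statistical parts: because the Frank--Wolfe iterates are data-dependent, the empirical-to-population passage cannot be done pointwise and genuinely requires a uniform (Rademacher) bound over $\cW$; keeping the approximate-gradient error $\alpha D$ and the report-noisy-max error $O(s\log J)$ simultaneously controlled through the Frank--Wolfe recursion — and checking that, under the chosen schedule, the noise term is dominated by the Moreau smoothing bias $L_0^2/\beta$ — is where the joint calibration of $\beta$, $T$, $s$ and $\alpha$ is essential.
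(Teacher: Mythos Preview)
Your proposal is correct and follows essentially the same route as the paper's proof: noisy Frank--Wolfe on the smoothed empirical loss (with the $\alpha D$ oracle error and the $O(s\log J)$ report-noisy-max error carried through the standard $\mu_t$-weighted recursion), the Moreau bias $O(L_0^2/\beta)$ to undo smoothing, a Rademacher/contraction bound of order $L_0RD\sqrt{\log d/n}$ for the generalization step, and report-noisy-max plus advanced composition for privacy. The only cosmetic differences are that the paper first removes smoothing and then applies the Rademacher bound to the \emph{unsmoothed} GLL (whereas you apply it to $f_\beta$ and remove smoothing afterward), and that you make explicit the translation of $\cW$ needed to bound $\|v\|_1\le D$ in the per-vertex sensitivity calculation, which the paper leaves implicit.
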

\begin{proof}
The proof follows from the analysis of noisy Frank Wolfe from \cite{talwar_private_2016}. Let
$F_{\beta,S}(w) = \frac{1}{n}\sum_{z\in S} f_{\beta}(w,z)$.
Define $w_{\beta,S}^*$ as the minimizer $F_{\beta,S}$ in $\cW$.

Define
$\gamma_t = {\langle \tilde{v_t}, \tilde{\nabla}_t \rangle - \min\limits_{v\in\cV}\langle v,\tilde{\nabla}_t \rangle}.$ Since $F_{\beta,S}$ is $(\beta R^2)$-smooth (by Lemma \ref{lem:f_smoothness}), standard analysis of the Noisy Frank-Wolfe algorithm yields (see, e.g., \cite{TTZ15a}) 
\begin{align*}
\ex{}{F_{\beta,S}(w_T) - F_{\beta,S}(w_{\beta,S}^*)} &\leq O\left(\frac{\beta R^2 D^2}{T}\right) + D\sum_{t=1}^T \mu_t\ex{}{\infnorm{\tilde{\nabla}_t - \nabla F_{\beta,S}(w_t)}} + \sum_{t=1}^T \mu_t\ex{}{\gamma_t}.
\end{align*}

By a standard argument concerning the maximum of a collection of Laplace random variables, we have for all $t\in[T]$
$\ex{}{\gamma_t} \leq 2s\log(|\cV|)$.
Note also that for all $t$, by the approximation guarantee of $\oracle$, we have (with probability 1) 
$\infnorm{\tilde{\nabla}_t - \nabla F_{\beta,S}(w_t)} \ \leq \alpha$. 
Hence, 
\begin{align*}
\ex{}{F_{\beta,S}(w_T) - F_{\beta,S}(w_{\beta,S}^*)} &\leq O\left(\frac{\beta R^2 D^2}{T}\right) + \log(T)\big(D\alpha + s\log(|\cV|)\big) \\
& = O\left(\frac{\beta R^2 D^2}{T}\right) + \log(T)\left( \frac{L_0 R D}{n\log(n)}+ \frac{L_0 R D \sqrt{8T\log(1/\delta)}\log(|\cV|)}{{n\varepsilon}}\right),
\end{align*}
where the second equality follows from the setting of $\alpha=\frac{L_0 R}{n\log(n)}$ and the noise parameter $s$.

Using the same argument as in the proof of Lemma \ref{thm:l2_glm}, we arrive at the following bound on the excess empirical risk for the unsmoothed empirical loss $F_{S}$:
\begin{align*}
\ex{}{F_{S}(w_T) - F_{S}(w_S^*)}
& = O\left( \frac{\beta R^2 D^2}{T} + \frac{L_0 R D \sqrt{72T\log(1/\delta)}\log(|\cV|)\log(T)}{{n\varepsilon}} + \frac{L_0 R D\log(T)}{n\log(n)} + \frac{L_0^2}{\beta}\right).
\end{align*}

By the setting of $\beta=\frac{L_0 \sqrt{n\varepsilon}}{RD\log^{1/4}(1/\delta)\sqrt{\log(|\cV|)\log(n)}}$ and $T=\frac{n\varepsilon}{\log(|\cV|)\log(n)\sqrt{\log(1/\delta)}}$,
\[
\ex{}{F_{S}(w_T) - F_{S}(w_S^*)}
 = O\left( \frac{L_0 R D \log^{1/4}(1/\delta)\sqrt{\log(|\cV|)\log(n)}}{\sqrt{n\varepsilon}}\right).
\]

Via a standard Rademacher-complexity argument, we know that the generalization error of GLLs is bounded as $O\left(\frac{L_0 R D \sqrt{\log{d}}}{\sqrt{n}}\right)$ (see \cite{shalev2014understanding} Theorem 26.15). This gives the claimed bound. 

The privacy guarantee follows almost the same argument as in \cite{TTZ15a}. 
Note that the sensitivity of the approximate gradients generated by $\oracle$ is at most $\frac{3 L_0 R}{n}$ since $f_{\beta}$ is $(2 L_0 R)$-Lipschitz and the error due to the approximate oracle is less than $L_0R$. 
We then guarantee privacy via a straightforward application of the Report-Noisy-Max algorithm \cite{DR14,bhaskar2010discovering} and advanced composition for differential privacy. 

\end{proof}

\section{Algorithms for Non-convex Smooth Losses}\label{sec:smooth-ncnvx}
In this section, we describe differentially private algorithms for non-convex smooth stochastic optimization in the $\ell_p$-setting for $1\leq p\leq 2$. We provide formal convergence guarantees in terms of the stationarity gap (see (\ref{eqn:st-gap}) in Section~\ref{sec:Preliminaries}). Our algorithms are inspired by the variance-reduced stochastic Frank-Wolfe algorithm \cite{zhang2020one}. 
However, our algorithms involve several crucial differences from their non-private counterpart. In particular, they are divided into a number of rounds $R=O(\log(n))$, where each round $r\in \{0, \ldots, R-1\}$ involves $2^r$ updates for the iterate. Each round $r$ starts by computing a fresh estimate for the gradient of the population risk at the current iterate based on a large batch of data points, then such gradient estimate is updated recursively using disjoint batches of decreasing size sampled across the $2^r$ iterations of that round. Using this round-based structure and batch schedule, together with carefully tuned step sizes, allows us to effectively control the privacy budget while attaining small stationarity gap w.r.t.~the population risk. Moreover, our algorithms make a \emph{single pass} on the input sample, i.e., they run in linear time. 


In this section, we assume that $\forall z\in\cZ, ~f(\cdot, z)$ is $L_0$-Lipschitz and $L_1$-smooth loss in the respective $\ell_p$ norm. \rnote{I moved this paragraph from the appendix. The appendix now does not have anything related to this section.} Our algorithms can be applied to general spaces whose dual has a sufficiently smooth norm. To quantify this property, we use the notion of {\em regular spaces} \cite{Juditsky:2008}. Given $\kappa\geq 1$, we say a normed space $(\bE,\norm{\cdot})$ is $\kappa$-regular, if there exists $1\leq \kappa_+\leq\kappa$ and a norm $\|\cdot\|_+$ such that $(\bE,\|\cdot\|_{+})$ is $\kappa_+$-smooth, i.e.,
\begin{equation}\label{eqn:k_smooth}
    \|x+y\|_+^2 \leq \|x\|_+^2 +\langle \nabla(\|\cdot\|_+^2)(x),y\rangle+\kappa_+ \|y\|_+^2 \qquad(\forall x,y\in \bE),
\end{equation}
and $\norm{\cdot}$ and $\|\cdot\|_{+}$ are equivalent with constant $\sqrt{\kappa/\kappa_+}$:
\begin{equation}\label{eqn:k_equiv_norm}
 \norm{x}^2\leq \|x\|_+^2\leq \frac{\kappa}{\kappa_+} \norm{x}^2 \qquad(\forall x\in\bE).    
\end{equation}
\noindent One relevant fact is that $d$-dimensional $\ell_q$ spaces, $2\leq q\leq \infty$, are $\kappa$-regular with $\kappa=\min\left(q-1, 2\log d\right)$.
Also, if $\|\cdot\|$ is a polyhedral norm defined over a space $\bE$ with unit ball ${\cal B}_{\|\cdot\|}=\mbox{conv}({\cal V})$, then 
its dual $(\bE, \dual{\cdot})$ is $(2\log |{\cal V}|)$-regular.

\subsection{Algorithm for Polyhedral and $\ell_1$ Settings}\label{sec:polySFW}
We consider the {\em polyhedral} setup, namely, we consider a normed space $(\bE,\|\cdot\|)$, where the unit ball w.r.t. the norm, ${\cal B}_{\|\cdot\|}$ is a convex polytope with at most $J$ vertices. The feasible set $\cW$, is a polytope with at most $J$ vertices and $\|\cdot\|$-diameter $D>0$. 
\begin{algorithm}[!h]
	\caption{$\cA_\polyfw$: Private Polyhedral Stochastic Frank-Wolfe Algorithm}
	\begin{algorithmic}[1]
		\REQUIRE Dataset $S =  (z_1,\ldots z_n) \in \cZ^n$, 
		~privacy parameters $(\varepsilon,\delta)$, ~ polyhedral set $\cW$ with $J$ vertices $\cV = (v_1,\ldots,v_J)$, number of rounds $R$, batch size $b$, step sizes $\left(\eta_{r, t}: r=0,\ldots, R-1, ~t=0, \ldots, 2^{r}-1\right)$.
		
	
	    \STATE Choose an arbitrary initial point $w_0^0 \in \cW$ 
	    \FOR{$r =0$ to $R-1$}
	    \STATE Let $s_r= 2D(L_0+L_1D)\frac{2^r\sqrt{\log(1/\delta)}}{b\varepsilon}$
		\STATE Draw a batch $B_r^{0}$ of $b$ samples without replacement from $S$ 
		\STATE Compute ${\grd_r^0} = \frac{1}{b} \sum_{z\in B_r^0} \nabla f(w_r^0,z)$ \label{stp:grd0}
		
		\STATE $v_r^0 = \argmin\limits_{v \in \cV}{\{\ip{v}{\grd_r^0} + u_r^0(v)\}}$, where $u_r^0(v) \sim \lap\left(s_r\right)$\label{stp:v_r_0}
		
		\STATE $w^1_r \leftarrow (1-\eta_{r, 0})w^0_r + \eta_{r,0} v_r^0$

		\FOR{$t =1$ to $2^r-1$}
		
		\STATE Draw a batch $B_r^t$ of $b/(t+1)$ samples without replacement from $S$.    
		
		\STATE Compute $\Delta_r^t = \frac{t+1}{b}\sum_{z\in B_r^t}\left(\nabla f(w_r^t, z) - \nabla f(w_r^{t-1}, z) \right)$ \label{stp:grad-var}
		\STATE $\grd_r^t = (1-\eta_{r, t})\left(\grd_r^{t-1} + \Delta_r^t \right) + \eta_{r, t} \frac{t+1}{b}\sum_{z\in B_r^t}\nabla f(w_r^t,z)$\label{stp:grad_est}
		
		\STATE Compute $v_r^t = \argmin_{v \in \cV}{\ip{v}{\grd_r^t}+u_r^t(v)}$, where $u_r^t(v) \sim \lap\left(s_r\right)$\label{stp:v_r_t}
		\STATE $w_r^{t+1} \leftarrow (1-\eta_{r,t})w_r^{t} + \eta_{r,t} v_r^t$ \label{stp:update}
		\ENDFOR
		\STATE $w_{r+1}^0=w_r^{2^r}$
		\ENDFOR
		\STATE Output $\widehat{w}$ uniformly chosen from $\big(w_r^t: r\in \{0,\ldots, R-1\}, t\in \{0, \ldots, 2^{r}-1\}\big)$
	\end{algorithmic}
	\label{alg:polySFW}
\end{algorithm}

\rnote{I moved the paragraph below from the appendix.}
\paragraph{Remark concerning the choice of parameters $R$ and $b$\,:} Note that the total number of samples used the algorithm is $\sum_{r=0}^{R-1}\sum_{t=0}^{2^r-1}b/(t+1)\leq b\sum_{r=0}^R\left(\ln(2^r)+1\right)=b\sum_{r=0}^R\left(r\ln(2)+1\right)<bR^2$. Moreover, the batch drawn in each iteration $(r, t)$ is $b/(t+1)$. Hence, for the algorithm to be properly defined, it suffices to have $bR^2\leq n$ and $b\geq 2^R$. Note that our choices of $R$ and $b$ (in Lemma~\ref{lem:grad-error-polySFW} and Theorem~\ref{thm:convergence-polySFW} below) satisfy these conditions. 
\noindent Note also that we assume w.l.o.g. that $n$ is large enough so that the claimed bound on the stationarity gap is non-trivial. Hence, the choice of $R$ is meaningful. 



The formal guarantees of Algorithm~\ref{alg:polySFW} are stated below. 
\mnote{removed reference to appendix}

\begin{thm}\label{thm:privacy-polySFW}
Let $\eta_{r, t}=\frac{1}{\sqrt{t+1}}~\forall r, t$. Then, Algorithm~\ref{alg:polySFW} is $(\varepsilon, \delta)$-differentially private.
\end{thm}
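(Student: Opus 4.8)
The plan is to establish privacy via \emph{adaptive composition} over the individual Report-Noisy-Max steps, exploiting the algorithm's structure to decouple the per-step sensitivities. First, the output $\widehat{w}$ is a post-processing of the released vertices $\big(v_r^t : r\in\{0,\dots,R-1\},\ t\in\{0,\dots,2^r-1\}\big)$ (plus a data-independent uniform choice), so it suffices to show this sequence is $(\varepsilon,\delta)$-DP. Order the steps $(r,t)$ lexicographically and regard the algorithm as a $k$-fold adaptive composition ($k=\sum_r 2^r$) of mechanisms $M_{(r,t)}$: $M_{(r,t)}$ receives the dataset $S$ and all previously released vertices, internally reconstructs the iterates $w_r^0,\dots,w_r^{t}$ (deterministic functions of $w_0^0$ and the received vertices, via the update in line~\ref{stp:update} together with $w_{r+1}^0=w_r^{2^r}$), recomputes $\grd_r^0,\dots,\grd_r^t$ from the batches $B_r^0,\dots,B_r^t$ and those iterates, and releases $v_r^t$ as the $\lap(s_r)$-noisy argmin over $\cV$ of $v\mapsto\ip{v}{\grd_r^t}$. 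As in the Remark preceding the theorem we assume the batches $\{B_r^t\}$ are mutually disjoint (possible since $bR^2\le n$), and since the batch assignment depends only on indices we condition on it throughout.

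The key structural observation is that unrolling the recursion in line~\ref{stp:grad_est} gives, for fixed iterates, $\grd_r^t = P_{0}^{t}\,\grd_r^0 + \sum_{\tau=1}^{t}P_\tau^t\big[(1-\eta_{r,\tau})\Delta_r^\tau + \eta_{r,\tau}\tfrac{\tau+1}{b}\sum_{z\in B_r^\tau}\nabla f(w_r^\tau,z)\big]$, where $P_\tau^t=\prod_{j=\tau+1}^{t}(1-\eta_{r,j})\in[0,1]$ and the $\tau$-th summand depends on $S$ only through $B_r^\tau$. Hence for neighboring datasets differing in one point $z$, which lies in exactly one batch $B_{r_0}^{\tau_0}$, the mechanism $M_{(r,t)}$ (conditioned on prior releases) produces identically distributed outputs on both datasets unless $r=r_0$ and $t\ge\tau_0$; and when $r=r_0,\ t\ge\tau_0$ only the $\tau_0$-th summand changes. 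Using $\|\nabla f(w,z)\|_*\le L_0$, $L_1$-smoothness, $\|w_{r_0}^{\tau_0}-w_{r_0}^{\tau_0-1}\| = \eta_{r_0,\tau_0-1}\|v_{r_0}^{\tau_0-1}-w_{r_0}^{\tau_0-1}\|\le\eta_{r_0,\tau_0-1}D$, and $\eta_{r,t}=1/\sqrt{t+1}$, the change in $\grd_{r_0}^{t}$ is $O\big(\tfrac{(L_0+L_1D)\sqrt{\tau_0+1}}{b}\big)$; since the argmin is invariant under shifting all scores by the ($v$-independent) constant $\ip{w_{r_0}^{t}}{\grd_{r_0}^t}$, the $\ell_\infty$-sensitivity of the effective score vector $\big(\ip{v-w_{r_0}^{t}}{\grd_{r_0}^t}\big)_{v\in\cV}$ is $O\big(\tfrac{D(L_0+L_1D)\sqrt{\tau_0+1}}{b}\big)\le O\big(\tfrac{D(L_0+L_1D)2^{r_0/2}}{b}\big)$. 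By the standard Report-Noisy-Max / Laplace guarantee, each such $M_{(r_0,t)}$ is $(\varepsilon_{r_0},0)$-DP with $\varepsilon_{r_0}=O\big(\tfrac{D(L_0+L_1D)2^{r_0/2}}{s_{r_0}b}\big)$.

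Finally, substituting $s_{r}=2D(L_0+L_1D)\tfrac{2^{r}\sqrt{\log(1/\delta)}}{b\varepsilon}$ yields $\varepsilon_{r_0}=O\big(\tfrac{\varepsilon}{2^{r_0/2}\sqrt{\log(1/\delta)}}\big)$. For a fixed neighboring pair with differing point in $B_{r_0}^{\tau_0}$, every $M_{(r,t)}$ with $(r,t)\notin\{r_0\}\times\{\tau_0,\dots,2^{r_0}-1\}$ is $(0,0)$-DP, so the composition is privacy-loss-equivalent to a composition of at most $2^{r_0}$ mechanisms, each $(\varepsilon_{r_0},0)$-DP. Applying advanced composition (Lemma~\ref{lem:adv_comp}) with $k=2^{r_0}$ and $\delta'=\delta$ gives $\varepsilon'=\varepsilon_{r_0}\sqrt{2\cdot 2^{r_0}\log(1/\delta)}+2^{r_0}\varepsilon_{r_0}(e^{\varepsilon_{r_0}}-1)=O(\varepsilon)+O\!\big(\varepsilon^2/\log(1/\delta)\big)$ and failure probability $\delta$; crucially the factor $2^{r_0}$ cancels the $\sqrt{2^{r_0}}$ from advanced composition, so the bound is uniform in $r_0$. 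Tracking the constants (this is exactly where the constant $2$ in $s_r$ is used) makes this at most $\varepsilon$, and since the argument holds for every $r_0$ and every neighboring pair, Algorithm~\ref{alg:polySFW} is $(\varepsilon,\delta)$-DP. I expect the main obstacle to be the second paragraph: justifying rigorously that conditioning on prior releases freezes the iterates and decouples $\grd_r^t$ into independent per-batch contributions, and obtaining the $\sqrt{\tau_0+1}\le 2^{r_0/2}$ scaling of the variance-reduced sensitivity (using $L_1$-smoothness for the $\Delta_r^t$ term) so that it is precisely offset by the $2^{r_0}$ in $s_{r_0}$ together with the $\sqrt{2^{r_0}}$ amplification in advanced composition.
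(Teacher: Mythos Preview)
Your proposal is correct and follows essentially the same approach as the paper: both arguments exploit disjointness of the batches so that a single changed data point affects only one round $r_0$, unroll the recursion in line~\ref{stp:grad_est} to express $\grd_r^t$ as a convex combination of per-batch contributions, bound the dual-norm sensitivity by $O\big((L_0+L_1D)\sqrt{\tau_0+1}/b\big)\le O\big((L_0+L_1D)2^{r_0/2}/b\big)$ using Lipschitzness for the $g_r^{\tau_0}$ term and $L_1$-smoothness plus the step-size bound $\|w_r^{\tau_0}-w_r^{\tau_0-1}\|\le D/\sqrt{\tau_0}$ for the $\Delta_r^{\tau_0}$ term, invoke Report-Noisy-Max with the Laplace scale $s_{r_0}$, and finish with advanced composition over the at most $2^{r_0}$ affected iterations. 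Your framing (one global adaptive composition in which all but the round-$r_0$ mechanisms are $(0,0)$-DP for a given neighboring pair) is just an explicit unpacking of the paper's appeal to parallel composition across rounds; your shift-by-$w_{r_0}^t$ trick to get score sensitivity $\le D\|\grd_r^t-\grd_r^{\prime t}\|_\ast$ is in fact slightly more careful than the paper, which tacitly uses $\|v\|\le D$.
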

\begin{proof}
Since the batches used in different rounds $r=0, \ldots, R-1$ are disjoint, it suffices to prove the privacy guarantee for a given round $r.$ The rest of the proof follows by parallel composition of differential privacy and the fact that differential privacy is closed under post processing. For notational brevity, let $g_r^t=\frac{t+1}{b}\sum_{z\in B_r^t}\nabla f(w_r^t, z)$. By unravelling the recursion in the gradient estimator (Step~\ref{stp:grad_est} of Algorithm~\ref{alg:polySFW}) and using the setting of $\eta_{r, t}=\frac{1}{\sqrt{t+1}}$, we have for any $t\in [2^{r}-1]$: 
\begin{align}
    \grd_r^t=a^{(1)}_t\cdot\grd_r^0+\sum_{k=1}^{t}\left(a^{(k)}_t\cdot\Delta_r^k + c^{(k)}_t\cdot g_r^k\right) \label{eqn:unravel_recur_polySFW}
\end{align}
where, for all $k\in [t],$ $a^{(k)}_t=\prod_{j=k}^t (1-\frac{1}{\sqrt{j+1}})$ and $c^{(k)}_t=\frac{1}{\sqrt{k+1}}\prod_{j=k+1}^t (1-\frac{1}{\sqrt{j+1}}).$ Note also that $a^{(k)}_t <1$ and $c^{(k)}_t <1$ for all $t, k$. 

Let $S, S'$ be any neighboring datasets (i.e., differing in exactly one data point). Let $\grd_r^t,\left\{\Delta_r^k: k\in [t]\right\}, \left\{g_r^k: k\in [t]\right\}$ be the quantities above when the input dataset is $S$; and let $\grd_r^{'t}, \left\{\Delta_r^{'k}, : k\in [t]\right\}, \left\{g_r^{'k}:  k\in [t]\right\}$ be the corresponding quantities when the input dataset is $S'$. Now, since the batches $B_r^0, \ldots, B_r^{t}$ are disjoint, changing one data point in the input dataset can affect at most one term in the sum (\ref{eqn:unravel_recur_polySFW}) above, i.e., it affects either the $\grd_r^0$ term, or exactly one term corresponding to some $k\in [t]$ in the sum on the right-hand side. Moreover, since $f$ is $L_0$-Lipschitz, we have $\dual{\grd_r^0-\grd_r^{'0}}\leq L_0/b$, and $\dual{g_r^t-g_r^{'t}}\leq L_0(t+1)/b$. Also, by the  $L_1$-smoothness of $f$ and the form of the update rule (Step~\ref{stp:update} of Algorithm~\ref{alg:polySFW}), for any $k\in \{1, \ldots, 2^r-1\}$, we have $\dual{\nabla f(w_r^k,z)-\nabla f(w_r^{k-1},z)}\leq L_1\norm{w_r^k-w_r^{k-1}}\leq L_1D\eta_{r, k}\leq L_1D/\sqrt{k+1}$. 
Hence, $\dual{\Delta_r^k-\Delta_r^{' k}}\leq \frac{k+1}{b}\frac{L_1D}{\sqrt{k+1}}=L_1 D\sqrt{k+1}/b$. Using these facts, it is then easy to see that for any $t\in [2^r-1]$, 
$$\dual{\grd_r^t-\grd^{' t}_r}\leq \max\left(\frac{L_0}{b}, \frac{(L_0+L_1 D)\sqrt{t+1}}{b}\right)\leq \frac{(L_0+L_1 D)2^{r/2}}{b}.$$
Hence, for each $v\in\cV$, the global sensitivity of $\ip{v}{\grd_r^t}$ is upper bounded by $\frac{D(L_0+L_1 D)2^{r/2}}{b}$. 
\noindent By the privacy guarantee of the Report Noisy Max mechanism \cite{DR14,bhaskar2010discovering}, the setting of the Laplace noise parameter $s_r$ ensures that each iteration $t\in \{0, \ldots, 2^r-1\}$ is $\frac{\varepsilon 2^{-r/2}}{\sqrt{\log(1/\delta)}}$-DP. Thus, by advanced composition (Lemma~\ref{lem:adv_comp}) applied to the $2^r$ iterations in round $r$,  we conclude that the algorithm is $(\varepsilon, \delta)$-DP.
\end{proof}


\begin{thm}\label{thm:convergence-polySFW}
Let $R=\frac{2}{3}\log\left(\frac{n\varepsilon}{\log^2(J)\log^2(n)\sqrt{\log(1/\delta)}}\right)$, $b=\frac{n}{\log^2(n)}$, and $\eta_{r, t}=\frac{1}{\sqrt{t+1}}~~\forall r, t$. Let $\cD$ be any distribution over $\cZ$. Let $S\sim\cD^n$ be the input dataset. The output $\widehat{w}$ of Algorithm~\ref{alg:polySFW} satisfies
\[\E\left[\gap_{F_{\cD}}(\widehat{w})\right]=O\left(D(L_0+L_1D)\cdot\frac{\log^{2/3}(J)\log^{2/3}(n)\log^{1/6}(1/\delta)}{n^{1/3}\varepsilon^{1/3}}\right).\]
\end{thm}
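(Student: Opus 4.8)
The plan is to combine a standard Frank--Wolfe descent argument with a control on the error of the recursive gradient estimator $\grd_r^t$ relative to the true population gradient $\nabla F_{\cD}(w_r^t)$. Write $e_r^t \triangleq \grd_r^t - \nabla F_{\cD}(w_r^t)$ and let $v^\star\in\argmin_{v\in\cV}\ip{v}{\nabla F_{\cD}(w_r^t)}$ be the ``true'' Frank--Wolfe vertex; since $\cW$ is a polytope, $\gap_{F_{\cD}}(w_r^t)=\ip{\nabla F_{\cD}(w_r^t)}{w_r^t-v^\star}$. Using the $L_1$-smoothness of $F_{\cD}$, the update $w_r^{t+1}=(1-\eta_{r,t})w_r^t+\eta_{r,t}v_r^t$, the bound $\norm{v_r^t-w_r^t}\le D$, the near-optimality of $v_r^t$ for the \emph{noisy} objective $\ip{\cdot}{\grd_r^t}+u_r^t(\cdot)$, and H\"older's inequality to pass from $\grd_r^t$ to $\nabla F_{\cD}(w_r^t)$, I would first derive the per-iteration inequality
\begin{align*}
F_{\cD}(w_r^{t+1}) \;\le\; F_{\cD}(w_r^t) - \eta_{r,t}\,\gap_{F_{\cD}}(w_r^t) + \eta_{r,t}\Big(2\max_{v\in\cV}|u_r^t(v)| + D\,\dual{e_r^t}\Big) + \tfrac{L_1 D^2}{2}\,\eta_{r,t}^2 .
\end{align*}
Dividing by $\eta_{r,t}$, taking expectations, and averaging over the $N=2^R-1$ iterates (from which $\widehat w$ is uniform) gives
\begin{align*}
\E\big[\gap_{F_{\cD}}(\widehat w)\big] \;\le\; \frac1N\sum_{r,t}\frac{\E[F_{\cD}(w_r^t)-F_{\cD}(w_r^{t+1})]}{\eta_{r,t}} \;+\; \frac2N\sum_{r,t}\E\big[\max_{v\in\cV}|u_r^t(v)|\big] \;+\; \frac DN\sum_{r,t}\E\big[\dual{e_r^t}\big] \;+\; \frac{L_1D^2}{2N}\sum_{r,t}\eta_{r,t}.
\end{align*}

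Next I would bound the four sums. For the telescoping sum, note that within round $r$ the weights $1/\eta_{r,t}=\sqrt{t+1}$ are increasing, so I would use summation by parts: after shifting $F_{\cD}$ so $\min_{\cW}F_{\cD}=0$ (leaving $\gap$ unchanged) and using $0\le F_{\cD}\le L_0D$, one gets $\sum_{t=0}^{2^r-1}\sqrt{t+1}\,[F_{\cD}(w_r^t)-F_{\cD}(w_r^{t+1})]=O(L_0D\,2^{r/2})$, hence this term is $O(L_0D\,2^{R/2}/2^R)=O(L_0D/2^{R/2})$; likewise the last term is $O(L_1D^2/2^{R/2})$ since $\sum_{t=0}^{2^r-1}\eta_{r,t}=O(2^{r/2})$. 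For the noise term, the max of $J$ i.i.d.\ $\lap(s_r)$ variables has expectation $O(s_r\log J)$, and with $s_r=2D(L_0+L_1D)2^r\sqrt{\log(1/\delta)}/(b\varepsilon)$ this sum is $O\big(D(L_0+L_1D)2^R\sqrt{\log(1/\delta)}\log(J)/(b\varepsilon)\big)$. For the gradient-error term I would invoke (the appropriate form of) Lemma~\ref{lem:grad-error-polySFW}: unrolling the recursion $e_r^t=(1-\eta_{r,t})e_r^{t-1}+\nu_r^t$, where $\nu_r^t$ is a martingale difference with $\E\dual{\nu_r^t}^2=\tilde O(\kappa(L_0+L_1D)^2/b)$ (using $\kappa$-regularity of the dual space to bound sums of bounded mean-zero vectors in the smooth norm $\|\cdot\|_+$, the cancellation $\eta_{r,t}^2(t+1)=1$, and $L_1$-smoothness to bound $\dual{\nabla f(w_r^t,z)-\nabla f(w_r^{t-1},z)}\le L_1D\eta_{r,t-1}$), yields $\E\dual{e_r^t}^2=\tilde O(\kappa(L_0+L_1D)^2\sqrt{t+1}/b)$, so this sum is $\tilde O\big(D\sqrt\kappa(L_0+L_1D)\,2^{R/4}/\sqrt b\big)$.

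Finally I would plug in $b=n/\log^2 n$, $R=\frac23\log\!\big(n\varepsilon/[\log^2(J)\log^2(n)\sqrt{\log(1/\delta)}]\big)$ and $\kappa=O(\log J)$. The first and fourth terms give $O\big(D(L_0+L_1D)/2^{R/2}\big)=O\big(D(L_0+L_1D)\,(\log^2(J)\log^2(n)\sqrt{\log(1/\delta)}/(n\varepsilon))^{1/3}\big)$, which is exactly the claimed rate; the noise term, after cancelling logs, is $O\big(D(L_0+L_1D)(\log(1/\delta))^{1/6}\log^{2/3}(n)/[(n\varepsilon)^{1/3}\log^{1/3}(J)]\big)$, which is $O$ of the claimed rate; and the gradient-error term is $\tilde O\big(D(L_0+L_1D)\varepsilon^{1/6}/n^{1/3}\big)\le\tilde O\big(D(L_0+L_1D)/(n\varepsilon)^{1/3}\big)$ for $\varepsilon\le 1$, hence lower order. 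Summing the three contributions yields the theorem. One should also check feasibility of the batch schedule (total samples $<bR^2\le n$ and $b\ge 2^R$), which holds for these choices.

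The main obstacle I expect is the gradient-error estimate and its interaction with the decreasing batch schedule $b/(t+1)$: getting the ``equilibrium'' bound $\E\dual{e_r^t}^2=\tilde O(\kappa(L_0+L_1D)^2\sqrt{t+1}/b)$ out of the recursion requires both the $\kappa$-regularity machinery (to avoid a dimension-dependent blowup when measuring variance in an $\ell_\infty$-type dual norm) and the precise calibration $\eta_{r,t}=1/\sqrt{t+1}$ that keeps the recursive and fresh-gradient contributions in balance across a round. Everything else --- the Frank--Wolfe descent inequality, the summation-by-parts handling of the telescope with increasing weights, and the final parameter balancing --- is routine once that estimate is in hand.
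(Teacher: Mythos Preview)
Your proposal is correct and follows essentially the same route as the paper: the same Frank--Wolfe descent inequality (the paper writes the noise contribution as $\alpha_r^t=\ip{v_r^t}{\grd_r^t}-\min_v\ip{v}{\grd_r^t}$ rather than $2\max_v|u_r^t(v)|$, but both satisfy $\E[\cdot]=O(s_r\log J)$), the same Abel/summation-by-parts treatment of the weighted telescope $\Gamma_r$, the same Laplace-max bound, and the same recursive variance analysis for $e_r^t$ via $\kappa$-regularity that constitutes Lemma~\ref{lem:grad-error-polySFW}. One small slip: the recursion in the $\kappa_+$-smooth norm actually yields $\E\dual{e_r^t}^2=\tilde O\big(\kappa^2(L_0+L_1D)^2\sqrt{t+1}/b\big)$ (you pick up $\kappa_+$ from the outer smoothness step and another $\kappa$ when passing $\eqnorm{\nu_r^t}^2\le(\kappa/\kappa_+)\dual{\nu_r^t}^2$), so the first-moment bound carries $\log J$ rather than $\sqrt{\log J}$; this is harmless since, as you note, the gradient-error term is lower order after the final substitution.
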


The proof of convergence will rely on the following lemma.
\begin{lem}\label{lem:grad-error-polySFW}
Let $\cD$ be any distribution over $\cZ$. Let $S\sim\cD^n$ be the input dataset of Algorithm ~\ref{alg:polySFW}. Let the step sizes $\eta_{r, t}=\frac{1}{\sqrt{t+1}}~~\forall r, t$. For every $r\in \{0,\ldots, R-1\}, ~t \in \{0, \ldots, 2^r-1\}$, the recursive gradient estimator $\grd_r^t$ satisfies \vspace{-0.1cm} 
	 \[ \textstyle \ex{}{\dual{\grd^t_r-\nabla F_{\cD}(w_r^t)}}
	\leq  4 L_0 \sqrt{\frac{\log(J)}{b}} \left(1-\frac{1}{\sqrt{t+1}}\right)^{t+1} + 4 \left( L_1 D + L_0 \right)\frac{\log(J)}{\sqrt{b}}(t+1)^{1/4}.
	\]
\end{lem}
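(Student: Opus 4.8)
The plan is to control the error vector $e_r^t := \grd_r^t - \nabla F_{\cD}(w_r^t)$ by unrolling the recursion for $\grd_r^t$, recognizing $e_r^t$ as a deterministically‑weighted martingale‑difference sum, bounding the second moment of each term, and summing these using that $(\bE,\dual{\cdot})$ is $(2\log J)$‑regular — this last fact is exactly what replaces a naive $\sqrt d$ by $\log J$. Since the recursion defining $\grd_r^t$ (Step~\ref{stp:grad_est}) is linear in its inputs and maps true population gradients to true population gradients, subtracting the two copies of~\eqref{eqn:unravel_recur_polySFW} gives, with the same coefficients $a^{(k)}_t,c^{(k)}_t$ used in the privacy proof,
\[
e_r^t = a^{(1)}_t\, e_r^0 + \sum_{k=1}^t\big(a^{(k)}_t\,\zeta_r^k + c^{(k)}_t\,\nu_r^k\big),
\]
where $e_r^0 = \frac1b\sum_{z\in B_r^0}\big(\nabla f(w_r^0,z)-\nabla F_{\cD}(w_r^0)\big)$, $\nu_r^k = g_r^k - \nabla F_{\cD}(w_r^k)$ with $g_r^k = \frac{k+1}{b}\sum_{z\in B_r^k}\nabla f(w_r^k,z)$, and $\zeta_r^k = \Delta_r^k - \big(\nabla F_{\cD}(w_r^k)-\nabla F_{\cD}(w_r^{k-1})\big)$; recall $c^{(k)}_t = \tfrac{1}{\sqrt{k+1}}a^{(k+1)}_t$ and $a^{(k)}_t\le a^{(k+1)}_t\le1$.

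Next I would set up the martingale structure. Let $\cF_r^{k-1}$ collect all randomness before the draw of batch $B_r^k$ (previous rounds, the batches $B_r^0,\dots,B_r^{k-1}$, and the Laplace noise). Because $S$ is i.i.d.\ and the batches are pairwise disjoint, $B_r^k$ is a fresh i.i.d.\ sample conditionally on $\cF_r^{k-1}$ while $w_r^{k-1},w_r^k$ are $\cF_r^{k-1}$‑measurable; hence $\E[\zeta_r^k\mid\cF_r^{k-1}]=\E[\nu_r^k\mid\cF_r^{k-1}]=0$, so $e_r^0$ together with $X_k := a^{(k)}_t\zeta_r^k+c^{(k)}_t\nu_r^k$ ($k\ge1$) is a martingale‑difference sequence. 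Each $X_k$ equals $\tfrac{k+1}{b}$ times a sum of $|B_r^k|=b/(k+1)$ conditionally i.i.d.\ mean‑zero vectors of dual norm at most $a^{(k)}_t\cdot2L_1\norm{w_r^k-w_r^{k-1}}+c^{(k)}_t\cdot2L_0 \le a^{(k)}_t\cdot 2L_1D/\sqrt k + c^{(k)}_t\cdot 2L_0$, using $L_1$‑smoothness and $\norm{w_r^k-w_r^{k-1}}=\eta_{r,k-1}\norm{v_r^{k-1}-w_r^{k-1}}\le D/\sqrt k$; and $e_r^0$ is a $b$‑term average of mean‑zero vectors of dual norm $\le 2L_0$. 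The key tool is the $\kappa$‑regularity inequality $\E\dual{\sum_i Z_i}^2\le\kappa\sum_i\E\dual{Z_i}^2$ for martingale differences $Z_i$, obtained by iterating~\eqref{eqn:k_smooth} for the smooth comparison norm and converting back via~\eqref{eqn:k_equiv_norm}, with $\kappa=2\log J$. Applying it inside each $X_k$ (letting the $(k+1)$‑factors cancel against $|B_r^k|$ and against $(c^{(k)}_t)^2 = (a^{(k+1)}_t)^2/(k+1)$) gives $\E\dual{e_r^0}^2\le 8L_0^2\log J/b$ and
\[
\E\dual{X_k}^2 \le \frac{C\log J}{b}\Big((a^{(k)}_t)^2 L_1^2 D^2 + (a^{(k+1)}_t)^2 L_0^2\Big)
\]
for an absolute constant $C$.

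Finally, by the triangle inequality $\E\dual{e_r^t}\le a^{(1)}_t\,\E\dual{e_r^0}+\E\dual{\sum_{k=1}^t X_k}$ — splitting this way is what yields $\sqrt{\log J}$ rather than $\log J$ on the initialization term. The first summand is $\le 4L_0\sqrt{\log J/b}\cdot a^{(1)}_t$, and $a^{(1)}_t=\prod_{j=1}^t(1-\tfrac1{\sqrt{j+1}})$ is bounded by the claimed decay factor (a short monotonicity argument; the mild gap for very small $t$ is absorbed into the second term). For the second summand, the $\kappa$‑regularity inequality across $k$ and the per‑term bound give
\[
\E\dual{\sum_{k=1}^t X_k}^2 \le 2\log J\sum_{k=1}^t\E\dual{X_k}^2 \le \frac{C'(\log J)^2(L_1^2D^2+L_0^2)}{b}\sum_{k=1}^t(a^{(k)}_t)^2 .
\]
The decisive estimate — and the step I expect to be the main obstacle — is $\sum_{k=1}^t(a^{(k)}_t)^2 = O(\sqrt{t+1})$: writing $a^{(k)}_t = P_t/P_{k-1}$ with $P_k=\prod_{j=1}^k(1-\tfrac1{\sqrt{j+1}}) = \Theta(e^{-2\sqrt k})$, the sum is $P_t^2\sum_{k=0}^{t-1}P_k^{-2}$, which is dominated by the terms $k\approx t$ and is $\Theta(\sqrt t)$. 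The crude bound $\sum_k(a^{(k)}_t)^2\le t$ would only give a $\sqrt{t+1}$ rate; it is precisely the decay of the mixing weights forced by the step sizes $\eta_{r,t}=1/\sqrt{t+1}$ that improves this to $(t+1)^{1/4}$, which is the whole point of the variance‑reduced estimator. Substituting, taking a square root, and collecting constants yields $\E\dual{\sum_k X_k}=O\big((L_1D+L_0)\tfrac{\log J}{\sqrt b}(t+1)^{1/4}\big)$, and combining the two summands gives the stated inequality.
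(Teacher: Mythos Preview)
Your proposal is correct and uses the same ingredients as the paper --- the error decomposition of $\grd_r^t-\nabla F_{\cD}(w_r^t)$, the $\kappa$-regularity of the dual norm with $\kappa=2\log J$, and the $\sqrt{t+1}$ bound coming from the geometric decay of the mixing weights --- but the packaging differs. The paper does not unroll $e_r^t$ and then bound the resulting martingale; instead it writes a one-step recursion for the squared error $\gamma_r^t=\E[\,\|e_r^t\|_+^2\mid\cQ_r^{t-1}]$ directly, namely $\gamma_r^t\le(1-1/\sqrt{t+1})^2\gamma_r^{t-1}+C\kappa_+\log(J)/b$, and unrolls \emph{that}. This yields $\gamma_r^0(1-1/\sqrt{t+1})^{2t}$ plus $C\log^2(J)\sqrt{t+1}/b$ via the elementary geometric sum $\sum_{j\ge0}(1-1/\sqrt{t+1})^{2j}\le\sqrt{t+1}$, which is exactly your ``decisive estimate'' in disguise: the same bound $a_t^{(k)}\le(1-1/\sqrt{t+1})^{t-k+1}$ gives $\sum_{k=1}^t(a_t^{(k)})^2\le\sqrt{t+1}$ immediately, so your heavier asymptotic argument via $P_k=\Theta(e^{-2\sqrt k})$, while correct, is not needed. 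The paper's recursive formulation is a bit cleaner since it avoids having to invoke the martingale version of the regularity inequality twice (once inside each batch, once across $k$); conversely, your unrolled form makes the martingale structure and the role of the step sizes more explicit.
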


\begin{proof}
Recall that we consider the {\em polyhedral} setup, where the feasible set $\cW$ is a polytope with at most $J$ vertices. Since the norm is polyhedral, the dual norm is also polyhedral. Hence, $(\bE,\dual{\cdot})$ is $(2\log(J))$-regular as discussed earlier in this section.

Fix any $r\in \{0, \ldots, R-1\}$. For any $t\in \{1, \ldots, 2^r-1\},$ we can write
\begin{align*}
\grd_r^t - \nabla F_\cD(w_r^t) &=(1-\eta_{r, t})~\left[\grd_r^{t-1} - \nabla F_{\cD}(w_r^{t-1})\right] + (1-\eta_{r, t})~\left[\Delta_r^t - \left(\nabla F_{\cD}(w_r^t)- \nabla F_{\cD}(w_r^{t-1})\right) \right]\\
&\quad   + \eta_{r, t}~\left[\frac{t+1}{b}\sum_{z\in B_r^t}\nabla f(w_r^t,z) - \nabla F_{\cD}(w_r^t)   \right].
\end{align*}
Let $\bDel_r^t \triangleq \nabla F_{\cD}(w_r^t)- \nabla F_{\cD}(w_r^{t-1})$. 
Recall that $\dual{\cdot}$ is $(2\log(J))$-regular, 
and denote $\eqnorm{\cdot}$ the corresponding $\kappa_+$-smooth norm, where $1 \leq \kappa_+ \leq 2 \log(J)$. First we will bound the variance in $\eqnorm{\cdot}$, and then we will derive the result using the equivalence property~\eqref{eqn:k_equiv_norm}.
Let $\cQ_r^t$ be the $\sigma$-algebra generated by the randomness in the data and the algorithm up until iteration $(r, t)$, i.e., the randomness in $\left\{\left(B_k^j, \left(u_k^j(v): v\in\cV\right)\right): 0\leq k\leq r, 0\leq j\leq t\right\}$. Define $\gamma_r^t\triangleq \E\left[\eqnorm{\grd_r^t-\nabla F_{\cD}(w_r^t)}^2~\vert~\cQ_r^{t-1}\right]$. By property~\eqref{eqn:k_smooth}, observe that 
\begin{align*}
\gamma_r^t &\leq (1-\eta_{r, t})^2 \gamma_r^{t-1}+ \kappa_+ \ex{}{\eqnorm{(1-\eta_{r, t})\left( \Delta_r^t - \bDel_r^t \right) + \eta_{r,t} \left(\frac{t+1}{b}\sum_{z\in B_r^t}\nabla f(w_r^t, z) - \nabla F_{\cD}(w_r^t)   \right)}^2\Bigg\vert\cQ_r^{t-1}} \\
&\leq  (1-\eta_{r, t})^2 \gamma_r^{t-1} +2\kappa_+(1-\eta_{r, t})^2\ex{}{\eqnorm{ \Delta_r^t - \bDel_r^t }^2\Bigg\vert\cQ_r^{t-1}}
+2\kappa_+\eta_{r, t}^2\ex{}{\eqnorm{\frac{t+1}{b}\sum_{z\in B_r^t}\nabla f(w_r^t, z) - \nabla F_{\cD}(w_r^t) }^2\Bigg\vert\cQ_r^{t-1}}. 
\end{align*}
In the first inequality, we used the fact that $\ex{z\sim \cD}{\nabla f(w,z)} = \nabla F_\cD(w)$, $\ex{z\sim \cD}{\Delta_r^t} = \bDel_r^t$, and the independence of $\left( \grd_r^{t-1} - \nabla F_{\cD}(w_r^{t-1}) \right)$ and $(1-\eta_{r, t})\left( \Delta_r^t - \bDel_r^t \right) + \eta_{r,t} \left(\nabla f(w_r^t,z) - \nabla F_{\cD}(w_r^t)   \right)$ conditioned on $\cQ_r^{t-1}$. The second inequality follows by triangle inequality and the fact that $(a+b)^2 \leq 2a^2 + 2b^2$ for $a,b \in \re$. Hence, using~\eqref{eqn:k_equiv_norm}
and $L_1$-smoothness of the loss, we can obtain the following bound inductively:
\begin{align*}
\ex{}{\eqnorm{ \Delta_r^t - \bDel_r^t }^2\Bigg\vert\cQ_r^{t-1}}&=\ex{}{\eqnorm{ \frac{t+1}{b}\sum_{z\in B_r^t}\left(\nabla f(w_r^t, z) -\nabla f(w_r^{t-1}, z) -\bDel_r^t \right)}^2\Bigg\vert\cQ_r^{t-1}}\\ 
&\leq \frac{(t+1)^2}{b^2}\ex{}{\eqnorm{\sum_{z\in B_r^t\setminus\{z'\}}\left(\nabla f(w_r^t, z) -\nabla f(w_r^{t-1}, z) -\bDel_r^t \right)}^2\Bigg\vert\cQ_r^{t-1}}\\
&~ + \kappa_+\frac{(t+1)^2}{b^2}\ex{}{\eqnorm{\nabla f(w_r^t, z') -\nabla f(w_r^{t-1}, z') -\bDel_r^t }^2\Bigg\vert\cQ_r^{t-1}}\\
&\leq \kappa_+\frac{(t+1)^2}{b^2}\sum_{z\in B_r^t}\ex{}{\eqnorm{\nabla f(w_r^t, z) -\nabla f(w_r^{t-1}, z) -\bDel_r^t }^2\Bigg\vert\cQ_r^{t-1}}\\
&\leq \kappa\frac{(t+1)^2}{b^2}\sum_{z\in B_r^t}\ex{}{\dual{\nabla f(w_r^t, z) -\nabla f(w_r^{t-1}, z) -\bDel_r^t }^2\Bigg\vert\cQ_r^{t-1}}\\
&\leq \frac{4\left(L_1D\right)^2\log(J)\eta_{r, t}^2 \,(t+1)}{b}, 
\end{align*}
where the inequality before the last one follows from the fact that $\kappa_+\leq \kappa$, and the last inequality follows from the fact that $\kappa=2\log(J)$.
Similarly, since the loss is $L_0$-Lipschitz, using the same inductive approach, we can bound
\begin{align*}
& \ex{}{\eqnorm{\frac{t+1}{b}\sum_{z\in B_r^t}\nabla f(w_r^t,z) - \nabla F_{\cD}(w_r^t) }^2\Bigg\vert\cQ_r^{t-1}} \leq 
\frac{4L_0^2\log(J)\,(t+1)}{b}. 
\end{align*}
Using the above bounds and the setting of $\eta_{r, t}$, we reach the following recursion
\begin{align*}
    \gamma_r^t &\leq \left(1-\frac{1}{\sqrt{t+1}}\right)^2\gamma_r^{t-1}+\frac{8\kappa_+(L_0^2+L_1^2D^2)\log(J)}{b}.
\end{align*}
Unravelling the recursion, we can further bound $\gamma_r^{t}$ as:
\begin{align}
\gamma_r^t&\leq \gamma_r^0\left(1-\frac{1}{\sqrt{t+1}}\right)^{2t}+\frac{8\kappa_+(L_0^2+L_1^2D^2)\log(J)}{b}\sum_{j=0}^{t-1}\left(1-\frac{1}{\sqrt{t+1}}\right)^{2j}\nonumber\\
&\leq \gamma_r^0\left(1-\frac{1}{\sqrt{t+1}}\right)^{2t}+\frac{8\kappa_+(L_0^2+L_1^2D^2)\log(J)\sqrt{t+1}}{b},\label{eqn:recur_gamma_polySFW}
\end{align}
where the last inequality follows from the fact that $\sum_{j=0}^{t-1}(1-\frac{1}{\sqrt{t+1}})^{2j}\leq \frac{1}{1-(1-\frac{1}{\sqrt{t+1}})^2}\leq \sqrt{t+1}.$ 

Moreover, observe that we can bound $\gamma_r^0$ using the same inductive approach we used earlier: 
\begin{align*}
    \gamma_r^0&=\ex{}{\eqnorm{\frac{1}{b}\sum_{z\in B_r^0}\nabla f(w_r^0,z) - \nabla F_{\cD}(w_r^0) }^2\Bigg\vert\cQ_{r-1}^{2^{r-1}-1}}\\
    &\leq \frac{1}{b^2} \left( \ex{}{\eqnorm{\sum_{z\in B_r^0\setminus\{z'\}}\left(\nabla f(w_r^0, z)-\nabla F_\cD(w_r^0) \right)}^2\Bigg\vert\cQ_{r-1}^{2^{r-1}-1}}  + \kappa_+ \ex{}{\eqnorm{\nabla f(w_r^0, z')-\nabla F_\cD(w_r^0)}^2\Bigg\vert\cQ_{r-1}^{2^{r-1}-1}} \right) \\
    &\leq \frac{\kappa_+}{b^2}  \sum_{z\in B_r^0} \ex{}{\eqnorm{\nabla f(w_r^0, z)-\nabla F_\cD(w_r^0)}^2\Bigg\vert\cQ_{r-1}^{2^{r-1}-1}}\\
    &\leq \frac{4L_0^2\log(J)}{b}.
\end{align*}
Plugging this in (\ref{eqn:recur_gamma_polySFW}), we can finally arrive at 
\begin{align*}
\ex{}{\eqnorm{\grd_r^t-\nabla F_{\cD}(w_r^t)}^2}&\leq \frac{4L_0^2\log(J)}{b}\Big(1-\frac{1}{\sqrt{t+1}}\Big)^{2t}+\frac{8\kappa_+(L_0^2+L_1^2D^2)\log(J)\sqrt{t+1}}{b}\\
&\leq \frac{4L_0^2\log(J)}{b}\Big(1-\frac{1}{\sqrt{t+1}}\Big)^{2t}+\frac{16(L_0^2+L_1^2D^2)\log^2(J)\sqrt{t+1}}{b},
\end{align*}
where the last inequality follows from the fact that $\kappa_+\leq\kappa =2\log(J)$. 

By property~\eqref{eqn:k_equiv_norm} of regular norms and using Jensen's inequality together with the subadditivity of the square root, we reach the desired bound: 
\begin{align*}
\ex{}{\dual{\grd_r^t-\nabla F_{\cD}(w_r^t)}}&\leq\sqrt{\ex{}{\eqnorm{\grd_r^t-\nabla F_{\cD}(w_r^t)}^2}}\\
&\leq 4 L_0 \sqrt{\frac{\log(J)}{b}} \left(1-\frac{1}{\sqrt{t+1}}\right)^{t} + 4 \left( L_1 D + L_0 \right)\frac{\log(J)}{\sqrt{b}}(t+1)^{1/4}.
\end{align*}
\end{proof}

\paragraph{Proof of Theorem \ref{thm:convergence-polySFW}}
For any $r \in \{0, \ldots, R-1\}$ and $t\in \{0, \ldots, 2^r-1\}$, let $\alpha_r^t \triangleq \ip{v_r^t}{\grd_r^{t}} - \min_{v \in \cV}{\ip{v}{\grd_r^{t}}}$; and let $v^{\ast}_{r, t}=\argmin\limits_{v\in\cW}\ip{\nabla F_{\cD}(w_r^t)}{v-w_r^t}$.
By smoothness and convexity of $F_\cD$, observe
	\begin{align*}
	\textstyle     F_\cD(w_r^{t+1}) & \textstyle \leq F_\cD(w_r^{t}) + \ip{\nabla F_\cD(w_r^t)}{w_r^{t+1} - w_r^t} + \frac{L_1}{2}\|w_r^{t+1} - w_r^t\|^2\\
	&\textstyle \leq  F_\cD(w_r^{t}) +  \eta_{r,t} \ip{\nabla F_\cD(w_r^t) - \grd_r^t}{v_r^t - w_r^t} + \eta_{r, t} \ip{\grd_r^t}{v_r^t -w_r^t}+ \frac{L_1 D^2 \eta_{r, t}^2}{2}\\
	& \textstyle\leq F_\cD(w_r^{t}) +  \eta_{r, t} \ip{\nabla F_\cD(w_r^t) - \grd_r^t}{v_r^t - w_r^t}  + \eta_{r, t} \ip{\grd_r^t}{v^{\ast}_{r,t} -w_r^t} + \eta_{r, t} \alpha_r^t + \frac{L_1 D^2 \eta_{r, t}^2}{2} \\
	& \textstyle= F_\cD(w_r^{t}) +  \eta_{r, t} \ip{\nabla F_\cD(w_r^t) - \grd_r^t}{v_r^t - v^{\ast}_{r,t}} -\eta_{r, t}\ip{\nabla F_{\cD}(w_r^t)}{v^{\ast}_{r,t} -w_r^t} + \eta_{r,t} \alpha_r^t +\frac{L_1 D^2 \eta_{r,t}^2}{2} \\
	& \textstyle\leq F_\cD(w_r^{t}) +  \eta_{r, t} D \dual{\nabla F_\cD(w_r^t) - \grd_r^t} -\eta_{r, t}\gap_{F_{\cD}}(w_r^t) + \eta_{r,t} \alpha_r^t +\frac{L_1 D^2 \eta_{r,t}^2}{2}.
	\end{align*}
\cnote{I believe I have asked this in the past. But could it be possible to refine the upper bound above of $\eta_{r, t} D \dual{\nabla F_\cD(w_r^t)- \grd_r^t}$, in expectation? Something along the lines of upper bounding $\eta_{r, t} \ip{\nabla F_\cD(w_r^t) - \grd_r^t}{v_r^t - v^{\ast}_{r,t}}$, by estimating the proba of sampling a vertex with far different value than $v_{r,t}^{\ast}$. This is more of an open-ended question.}\rnote{That's a fair question. Let's think about it. (But, as far as Neurips submission is concerned, I believe we should keep what we have now for consistency.)} \cnote{Absolutely! I am just keeping this question for the record.}
Hence, we have  
$$\E[\gap_{F_{\cD}}(w_r^t)]\leq \frac{\E[F_{\cD}(w_r^t)-F_{\cD}(w_r^{t+1})]}{\eta_{r,t}}+\frac{L_1 D^2 \eta_{r,t}}{2}+D\,\E\left[\dual{\grd_r^t-\nabla F_{\cD}(w_r^t)}\right]+\E[\alpha_r^t].$$
Note that by a standard argument $\ex{}{\alpha_r^t}\leq 2s_r\log(J)=\frac{4D(L_0+L_1D)2^r\log(J)\sqrt{\log(1/\delta)}}{b\varepsilon}$. Thus, given the bound on $\ex{}{\dual{\grd_r^t-\nabla F_{\cD}(w_r^t)}}$ from Lemma~\ref{lem:grad-error-polySFW}, we have
\begin{align*}
    \E[\gap_{F_{\cD}}(w_r^t)]\leq& \sqrt{t+1}\left(\E[F_{\cD}(w_r^t)-F_{\cD}(w_r^{t+1})]\right)+\frac{L_1 D^2}{2\sqrt{t+1}}+4L_0D\sqrt{\frac{\log(J)}{b}} \left(1-\frac{1}{\sqrt{t+1}}\right)^{t}\\
    &+ 4D \left( L_1 D + L_0 \right)\frac{\log(J)}{\sqrt{b}}(t+1)^{1/4}+4D(L_0+L_1D)\frac{\log(J)\sqrt{\log(1/\delta)}}{b\varepsilon}\,2^r.
\end{align*}
For any given $r \in \{0, \ldots, R-1\},$ we now sum both sides of the above inequality over $t\in \{0, \ldots, 2^{r}-1\}$. 

Let $\Gamma_r\triangleq\sum_{t=0}^{2^{r}-1}\sqrt{t+1}\left(\E[F_{\cD}(w_r^t)-F_{\cD}(w_r^{t+1})]\right).$ Observe that
\begin{align*}
 \sum_{t=0}^{2^r-1}\E[\gap_{F_{\cD}}(w_r^t)]\leq&~ \Gamma_r+\frac{L_1D^2}{2}\sum_{t=1}^{2^r}\frac{1}{\sqrt{t}}+4L_0D\sqrt{\frac{\log(J)}{b}}\,\sum_{t=0}^{2^r-1}\Big(1-\frac{1}{\sqrt{t+1}}\Big)^t\\
 &\quad +4D(L_0+DL_1)\frac{\log(J)}{\sqrt{b}}\sum_{t=1}^{2^r}t^{1/4}+4D(L_0+L_1D)\frac{\log(J)\sqrt{\log(1/\delta)}}{b\varepsilon}2^{2r}\\
 \leq &~\Gamma_r + L_1D^2\, 2^{r/2}+4L_0D\sqrt{\frac{\log(J)}{b}}\sum_{t=0}^{2^r-1}(1-2^{-r/2})^t\\
 &\quad + 8D(L_0+DL_1)\frac{\log(J)}{\sqrt{b}}2^{5r/4} +4D(L_0+L_1D)\frac{\log(J)\sqrt{\log(1/\delta)}}{b\varepsilon}2^{2r}\\
 \leq &~\Gamma_r + L_1D^2 2^{r/2}+4L_0D\sqrt{\frac{\log(J)}{b}}2^{r/2}+8D(L_0+L_1D)\frac{\log(J)}{\sqrt{b}}2^{5r/4}\\ 
 &\quad +4D(L_0+L_1D)\frac{\log(J)\sqrt{\log(1/\delta)}}{b\varepsilon}2^{2r}.
\end{align*}
Next, we bound $\Gamma_r$. Before we do so, note that for all $z\in\cZ$, $f(\cdot, z)$ is $L_0$-Lipschitz and the $\norm{\cdot}$-diameter of $\cW$ is bounded by $D$, hence, w.l.o.g., we will assume that the range of $f(\cdot, z)$ lies in $[-L_0 D, L_0 D]$. This implies that the range of $F_{\cD}$ lies in $[-L_0D, L_0D]$. Now, observe that
\begin{align*}
    \Gamma_r=&\sum_{t=0}^{2^{r}-1}\sqrt{t+1}\,\,\left(\E[F_{\cD}(w_r^t)-F_{\cD}(w_r^{t+1})]\right)\\
    =&\sum_{t=0}^{2^{r}-1}\left(\sqrt{t+1}\,\,\ex{}{F_\cD(w_r^t)}-\sqrt{t+2}\,\,\ex{}{F_{\cD}(w_r^{t+1})}\right)+\sum_{t=0}^{2^{r}-1}\left(\sqrt{t+2} -\sqrt{t+1}\right)\,\ex{}{F_\cD(w_r^{t+1})}\\
    \leq& \sum_{t=0}^{2^{r}-1}\left(\sqrt{t+1}\,\,\ex{}{F_\cD(w_r^t)}-\sqrt{t+2}\,\,\ex{}{F_{\cD}(w_r^{t+1})}\right)+L_0 D\sum_{t=0}^{2^{r}-1}\left(\sqrt{t+2} -\sqrt{t+1}\right)
\end{align*}
Note that both sums on the right-hand side are telescopic. Hence, we get
\begin{align*}
    \Gamma_r \leq& \ex{}{F_\cD(w_r^0)-\sqrt{2^{r}+1}F_\cD(w_r^{2^r})}+L_0D \,2^{r/2}\\
    =& \ex{}{F_\cD(w_r^0)-F_\cD(w_r^{2^r})}-\left(\sqrt{2^{r}+1}-1\right)\ex{}{F_\cD(w_r^{2^r})}+ L_0D\, 2^{r/2}\\
    \leq& 3 L_0D \,2^{r/2}.
\end{align*}
Thus, we arrive at 
\begin{align*}
 \sum_{t=0}^{2^r-1}\E[\gap_{F_{\cD}}(w_r^t)]\leq&~3D(L_0 +L_1D) 2^{r/2}+4L_0D\sqrt{\frac{\log(J)}{b}}2^{r/2}+8D(L_0+L_1D)\frac{\log(J)}{\sqrt{b}}2^{5r/4}\\ 
 &\quad +4D(L_0+L_1D)\frac{\log(J)\sqrt{\log(1/\delta)}}{b\varepsilon}2^{2r}.
\end{align*}
Now, summing over all rounds $r\in \{0, \ldots, R-1\}$, we have
\begin{align*}
 \sum_{r=0}^{R-1}\,\,\sum_{t=0}^{2^r-1}\E[\gap_{F_{\cD}}(w_r^t)]\leq&~9D(L_0+L_1D)2^{R/2}+12L_0D\sqrt{\frac{\log(J)}{b}}2^{R/2}+6D(L_0+L_1D)\frac{\log(J)}{\sqrt{b}}2^{5R/4}\\ &~ +2D(L_0+L_1D)\frac{\log(J)\sqrt{\log(1/\delta)}}{b\varepsilon}2^{2R}.
\end{align*}

Recall that the output $\widehat{w}$ is uniformly chosen from the set of all $2^R$ iterates. By taking expectation with respect to that random choice and using the above, we get
\begin{align*}
\E[\gap_{F_{\cD}}(\widehat{w})]&=\frac{1}{2^R}\sum_{r=0}^{R-1}\,\,\sum_{t=0}^{2^r-1}\E[\gap_{F_{\cD}}(w_r^t)]\\
&\leq 9D(L_0+L_1D)2^{-R/2}+12L_0D\sqrt{\frac{\log(J)}{b}}2^{-R/2}+6D(L_0+L_1D)\frac{\log(J)}{\sqrt{b}}2^{R/4}\\ &~ +2D(L_0+L_1D)\frac{\log(J)\sqrt{\log(1/\delta)}}{b\varepsilon}2^{R}.
\end{align*}
Recall that $R=\frac{2}{3}\log\left(\frac{n\varepsilon}{\log^2(J)\log^2(n)\sqrt{\log(1/\delta)}}\right)$ and $b=\frac{n}{\log^2(n)}$. Hence, we have
\begin{align*}
\E[\gap_{F_{\cD}}(\widehat{w})]&\leq 9D(L_0+L_1D)\left(\frac{\log^2(J)\sqrt{\log(1/\delta)}\log^2(n)}{n\varepsilon}\right)^{1/3}+12L_0D\sqrt{\frac{\log(J)\log^2(n)}{n}}\left(\frac{\log^2(J)\sqrt{\log(1/\delta)}\log^2(n)}{n\varepsilon}\right)^{1/3}\\
& +6D(L_0+L_1D)\frac{\varepsilon^{1/6}}{\log^{1/3}(n)\log^{1/12}(1/\delta)}\left(\frac{\log^2(J)}{n}\right)^{1/3}+2D(L_0+L_1D)\left(\frac{\log^2(J)\sqrt{\log(1/\delta)}\log^2(n)}{n\varepsilon}\right)^{1/3}\\
&=~ O\left(D(L_0+L_1D)\left(\frac{\log^2(J)\log^2(n)\sqrt{\log(1/\delta)}}{n\varepsilon}\right)^{1/3}\right),
\end{align*}
which is the claimed bound.

\subsection{Algorithm for $\ell_p$ Settings when $1<p\leq 2$}\label{sec:noisySFW}

\begin{algorithm}[!h]
	\caption{$\cA_\nsfw$: Private Noisy Stochastic Frank-Wolfe Algorithm for $\ell_p$ DP-SO, $1<p\leq 2$}
	\begin{algorithmic}[1]
		\REQUIRE Private dataset $S =  (z_1,\ldots z_n) \in \cZ^n$, 
		~privacy parameters $(\varepsilon,\delta)$, a number $p\in (1, 2]$ ~ feasible set $\cW\subset \re^d$ with $\norm{\cdot}_p$-diameter $D$, number of rounds $R$, batch size $b$, step sizes $\left(\eta_{r, t}: r=0,\ldots, R-1, ~t=0, \ldots, 2^{r}-1\right)$
	
	    \STATE Choose an arbitrary initial point $w_0^0 \in \cW$
	    \FOR{$r =0$ to $R-1$}
	    \STATE Let $\sigma^2_{r, 0}=\frac{16 L_0^2d^{2/p-1}\log(1/\delta)}{b^2\varepsilon^2}$
		\STATE Draw a batch $B_r^{0}$ of $b$ samples without replacement from $S$
		\STATE Compute $\tgrd_r^0 = \frac{1}{b} \sum_{z\in B_r^0} \nabla f(w_r^0,z) + N_r^0,~ ~N_r^0\sim \cN\left(0, \sigma_{r, 0}^2\mathbb{I}_d\right)$ \label{stp:tgrd0}
		
		\STATE $v_r^0 = \argmin\limits_{v \in \cW}\ip{v}{\tgrd_r^0}$ \label{stp:tv_r_0}
		
		\STATE $w^1_r \leftarrow (1-\eta_{r, 0})w^0_r + \eta_{r,0} v_r^0$

		\FOR{$t =1$ to $2^r-1$}
		
		 \STATE Let $\sigma^2_{r, t}=\frac{16 L_0^2(t+1)^2d^{2/p-1}\log(1/\delta)}{b^2\varepsilon^2}, ~\tsigma^2_{r, t}=\frac{16 L_1^2D^2\eta_{r,t}^2(t+1)^2d^{2/p-1}\log(1/\delta)}{b^2\varepsilon^2}$
		\STATE Draw a batch $B_r^t$ of $b/(t+1)$ samples without replacement from $S$    
		
		\STATE Let $\Delta_r^t= \frac{t+1}{b}\sum_{z\in B_r^t}\left(\nabla f(w_r^t, z) - \nabla f(w_r^{t-1}, z) \right),$ and let $g_r^t=\frac{t+1}{b}\sum_{z\in B_r^t}\nabla f(w_r^t, z)$
		\STATE Compute $\tdel_r^t= \Delta_r^t+ \tN_r^t,~~ \tN_r^t\sim \cN\left(0, \tsigma_{r, t}^2\mathbb{I}_d\right)$  \label{stp:tgrad-var}
		\STATE  Compute $\tg_r^t=g_r^t+N_r^t,~~N_r^t\sim \cN\left(0, \sigma_{r, t}^2\mathbb{I}_d\right)$\label{stp:tg}
		\STATE $\tgrd_r^t = (1-\eta_{r, t})\left(\tgrd_r^{t-1} + \tdel_r^t \right) + \eta_{r, t} \tg_r^t$\label{stp:tgrad_est}
		
		\STATE Compute $v_r^t = \argmin_{v \in \cW}\ip{v}{\tgrd_r^t}$ \label{stp:tv_r_t}
		\STATE $w_r^{t+1} \leftarrow (1-\eta_{r,t})w_r^{t} + \eta_{r,t} v_r^t$ \label{stp:t_update}
		\ENDFOR
		\STATE $w_{r+1}^0=w_r^{2^r}$
		\ENDFOR
		\STATE Output $\widehat{w}$ uniformly chosen from the set of all iterates $\left(w_r^t: r=0, \ldots, R-1, t=0, \ldots, 2^r-1\right)$ 
	\end{algorithmic}
	\label{alg:nSFW}
\end{algorithm}

Our algorithm in this setting (Algorithm \ref{alg:nSFW}) \mnote{Added algorithm reference} has a similar structure to Algorithm~\ref{alg:polySFW} in Section~\ref{sec:polySFW}, except for the following few, but crucial, differences. First, for all iterations $(r, t)$: the recursive gradient estimate $\grd_r^t$ and the gradient variation estimate $\Delta_r^t$ are replaced with noisy versions $\tgrd_r^t$ and $\tdel_r^t$ obtained by adding Gaussian noise to the respective quantities. 
\rnote{I removed a few lines in the earlier description since they are redundant now given that we include the formal alg. description.}
\noindent The second difference here pertains to the way the iterates are updated, which now becomes $w_r^{t+1}=(1-\eta_{r,t})w_r^t+\eta_{r,t}\argmin\limits_{v\in \cW}\ip{v}{\tgrd_r^t}$. 
\noindent Finally, we use a different setting for the number of rounds $R$ than the one used earlier. Below, we state the formal guarantees of this algorithm, which we refer to as \emph{noisy stochastic Frank-Wolfe}, $\cA_\nsfw$. 
\mnote{removed appendix reference} 

\begin{thm}\label{thm:privacy-noisySFW}
Algorithm $\cA_\nsfw$ 
\mnote{Removed appendix reference}
is $(\varepsilon, \delta)$-DP.
\end{thm}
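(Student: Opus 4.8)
\noindent The plan is to reduce the privacy of the whole algorithm to that of a \emph{single} noisy gradient primitive, by exploiting two structural features of $\cA_\nsfw$. First, I would observe that the batches $B_r^t$ drawn over all iterations $(r,t)$ are pairwise disjoint: for the choices of $R$ and $b$ used here the total number of samples consumed is $\sum_{r=0}^{R-1}\sum_{t=0}^{2^r-1} b/(t+1)\le bR^2\le n$ (the batch-size accounting is as in the remark following Algorithm~\ref{alg:polySFW}), so the ``without replacement'' draws never reuse a sample. Second — and unlike $\cA_\polyfw$, where no noise is injected into the gradient estimates (only into the argmin) and $\grd_r^t$ depends on the whole prefix $B_r^0,\dots,B_r^t$, which is what forces advanced composition \emph{within} each round — here independent Gaussian noise is added directly to the primitives $\tgrd_r^0,\tdel_r^t,\tg_r^t$, each of which, conditionally on the transcript of everything released before it, is a function of the single fresh batch $B_r^t$ used at that iteration. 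Consequently the whole sequence of iterates $\big(w_r^t\big)$, and hence the output $\widehat w$, is an adaptive post-processing of the collection of these primitive releases, and they act on disjoint portions of $S$. I would then invoke parallel composition of DP in its adaptive form: changing one sample of $S$ changes exactly one batch, hence exactly one primitive release, and every primitive produced afterwards is post-processing of that release applied to unchanged data; together with closure under post-processing this reduces the claim to showing that each individual primitive release is $(\varepsilon,\delta)$-DP.

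The remaining step is a sensitivity computation, complicated only by the mismatch between the $\ell_2$-Gaussian noise and the $\ell_p$-Lipschitzness/smoothness of the loss. Writing $q$ for the conjugate exponent (so $q\ge2$ since $1<p\le2$), one has $\twonorm{x}\le d^{1/p-1/2}\dual{x}$, so any $\ell_q$-sensitivity bound $\Lambda$ gives an $\ell_2$-sensitivity bound $d^{1/p-1/2}\Lambda$, i.e.\ an $\ell_2$-sensitivity \emph{squared} of $d^{2/p-1}\Lambda^2$, which matches the dimension factor in $\sigma_{r,0}^2,\sigma_{r,t}^2,\tsigma_{r,t}^2$. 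For $\tgrd_r^0$ (batch size $b$), $L_0$-Lipschitzness gives $\ell_q$-sensitivity $\le 2L_0/b$; for $\tg_r^t$ (batch size $b/(t+1)$, prefactor $(t+1)/b$) it is $\le 2L_0(t+1)/b$; and for $\tdel_r^t$ the Frank--Wolfe update gives $\norm{w_r^t-w_r^{t-1}}_p=\eta_{r,t-1}\norm{v_r^{t-1}-w_r^{t-1}}_p\le D\eta_{r,t-1}$, which with $L_1$-smoothness yields $\dual{\nabla f(w_r^t,z)-\nabla f(w_r^{t-1},z)}\le L_1D\eta_{r,t-1}$ for every $z$, hence $\ell_q$-sensitivity of $\Delta_r^t$ at most $2L_1D\eta_{r,t-1}(t+1)/b$ (with $\eta_{r,t-1}=O(\eta_{r,t})$ for the schedule in use). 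In each case the prescribed variance equals $(\text{sensitivity})^2\cdot 4\log(1/\delta)/\varepsilon^2$, so a single Gaussian release is $\tfrac{\varepsilon^2}{8\log(1/\delta)}$-zCDP; at an iteration with $t\ge1$ two such releases ($\tdel_r^t$ and $\tg_r^t$) touch the same batch and compose to $\tfrac{\varepsilon^2}{4\log(1/\delta)}$-zCDP, and the standard zCDP-to-$(\varepsilon,\delta)$ conversion then gives $(\varepsilon,\delta)$-DP per iteration (the numerical constants in the noise being chosen precisely so the conversion closes). Combined with the parallel-composition reduction, this proves the theorem.

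What needs the most care is the composition bookkeeping rather than any single estimate: one must check that, conditioned on the full prior transcript, the primitive released at $(r,t)$ is a mechanism acting \emph{only} on $B_r^t$ — which comes down to noting that $w_r^t,w_r^{t-1}$ (which enter $\Delta_r^t$ and $g_r^t$) are measurable functions of the previously released primitives and independent noise — and then applying parallel composition for adaptively chosen mechanisms on disjoint datasets. The conceptual point worth stating explicitly is \emph{why} this suffices here whereas $\cA_\polyfw$ needs advanced composition within a round: perturbing the gradient primitives rather than the argmin makes every data-dependent downstream quantity (the recursive estimate $\tgrd_r^t$, the linear-minimization step, the iterate update) pure post-processing, so no $2^{r/2}$-type penalty appears — which is exactly why $\sigma_{r,t}$ carries only the batch-size factor $(t+1)$ and no extra power of $2^r$. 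The only genuinely technical ingredient is the smoothness-based sensitivity bound for $\Delta_r^t$ together with the $\ell_q\!\to\!\ell_2$ conversion, and both are short.
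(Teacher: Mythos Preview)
Your proof is correct and follows essentially the same approach as the paper: disjoint batches plus post-processing give parallel composition, reducing the claim to $(\varepsilon,\delta)$-DP of the noisy primitives at a single iteration, which is then established via an $\ell_q\to\ell_2$ sensitivity bound and the Gaussian mechanism. The only (minor) differences are that you are more explicit than the paper about composing the two releases $\tdel_r^t,\tg_r^t$ on the same batch (via zCDP; the paper leaves this implicit in ``the privacy guarantee of the Gaussian mechanism''), and you correctly track $\eta_{r,t-1}$ rather than $\eta_{r,t}$ in the $\Delta_r^t$ sensitivity --- both are refinements rather than departures.
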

\begin{proof}
Note that it suffices to show that for any given $(r, t),$ $r\in \{0, \ldots, R-1\},~t\in [2^r-1]$, computing $\tgrd_r^0$ (Step~\ref{stp:tgrd0} in Algorithm~\ref{alg:nSFW}) satisfies $(\varepsilon, \delta)$-DP, and computing $\tdel_r^t, \tg_r^t$ (Steps~\ref{stp:tgrad-var} and \ref{stp:tg}) satisfies $(\varepsilon, \delta)$-DP. Assuming we can show that this is the case, then note that at any given iteration $(r, t)$, the gradient estimate $\tgrd_r^{t-1}$ from the previous iteration is already computed privately. Since differential privacy is closed under post-processing, then the current iteration is also $(\varepsilon, \delta)$-DP. Since the batches used in different iterations are disjoint, then by parallel composition, the algorithm is $(\varepsilon, \delta)$-DP. Thus, it remains to show that for any given $(r, t)$, the steps mentioned above are computed in $(\varepsilon, \delta)$-DP manner. Let $S, S'$ be neighboring datasets (i.e., differing in exactly one point). Let $\tgrd_r^0, \tdel_r^t, \tg_r^t$ be the quantities above when the input dataset is $S$; and let $\tgrd_r^{' 0}, \tdel_r^{' t}, \tg_r^{' t}$ be the corresponding quantities when the input dataset is $S'$. Note that the $\ell_2$-sensitivity of $\tgrd_r^0$ can be bounded as $\twonorm{\tgrd_r^0-\tgrd^{'0}_r}\leq d^{\frac{1}{p}-\frac{1}{2}}\dual{\tgrd_r^0-\tgrd^{'0}_r}\leq \frac{L_0d^{\frac{1}{p}-\frac{1}{2}}}{b}$, where the dual norm here is $\dual{\cdot}=\norm{\cdot}_q$ where $q=\frac{p}{p-1}$. Similarly, we can bound the $\ell_2$-sensitivity of $\tg_r^t$ as $\twonorm{\tg_r^t-\tg^{'t}_r}\leq\frac{L_0d^{\frac{1}{p}-\frac{1}{2}}(t+1)}{b}$. Also, by the $L_1$-smoothness of the loss, we have  $\twonorm{\tdel_r^t-\tdel^{'t}_r}\leq d^{\frac{1}{p}-\frac{1}{2}}\dual{\tdel_r^t-\tdel^{'t}_r}\leq \frac{L_1 D \eta_{r,t} d^{\frac{1}{p}-\frac{1}{2}}(t+1)}{b}$. Given these bounds and the settings of the noise parameters in the algorithm, the argument follows directly by the privacy guarantee of the Gaussian mechanism.
\end{proof}

\rnote{@Mike: Lemma 15 (and its proof) needs to come before we prove the theorem below (like you did in the previous sub-section) because the proof of the theorem below depends on Lemma 15} \mnote{They are now swapped}

\begin{thm}\label{thm:convergence-noisySFW}
Consider the $\ell_p$ setting of non-convex smooth stochastic optimization, where $1<p\leq 2$. Let $\kappa=\min\left(\frac{1}{p-1}, 2\log(d)\right)$ and $\tkappa=1+\log(d)\cdot\ind(p<2)$. In $\cA_{\nsfw}$, let $R=\frac{4}{5}\log\left(\frac{n\varepsilon}{\sqrt{d\tkappa\log(1/\delta)}\,\kappa^{5/3}\log^2(n)}\right)$, $b=\frac{n}{\log^2(n)}$, and $\eta_{r, t}=\frac{1}{\sqrt{t+1}}~~\forall r, t$. Let $\cD$ be any distribution over $\cZ$, and $S\sim\cD^n$ be the input dataset. The output $\widehat{w}$ satisfies:
$$\ex{}{\gap_{F_{\cD}}(\widehat{w})}=O\left(D(L_0+L_1D)\kappa^{2/3}\left(\frac{\log^{2/3}(n)}{n^{1/3}}+\frac{d^{1/5}\,\tkappa^{1/5}\log^{1/5}(1/\delta)\log^{4/5}(n)}{n^{2/5}\varepsilon^{2/5}}\right)\right).$$
\end{thm}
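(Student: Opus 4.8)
We follow the proof of Theorem~\ref{thm:convergence-polySFW} almost line by line, with two adjustments forced by Algorithm~\ref{alg:nSFW}: the linear‑minimization step $v_r^t=\argmin_{v\in\cW}\ip{v}{\tgrd_r^t}$ is an \emph{exact} minimization over $\cW$ (the randomness now sits inside $\tgrd_r^t$ rather than in a report‑noisy‑max over vertices), so the per‑step overhead $\alpha_r^t$ vanishes; and the recursive estimator $\tgrd_r^t$ carries \emph{accumulated} Gaussian noise that must be controlled. For the per‑iteration descent: using $L_1$‑smoothness and convexity of $F_\cD$, the update rule, and $v^\ast_{r,t}=\argmin_{v\in\cW}\ip{\nabla F_\cD(w_r^t)}{v-w_r^t}$, the same chain of inequalities as in the proof of Theorem~\ref{thm:convergence-polySFW} (now with $\alpha_r^t=0$) gives
\[ \E[\gap_{F_\cD}(w_r^t)]\leq \frac{\E[F_\cD(w_r^t)-F_\cD(w_r^{t+1})]}{\eta_{r,t}}+D\,\E\big[\dual{\tgrd_r^t-\nabla F_\cD(w_r^t)}\big]+\frac{L_1D^2\eta_{r,t}}{2}. \]

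The core is an analogue of Lemma~\ref{lem:grad-error-polySFW} bounding $\E[\dual{\tgrd_r^t-\nabla F_\cD(w_r^t)}]$ by a sampling error plus a noise error. The sampling error is obtained verbatim from the argument of Lemma~\ref{lem:grad-error-polySFW}, the only change being that the dual space is now $\ell_q^d$ with $\tfrac1p+\tfrac1q=1$, which is $\kappa$‑regular with $\kappa=\min(q-1,2\log d)=\min(\tfrac1{p-1},2\log d)$; so $\log J$ is replaced by $\kappa$ (and $\kappa_+\leq\kappa$) throughout the $\gamma_r^t=\E[\eqnorm{\tgrd_r^t-\nabla F_\cD(w_r^t)}^2\mid\cQ_r^{t-1}]$ recursion, yielding a sampling term $O\!\big(L_0\sqrt{\kappa/b}\,(1-\tfrac1{\sqrt{t+1}})^{t}+\kappa(L_0+L_1D)(t+1)^{1/4}/\sqrt b\big)$. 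For the noise error, unravel Step~\ref{stp:tgrad_est} exactly as in \eqref{eqn:unravel_recur_polySFW}; since $N_r^0,\{N_r^k\}_{k\leq t},\{\tN_r^k\}_{k\leq t}$ are mean‑zero, mutually independent and independent of the data, with recursion coefficients $<1$, property~\eqref{eqn:k_smooth} makes their $\eqnorm{\cdot}^2$‑moments add into $\gamma_r^t$ as an extra additive term at each step. The key estimate is that for $N\sim\cN(0,\sigma^2\mathbb I_d)$ one has $\E\dual{N}^2=\E\|N\|_q^2\leq\min\{\E\|N\|_2^2,\ \E\|N\|_{q_0}^2\}$ with $q_0=\min(q,2\log d)$ and $\E\|N\|_{q_0}^2=O(d^{2/q_0}q_0\sigma^2)$ (Gaussian $q_0$‑th moment plus Jensen); plugging in $\sigma\in\{\sigma_{r,t},\tsigma_{r,t}\}$ and using $\tfrac1p+\tfrac1q=1$, which forces $d^{2/q_0}\cdot d^{2/p-1}=O(d)$, gives $\E\dual{N_r^t}^2=O\!\big(\tkappa\,d\,L_0^2(t+1)^2\log(1/\delta)/(b\varepsilon)^2\big)$ and $\E\dual{\tN_r^t}^2=O\!\big(\tkappa\,d\,L_1^2D^2\eta_{r,t}^2(t+1)^2\log(1/\delta)/(b\varepsilon)^2\big)$ with $\tkappa=1+\log d\cdot\ind(p<2)$. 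Feeding these into the recursion and unravelling as in \eqref{eqn:recur_gamma_polySFW} adds $O\!\big(\kappa^2\,\tkappa\,d\,(L_0^2+L_1^2D^2)(t+1)^{3/2}\log(1/\delta)/(b\varepsilon)^2\big)$ to $\gamma_r^t$; taking square roots (Jensen and subadditivity of $\sqrt{\cdot}$) yields the noise term $O\!\big(\kappa(L_0+L_1D)\sqrt{d\tkappa\log(1/\delta)}\,(t+1)^{3/4}/(b\varepsilon)\big)$.

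The rest is identical in form to the proof of Theorem~\ref{thm:convergence-polySFW}. Substitute $\eta_{r,t}=1/\sqrt{t+1}$ into the per‑iteration bound and sum over $t\in\{0,\dots,2^r-1\}$: the term $\sum_t\sqrt{t+1}\,\E[F_\cD(w_r^t)-F_\cD(w_r^{t+1})]$ is at most $3L_0D\,2^{r/2}$ by the same Abel‑summation argument (w.l.o.g.\ $F_\cD$ has range in $[-L_0D,L_0D]$, since $f(\cdot,z)$ is $L_0$‑Lipschitz over a diameter‑$D$ set), and $\sum_{t<2^r}(t+1)^{-1/2}=O(2^{r/2})$, $\sum_{t<2^r}(t+1)^{1/4}=O(2^{5r/4})$, $\sum_{t<2^r}(t+1)^{3/4}=O(2^{7r/4})$, $\sum_{t<2^r}(1-2^{-r/2})^t=O(2^{r/2})$. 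Summing over $r\in\{0,\dots,R-1\}$ and dividing by $2^R$ (the number of iterates, over which $\widehat w$ is uniform) gives
\[ \E[\gap_{F_\cD}(\widehat w)]=O\!\left(D(L_0+L_1D)\Big(2^{-R/2}+\tfrac{\kappa}{\sqrt b}\,2^{R/4}+\kappa\,\tfrac{\sqrt{d\tkappa\log(1/\delta)}}{b\varepsilon}\,2^{3R/4}\Big)\right). \]
With $b=n/\log^2 n$ and $R=\tfrac45\log\!\big(\tfrac{n\varepsilon}{\sqrt{d\tkappa\log(1/\delta)}\,\kappa^{5/3}\log^2 n}\big)$, the decreasing term is $2^{-R/2}=\kappa^{2/3}\,\tfrac{d^{1/5}\tkappa^{1/5}\log^{1/5}(1/\delta)\log^{4/5}n}{n^{2/5}\varepsilon^{2/5}}$, the Gaussian‑noise term equals $\kappa^{-2/3}\,2^{-R/2}$ and is therefore dominated, and the purely statistical term $\tfrac{\kappa}{\sqrt b}2^{R/4}$ is of order $\kappa^{2/3}\log^{2/3}n/n^{1/3}$; adding the two surviving contributions gives the claimed bound. (One also checks $bR^2\leq n$ and $b\geq 2^R$ so the algorithm is well defined, as in the remark preceding the theorem, and uses that $n$ is large enough for the stated bound to be nontrivial.)

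The main obstacle is the gradient‑error lemma, and within it the claim that the accumulated Gaussian noise degrades the rate by only a $\sqrt{d\tkappa}$ dimension factor rather than $\sqrt d\cdot d^{1/p-1/2}$. This hinges on the exact cancellation $d^{2/q_0}\cdot d^{2/p-1}=O(d)$ coming from $\tfrac1p+\tfrac1q=1$, on passing to an $\ell_{q_0}$‑norm with $q_0=\Theta(\log d)$ when $q$ is large (which is precisely what produces $\tkappa=1+\log d\cdot\ind(p<2)$ instead of $\tkappa=1$), and on faithfully tracking the $\kappa_+$/$\kappa$ regularity constants through the recursion — the bookkeeping that in the polyhedral proof was compressed into ``$\kappa_+\leq\kappa=2\log J$.''
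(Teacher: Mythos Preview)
Your overall structure matches the paper's proof closely, and the per-iteration descent inequality, the gradient-error lemma (the analogue of Lemma~\ref{lem:grad-error-polySFW} with $\kappa$ replacing $\log J$ and the additional Gaussian-noise contribution), and the Abel-summation bound on $\Gamma_r$ are all handled correctly. The cancellation $d^{2/q}\cdot d^{2/p-1}=d$ is exactly the mechanism the paper uses.

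There is, however, a genuine gap in the final step. Your claim that ``the purely statistical term $\tfrac{\kappa}{\sqrt b}2^{R/4}$ is of order $\kappa^{2/3}\log^{2/3}n/n^{1/3}$'' is an arithmetic slip. With $b=n/\log^2 n$ and the stated $R$, one actually gets
\[
\frac{\kappa}{\sqrt b}\,2^{R/4}
=\frac{\kappa^{2/3}\,\varepsilon^{1/5}\,\log^{3/5} n}{n^{3/10}\,(d\tkappa\log(1/\delta))^{1/10}},
\]
which has an $n^{-3/10}$ rate (not $n^{-1/3}$) and is \emph{decreasing} in $d$. Thus the three-term bound you derive,
\[
\E[\gap_{F_\cD}(\widehat w)]
= O\!\left(D(L_0+L_1D)\,\kappa^{2/3}\Big(\tfrac{\varepsilon^{1/5}\log^{3/5}n}{n^{3/10}(d\tkappa\log(1/\delta))^{1/10}}+\tfrac{d^{1/5}\tkappa^{1/5}\log^{1/5}(1/\delta)\log^{4/5}n}{n^{2/5}\varepsilon^{2/5}}\Big)\right),
\]
does not directly yield the theorem's statement.

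The paper closes this gap with an extra argument you are missing: a \emph{dimension-padding (embedding) trick}. When $d\tkappa=\Omega\!\big(\tfrac{n^{1/3}\varepsilon^2}{\log(1/\delta)\log^{2/3}n}\big)$ the first term above is dominated by the second, and the claimed bound follows. When $d\tkappa$ is smaller, one embeds the problem in dimension $d'$ chosen so that $d'\big(1+\log(d')\cdot\ind(p<2)\big)=\Theta\!\big(\tfrac{n^{1/3}\varepsilon^2}{\log(1/\delta)\log^{2/3}n}\big)$; because the first term is monotone decreasing in $d$, evaluating it at $d'$ gives exactly $\kappa^{2/3}\log^{2/3}n/n^{1/3}$. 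This two-regime argument is what produces the additive $\log^{2/3}n/n^{1/3}$ term in the theorem, and without it your proof does not reach the stated conclusion.

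A minor point: in your noise-error bookkeeping you write the added term in $\gamma_r^t$ as $O(\kappa^2\tkappa\cdots)$ and hence the square-rooted noise term as $O(\kappa\cdots)$. In the paper's derivation the factor $\kappa_+$ from the smoothness inequality~\eqref{eqn:k_smooth} cancels against the $\kappa/\kappa_+$ in $\E[\eqnorm{N}^2]$, leaving only $\kappa$ (not $\kappa^2$) inside $\gamma_r^t$, hence $\sqrt{\kappa}$ after the square root. This does not change the final order (the noise term is dominated either way), but it is worth tracking correctly.
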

\noindent \emph{Note that for the Euclidean setting, we have $\kappa=\tkappa=1$ in the above bound.}

As before, the first step of the proof is given by the following lemma, which gives a bound on the error in the gradient estimates in the dual norm. 
\begin{lem}\label{lem:grad-error-noisySFW}
Let $\cD$ be any distribution over $\cZ$, and $S\sim\cD^n$ be the input dataset. For the same settings of parameters in Theorem~\ref{thm:convergence-noisySFW}, the gradient estimate $\tgrd_r^t$ satisfies the following for all $r, t$: 
\begin{align*}
    \ex{}{\dual{\tgrd^t_r-\nabla F_{\cD}(w_r^t)}}
	\leq&  8 L_0 \left(\sqrt{\frac{\kappa}{b}}+\frac{\sqrt{d\kappa\tkappa\log(1/\delta)}}{b\varepsilon}\right) \left(1-\frac{1}{\sqrt{t+1}}\right)^{t+1} \\
	&+ 16 \left( L_1 D + L_0 \right)\left(\frac{\kappa}{\sqrt{b}}(t+1)^{1/4}+\frac{\sqrt{d\kappa\tkappa\log(1/\delta)}}{b\varepsilon}(t+1)^{3/4}\right).
\end{align*}	
\end{lem}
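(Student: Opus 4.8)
The plan is to adapt the proof of Lemma~\ref{lem:grad-error-polySFW}, additionally tracking the Gaussian perturbations $\tN_r^t,N_r^t$ and replacing the polyhedral regularity constant $2\log(J)$ by the $\ell_q$-regularity constant $\kappa$, where $q=\frac{p}{p-1}$ is the conjugate exponent of $p$; throughout I use that $(\bE,\dual{\cdot})=\ell_q^d$ is $\kappa$-regular with $\kappa=\min(q-1,2\log d)$, and denote by $\eqnorm{\cdot}$ the associated $\kappa_+$-smooth norm, which may be taken to be $\norm{\cdot}_{q_0}$ (up to an $O(1)$ equivalence) for $q_0=\min(q,2\log d)$, so that $q_0\asymp\kappa_+\asymp\kappa$. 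Writing $E_r^t\triangleq\tgrd_r^t-\nabla F_\cD(w_r^t)$ and unravelling Step~\ref{stp:tgrad_est}, one obtains for every $t\ge 1$ the identity $E_r^t=(1-\eta_{r,t})E_r^{t-1}+\zeta_r^t$, where $\zeta_r^t$ collects the fresh terms $(1-\eta_{r,t})(\Delta_r^t-\bDel_r^t)$, $\eta_{r,t}\big(g_r^t-\nabla F_\cD(w_r^t)\big)$, $(1-\eta_{r,t})\tN_r^t$, $\eta_{r,t}N_r^t$, with $\bDel_r^t=\nabla F_\cD(w_r^t)-\nabla F_\cD(w_r^{t-1})$ and $g_r^t=\frac{t+1}{b}\sum_{z\in B_r^t}\nabla f(w_r^t,z)$. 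Conditioning on the $\sigma$-algebra $\cQ_r^{t-1}$ of all randomness up to iteration $(r,t-1)$, each term of $\zeta_r^t$ has conditional mean zero (unbiasedness of the disjoint subsampled batches and mean-zero Gaussian noise) and $w_r^t$ is $\cQ_r^{t-1}$-measurable; hence, with $\gamma_r^t\triangleq\E[\eqnorm{E_r^t}^2]$, the smoothness inequality~\eqref{eqn:k_smooth} applied to $\eqnorm{(1-\eta_{r,t})E_r^{t-1}+\zeta_r^t}^2$ (its linear term vanishing in expectation) gives $\gamma_r^t\le (1-\eta_{r,t})^2\gamma_r^{t-1}+\kappa_+\,\E[\eqnorm{\zeta_r^t}^2]$.

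I then bound $\E[\eqnorm{\zeta_r^t}^2]$, split by convexity of $\eqnorm{\cdot}^2$ into a sampling part and a Gaussian part. The sampling part is controlled verbatim as in Lemma~\ref{lem:grad-error-polySFW} (applying~\eqref{eqn:k_smooth} inductively over $B_r^t$, the equivalence~\eqref{eqn:k_equiv_norm} to pass to $\dual{\cdot}$, $L_1$-smoothness / $L_0$-Lipschitzness, $|B_r^t|=b/(t+1)$, and $\eta_{r,t}^2(t+1)=1$), yielding $(1-\eta_{r,t})^2\E[\eqnorm{\Delta_r^t-\bDel_r^t}^2]\lesssim \kappa(L_1D)^2/b$ and $\eta_{r,t}^2\E[\eqnorm{g_r^t-\nabla F_\cD(w_r^t)}^2]\lesssim \kappa L_0^2/b$, both \emph{constant in $t$}. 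For the Gaussian part, $(1-\eta_{r,t})\tN_r^t+\eta_{r,t}N_r^t\sim\cN(0,v_r^t\mathbb{I}_d)$ with $v_r^t=(1-\eta_{r,t})^2\tsigma_{r,t}^2+\eta_{r,t}^2\sigma_{r,t}^2\lesssim \frac{(L_0^2+L_1^2D^2)(t+1)\,d^{2/p-1}\log(1/\delta)}{b^2\varepsilon^2}$ (using $\eta_{r,t}^2(t+1)^2=t+1$ and the noise settings); then, via $\eqnorm{x}^2\lesssim\dual{x}^2=\norm{x}_q^2\le\norm{x}_{q_0}^2$, Jensen's inequality $\E[\norm{N}_{q_0}^2]\le\big(\E[\norm{N}_{q_0}^{q_0}]\big)^{2/q_0}=d^{2/q_0}\big(\E[|Z|^{q_0}]\big)^{2/q_0}v_r^t$ for a standard Gaussian $Z$, and the moment bound $\big(\E[|Z|^{q_0}]\big)^{2/q_0}=O(q_0)$, one gets $\E[\eqnorm{(1-\eta_{r,t})\tN_r^t+\eta_{r,t}N_r^t}^2]\lesssim q_0\,d^{2/q_0}\,v_r^t\asymp \kappa\,d^{2/q_0}\,v_r^t$. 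The decisive simplification is $d^{2/q_0}d^{2/p-1}=O(d)$: if $q\le 2\log d$ then $q_0=q$ and $d^{2/q_0}d^{2/p-1}=d^{2(1/q+1/p)-1}=d$; if $q>2\log d$ then $d^{2/q_0}=d^{1/\log d}\le e$ while $d^{2/p-1}\le d$. Combining, $\kappa_+\,\E[\eqnorm{\zeta_r^t}^2]\lesssim \frac{\kappa^2(L_0^2+L_1^2D^2)}{b}+\frac{\kappa^2\,d\,(L_0^2+L_1^2D^2)(t+1)\log(1/\delta)}{b^2\varepsilon^2}$.

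To finish, unravel the recursion as in Lemma~\ref{lem:grad-error-polySFW}: $\gamma_r^t\le\big(1-\tfrac1{\sqrt{t+1}}\big)^{2t}\gamma_r^0+\sum_{k=1}^{t}\big(1-\tfrac1{\sqrt{t+1}}\big)^{2(t-k)}e_k$, with $\sum_{j\ge0}\big(1-\tfrac1{\sqrt{t+1}}\big)^{2j}\le\sqrt{t+1}$; the constant-in-$k$ sampling part of $e_k$ contributes $\lesssim \frac{\kappa^2(L_0^2+L_1^2D^2)}{b}\sqrt{t+1}$, while the Gaussian part (which is $\propto k+1\le t+1$) contributes $\lesssim \frac{\kappa^2 d(L_0^2+L_1^2D^2)\log(1/\delta)}{b^2\varepsilon^2}(t+1)^{3/2}$, and the base term $\gamma_r^0=\E[\eqnorm{\tfrac1b\sum_{z\in B_r^0}\nabla f(w_r^0,z)+N_r^0-\nabla F_\cD(w_r^0)}^2]$ is bounded by the same batch/Gaussian estimates (with $|B_r^0|=b$, noise scale $\sigma_{r,0}^2$) by $\lesssim \frac{\kappa L_0^2}{b}+\frac{\kappa^2 d\,L_0^2\log(1/\delta)}{b^2\varepsilon^2}$. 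Then $\E\dual{E_r^t}\le\sqrt{\E[\eqnorm{E_r^t}^2]}=\sqrt{\gamma_r^t}$ (Jensen and $\dual{\cdot}\le\eqnorm{\cdot}$), subadditivity of $\sqrt{\cdot}$, $\sqrt{(t+1)^{3/2}}=(t+1)^{3/4}$, $\sqrt{L_0^2+L_1^2D^2}\le L_0+L_1D$, and finally $\kappa\le 2\tkappa$ (valid for $1<p\le2$: for $p<2$, $\kappa\le 2\log d\le 2(1+\log d)=2\tkappa$, and for $p=2$, $\kappa=1=\tkappa$) to convert a factor $\kappa$ into $\sqrt{\kappa\tkappa}$ in the noise terms, yield the claimed bound (the geometric-decay factor on $\gamma_r^0$ matching the one in Lemma~\ref{lem:grad-error-polySFW}). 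The main obstacle is the Gaussian part of the second-moment estimate: the naive $\norm{N}_q\le\norm{N}_2$ bound is too lossy (it costs $d^{2/p}$ instead of $O(d)$), so one must route through $\ell_{q_0}$-moments for the \emph{truncated} exponent $q_0=\min(q,2\log d)$ that matches the smooth norm, and then check that the $d^{2/q_0}$ thereby produced combines with the privacy-noise scale $d^{2/p-1}$ to give only $O(d)$, with the moment order $q_0\asymp\kappa$ absorbed into $\tkappa$; everything else parallels Lemma~\ref{lem:grad-error-polySFW}.
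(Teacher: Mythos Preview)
Your argument is correct and follows the same skeleton as the paper's proof: the same error decomposition $E_r^t=(1-\eta_{r,t})E_r^{t-1}+\zeta_r^t$, the same recursion on $\gamma_r^t$ via the $\kappa_+$-smoothness of $\eqnorm{\cdot}$, the same treatment of the sampling terms (borrowed from Lemma~\ref{lem:grad-error-polySFW}), and the same unravelling.

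The one genuine difference is how you bound the Gaussian part. The paper routes through $\ell_\infty$: it uses $\eqnorm{N}^2\le\frac{\kappa}{\kappa_+}\norm{N}_q^2\le\frac{\kappa}{\kappa_+}d^{2/q}\norm{N}_\infty^2$ and the max-of-Gaussians estimate $\E[\norm{N}_\infty^2]\le 2\log(d)\sigma^2$; combined with the noise scale $\sigma^2\propto d^{2/p-1}$ this gives $d^{2/q}d^{2/p-1}=d$ and a clean factor of $\log d$ that becomes $\tkappa$ (the Euclidean case $p=2$ is handled separately). After multiplying by the outer $\kappa_+$, the recursion's noise coefficient is $\kappa\tkappa$. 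You instead compute the $\ell_{q_0}$-moment directly, getting $\E[\norm{N}_{q_0}^2]\le d^{2/q_0}(\E|Z|^{q_0})^{2/q_0}\sigma^2=O(q_0)d^{2/q_0}\sigma^2$, and then use $d^{2/q_0}d^{2/p-1}=O(d)$; this produces $\kappa^2$ in the recursion, which you convert to $\kappa\tkappa$ at the end via $\kappa\le 2\tkappa$. Both routes land on the same bound; yours is slightly more $\ell_p$-specific (it relies on taking $\eqnorm{\cdot}\asymp\norm{\cdot}_{q_0}$ with $\kappa_+\asymp\kappa$), while the paper's works with only the abstract regularity inequalities and $1\le\kappa_+\le\kappa$.

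One small wording issue: your chain ``$\eqnorm{x}^2\lesssim\dual{x}^2=\norm{x}_q^2\le\norm{x}_{q_0}^2$'' has the first inequality in the wrong direction as written (regularity gives $\dual{x}\le\eqnorm{x}$, not the reverse). What you actually need, and what your choice of $\eqnorm{\cdot}$ as (a constant multiple of) $\norm{\cdot}_{q_0}$ gives directly, is $\eqnorm{x}^2\lesssim\norm{x}_{q_0}^2$; the detour through $\norm{\cdot}_q$ is unnecessary and should be dropped.
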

\begin{proof}
Note that for the $\ell_p$ space, where $p\in (1, 2]$, the dual  is the $\ell_q$ space where $q=\frac{p}{p-1}\geq 2$. To keep the notation consistent with the rest of the paper, in the sequel, we will be using $\dual{\cdot}$ to denote the dual norm $\norm{\cdot}_q$ unless specific reference to $q$ is needed. As discussed earlier in this section, 
the dual space $\ell_q$ is $\kappa$-regular with $\kappa=\min\left(q-1, 2\log(d)\right)=\min\left(\frac{1}{p-1}, 2\log(d)\right)$.

Fix any $r\in \{0, \ldots, R-1\}$ and $t\in \{1, \ldots, 2^{r}-1\}$. As we did in the proof of Lemma~\ref{lem:grad-error-polySFW}, we write 
\begin{align*}
\tgrd_r^t - \nabla F_\cD(w_r^t) &=(1-\eta_{r, t})~\left[\tgrd_r^{t-1} - \nabla F_{\cD}(w_r^{t-1})\right] + (1-\eta_{r, t})~\left[\tdel_r^t - \bDel_r^t \right]\\
&\quad   + \eta_{r, t}~\left[\tg_r^t - \nabla F_{\cD}(w_r^t)   \right].
\end{align*}
where $\bDel_r^t \triangleq \nabla F_{\cD}(w_r^t)- \nabla F_{\cD}(w_r^{t-1})$. 

Let $\eqnorm{\cdot}$ denote the $\kappa_+$-smooth norm associated with $\dual{\cdot}$ (as defined by the regularity property, in the beginning of this section). 
Note that by $\kappa$-regularity of $\dual{\cdot}$, such norm exists for some $1\leq\kappa_+\leq \kappa$. Let $\cQ_r^t$ be the $\sigma$-algebra induced by all the randomness up until the iteration indexed by $(r, t)$. Define $\gamma_r^t\triangleq \E\left[\eqnorm{\tgrd_r^t-\nabla F_{\cD}(w_r^t)}^2~\Big\vert~\cQ_r^{t-1}\right]$. Note by property~\eqref{eqn:k_smooth} of $\kappa$-regular norms, we have 
\begin{align}
\gamma_r^t &\leq (1-\eta_{r, t})^2 \gamma_r^{t-1}+ \kappa_+ \ex{}{\eqnorm{(1-\eta_{r, t})\left( \tdel_r^t - \bDel_r^t \right) + \eta_{r,t} \left(\tg_r^t - \nabla F_{\cD}(w_r^t)   \right)}^2\Bigg\vert\cQ_r^{t-1}} \nonumber\\
&\leq (1-\eta_{r, t})^2\, \gamma_r^{t-1}+ \kappa_+ \ex{}{\eqnorm{(1-\eta_{r, t})\left( \Delta_r^t - \bDel_r^t +\tN_r^t\right) + \eta_{r,t} \left(g_r^t - \nabla F_{\cD}(w_r^t) +N_r^t  \right)}^2\Bigg\vert\cQ_r^{t-1}} \nonumber\\
&\leq  (1-\eta_{r, t})^2 \gamma_r^{t-1} +2\kappa_+(1-\eta_{r, t})^2\, \ex{}{\eqnorm{ \Delta_r^t - \bDel_r^t +\tN_r^t}^2\Bigg\vert\cQ_r^{t-1}}
+2\kappa_+\eta_{r, t}^2\, \ex{}{\eqnorm{g_r^t- \nabla F_{\cD}(w_r^t)+N_r^t }^2\Bigg\vert\cQ_r^{t-1}} \nonumber\\
&\leq (1-\eta_{r, t})^2 \gamma_r^{t-1} +4\kappa_+(1-\eta_{r, t})^2\,\ex{}{\eqnorm{ \Delta_r^t - \bDel_r^t}^2\Bigg\vert\cQ_r^{t-1}}+4\kappa_+(1-\eta_{r, t})^2\,\ex{}{\eqnorm{ \tN_r^t}^2\Bigg\vert\cQ_r^{t-1}}\nonumber\\
&~~ +4\kappa_+\eta_{r, t}^2\,\ex{}{\eqnorm{g_r^t- \nabla F_{\cD}(w_r^t)}^2\Bigg\vert\cQ_r^{t-1}}+4\kappa_+\eta_{r, t}^2\,\ex{}{\eqnorm{N_r^t }^2\Bigg\vert\cQ_r^{t-1}}.\label{eqn:recur1}
\end{align}
where the last two inequalities follow from the triangle inequality. 

Now, using the same inductive approach we used in the proof of Lemma~\ref{lem:grad-error-polySFW}, we can bound
\begin{align*}
\ex{}{\eqnorm{ \Delta_r^t - \bDel_r^t }^2\Bigg\vert\cQ_r^{t-1}} &\leq \kappa\frac{(t+1)^2}{b^2}\sum_{z\in B_r^t}\ex{}{\dual{\nabla f(w_r^t, z) -\nabla f(w_r^{t-1}, z) -\bDel_r^t }^2\Bigg\vert\cQ_r^{t-1}} \leq \frac{2\kappa L_1^2D^2\eta_{r, t}^2 \,(t+1)}{b},\\ 
\ex{}{\eqnorm{g_r^t - \nabla F_{\cD}(w_r^t) }^2\Bigg\vert\cQ_r^{t-1}} & \leq \kappa\frac{(t+1)^2}{b^2}\sum_{z\in B_r^t}\ex{}{\dual{\nabla f(w_r^t, z)  -\nabla F_{\cD}(w_r^t)}^2\Bigg\vert\cQ_r^{t-1}}
\leq 
\frac{2\kappa L_0^2\,(t+1)}{b}
\end{align*}
Moreover, observe that by property~\eqref{eqn:k_equiv_norm} of $\kappa$-regular norms, we have

\begin{align*}
\ex{}{\eqnorm{ \tN_r^t}^2\Bigg\vert\cQ_r^{t-1}}\leq \frac{\kappa}{\kappa_+}\ex{}{\dual{ \tN_r^t}^2\Bigg\vert\cQ_r^{t-1}}
&=\frac{\kappa}{\kappa_+}\ex{}{\norm{ \tN_r^t}_q^2\Bigg\vert\cQ_r^{t-1}}\\
\end{align*}
Note that when $p=q=2$ (i.e., the Euclidean setting), then the above is bounded by $d\tsigma_{r,t}^2$ (in such case, note that $\kappa=\kappa_+=1$). Otherwise (when $1<p<2$), we have
\begin{align*}
\ex{}{\eqnorm{ \tN_r^t}^2\Bigg\vert\cQ_r^{t-1}}\leq \frac{\kappa}{\kappa_+}\ex{}{\dual{ \tN_r^t}^2\Bigg\vert\cQ_r^{t-1}}
&=\frac{\kappa}{\kappa_+}\ex{}{\norm{ \tN_r^t}_q^2\Bigg\vert\cQ_r^{t-1}}\\
&\leq \frac{\kappa}{\kappa_+}d^{\frac{2}{q}}\,\ex{}{\norm{ \tN_r^t}_{\infty}^2\Bigg\vert\cQ_r^{t-1}}\\
&\leq 2\frac{\kappa}{\kappa_+} d^{\frac{2}{q}}\log(d)\,\tsigma^2_{r,t}\\
&=32\frac{\kappa}{\kappa_+}\frac{L_1^2D^2\eta_{r,t}^2(t+1)^2\,d \log(d)\log(1/\delta)}{b^2\varepsilon^2}
\end{align*}
Hence, putting the above together, for any $p\in (1, 2]$, we have 
\begin{align*}
\ex{}{\eqnorm{ \tN_r^t}^2\Bigg\vert\cQ_r^{t-1}}\leq 32\frac{\kappa\tkappa}{\kappa_+}\frac{L_1^2D^2\eta_{r,t}^2(t+1)^2\,d\log(1/\delta)}{b^2\varepsilon^2},
\end{align*}
\cnote{Something important I recently realized: for $\ell_p$-spaces, $\kappa/\kappa_+=\Theta(1)$. I mean , this constant is  $\leq e^2$ (this results from the equivalence of norms $\bar q=\min\{q,2\ln d\}$ and $q$). This should refine our bounds here and in BGN'21.} \rnote{Let's keep this refinement for the arxiv version. For neurips supplementary, it's better to stay consistent with the bounds we claimed in the main.} \cnote{Sure!}
where $\tkappa=1+\log(d)\cdot\ind(p<2)$.

Similarly, we can show
\begin{align*}
\ex{}{\eqnorm{N_r^t}^2\Bigg\vert\cQ_r^{t-1}}&\leq 2\,\frac{\kappa\tkappa}{\kappa_+} d^{\frac{2}{q}}\sigma^2_{r,t}=32\frac{\kappa\tkappa}{\kappa_+}\frac{L_0^2(t+1)^2\,d\log(1/\delta)}{b^2\varepsilon^2}.
\end{align*}
Plugging these bounds in inequality (\ref{eqn:recur1}) and using the setting of $\eta_{r,t}$ in the lemma statement, we arrive at the following recursion: 
\begin{align*}
   \gamma_r^t &\leq \left(1-\frac{1}{\sqrt{t+1}}\right)^2 \gamma_r^{t-1} +8\frac{\kappa\kappa_+(L_0^2+L_1^2D^2)}{b} +128\frac{\kappa\tkappa(L_0^2+L_1^2D^2)(t+1)d\log(1/\delta)}{b^2\varepsilon^2}\\
   &\leq \left(1-\frac{1}{\sqrt{t+1}}\right)^2 \gamma_r^{t-1} +8\frac{\kappa^2(L_0^2+L_1^2D^2)}{b} +128\frac{\kappa\tkappa(L_0^2+L_1^2D^2)(t+1)d\log(1/\delta)}{b^2\varepsilon^2},
\end{align*}
where the last inequality follows from the fact that $\kappa_+\leq \kappa$. Unraveling this recursion similar to what we did in the proof of Lemma~\ref{lem:grad-error-polySFW}, we arrive at 
\begin{align}
   \gamma_r^t &\leq \left(1-\frac{1}{\sqrt{t+1}}\right)^{2t} \gamma_r^0  +\left(8\frac{\kappa^2(L_0^2+L_1^2D^2)}{b} +128\frac{\kappa\tkappa(L_0^2+L_1^2D^2)(t+1)d\log(1/\delta)}{b^2\varepsilon^2}\right) \sqrt{t+1}. \label{eqn:recur_gamma_noisySFW}
\end{align}
Now, we can bound $\gamma_r^0$ via the same approach used before:
\begin{align*}
    \gamma_r^0&=\ex{}{\eqnorm{\frac{1}{b}\sum_{z\in B_r^0}\nabla f(w_r^0,z) - \nabla F_{\cD}(w_r^0) +N_r^0}^2\Bigg\vert\cQ_{r-1}^{2^{r-1}-1}}\\
    &\leq~2\,\ex{}{\eqnorm{\frac{1}{b}\sum_{z\in B_r^0}\nabla f(w_r^0,z) - \nabla F_{\cD}(w_r^0) }^2\Bigg\vert\cQ_{r-1}^{2^{r-1}-1}}+2\,\ex{}{\eqnorm{N_r^0}^2\Bigg\vert\cQ_{r-1}^{2^{r-1}-1}}\\
    &\leq~2\frac{\kappa}{b^2}\sum_{z\in B_r^0}\ex{}{\eqnorm{\nabla f(w_r^0,z) - \nabla F_{\cD}(w_r^0) }^2\Bigg\vert\cQ_{r-1}^{2^{r-1}-1}}+64\frac{\kappa\tkappa}{\kappa_+}\frac{L_0^2 d\log(1/\delta)}{b^2\varepsilon^2}\\
    &\leq 4\frac{\kappa L_0^2}{b}+64\frac{\kappa\tkappa}{\kappa_+}\frac{L_0^2 d\log(1/\delta)}{b^2\varepsilon^2}\\
    &\leq 4\frac{\kappa L_0^2}{b}+64\frac{\kappa\tkappa L_0^2 d\log(1/\delta)}{b^2\varepsilon^2},
\end{align*}
where the last inequality follows from the fact that $\kappa_+\geq 1.$ Plugging this in (\ref{eqn:recur_gamma_noisySFW}), we finally have
\begin{align*}
  \E\left[\eqnorm{\tgrd_r^t-\nabla F_{\cD}(w_r^t)}^2\right]\leq& 64 L_0^2 \left(\frac{\kappa}{b}+\frac{\kappa\tkappa d\log(1/\delta)}{b^2\varepsilon^2}\right)\left(1-\frac{1}{\sqrt{t+1}}\right)^{2t}\\
  &+ 128(L_0^2+L_1^2D^2)\left(\frac{\kappa^2}{b}\,\sqrt{t+1}+\frac{\kappa\tkappa d\log(1/\delta)}{b^2\varepsilon^2}\,(t+1)^{3/2}\right).
\end{align*}
Hence, by property~\eqref{eqn:k_equiv_norm} of $\kappa$-regular norms and using Jensen's inequality together with the subadditivity of the square root, we  conclude
\begin{align*}
  &\E\left[\dual{\tgrd_r^t-\nabla F_{\cD}(w_r^t)}\right]\leq \sqrt{\E\left[\eqnorm{\tgrd_r^t-\nabla F_{\cD}(w_r^t)}^2\right]}\\
  &\leq 8L_0 \left(\sqrt{\frac{\kappa}{b}}+\frac{\sqrt{\kappa\tkappa d\log(1/\delta)}}{b\varepsilon}\right)\left(1-\frac{1}{\sqrt{t+1}}\right)^{t}+16(L_0+L_1D)\left(\frac{\kappa}{\sqrt{b}}\,(t+1)^{1/4}+\frac{\sqrt{\kappa\tkappa d\log(1/\delta)}}{b\varepsilon} \, (t+1)^{3/4}\right).
\end{align*}
\end{proof}

The proof of the convergence guarantee has a similar outline to that of Theorem~\ref{thm:convergence-polySFW} with a few exceptions to account for the additional noise in the gradient estimates $\tgrd_r^t$. \mnote{Moved preceeding statement from after theorem to before theorem}
\paragraph{Proof of Theorem \ref{thm:convergence-noisySFW}}
For any iteration $(r, t)$, using the same derivation approach as in the proof of Theorem~\ref{thm:convergence-polySFW}, we  arrive at the following bound:
$$F_{\cD}(w_r^t)\leq F_\cD(w_r^{t}) +  \eta_{r, t} D \dual{\nabla F_\cD(w_r^t) - \grd_r^t} -\eta_{r, t}\gap_{F_{\cD}}(w_r^t) +\frac{L_1 D^2 \eta_{r,t}^2}{2}$$
Thus, using the bound of Lemma~\ref{lem:grad-error-noisySFW}, the expected stationarity gap of any given iterate $w_r^t$ can be bounded as: 
\begin{align*}
    \E[\gap_{F_{\cD}}(w_r^t)]\leq&~ \frac{\E[F_{\cD}(w_r^t)-F_{\cD}(w_r^{t+1})]}{\eta_{r, t}}+D\,\E\left[\dual{\grd_r^t-\nabla F_{\cD}(w_r^t)}\right]+\frac{L_1 D^2 \eta_{r,t}}{2}\\
    \leq&~ \sqrt{t+1}\left(\E[F_{\cD}(w_r^t)-F_{\cD}(w_r^{t+1})]\right)+ \frac{L_1 D^2}{2\sqrt{t+1}}+ 8 D L_0 \left(\sqrt{\frac{\kappa}{b}}+\frac{\sqrt{d\kappa\tkappa\log(1/\delta)}}{b\varepsilon}\right) \left(1-2^{-r/2}\right)^{t} \\
	&+ 16 D\left( L_1 D + L_0 \right)\left(\frac{\kappa}{\sqrt{b}}(t+1)^{1/4}+\frac{\sqrt{d\kappa\tkappa\log(1/\delta)}}{b\varepsilon}(t+1)^{3/4}\right).
\end{align*}
For any given $r \in \{0, \ldots, R-1\},$ we now sum both sides of the above inequality over $t\in \{0, \ldots, 2^{r}-1\}$ as we did in the proof of Theorem~\ref{thm:convergence-polySFW}.  Let $\Gamma_r\triangleq\sum_{t=0}^{2^{r}-1}\sqrt{t+1}\left(\E[F_{\cD}(w_r^t)-F_{\cD}(w_r^{t+1})]\right).$ Observe that
\begin{align*}
    \sum_{t=0}^{2^r-1}\E[\gap_{F_{\cD}}(w_r^t)]\leq&~ \Gamma_r+\frac{L_1 D^2}{2}\sum_{t=1}^{2^r}\frac{1}{\sqrt{t}}+8 D L_0 \left(\sqrt{\frac{\kappa}{b}}+\frac{\sqrt{d\kappa\tkappa\log(1/\delta)}}{b\varepsilon}\right) \sum_{t=0}^{2^r-1}\left(1-2^{-r/2}\right)^t\\ 
    &+ 16 D\left( L_1 D + L_0 \right)\left(\frac{\kappa}{\sqrt{b}}\sum_{t=1}^{2^r}t^{1/4}+\frac{\sqrt{d\kappa\tkappa\log(1/\delta)}}{b\varepsilon}\sum_{t=1}^{2^r}t^{3/4}\right)\\
    \leq&~ \Gamma_r+L_1D^2 \,2^{r/2}+8 D L_0 \left(\sqrt{\frac{\kappa}{b}}+\frac{\sqrt{d\kappa\tkappa\log(1/\delta)}}{b\varepsilon}\right)2^{r/2}\\
    &+32D\left( L_1 D + L_0 \right)\left(\frac{\kappa}{\sqrt{b}}2^{5r/4}+\frac{\sqrt{d\kappa\tkappa\log(1/\delta)}}{b\varepsilon}2^{7r/4}\right).
\end{align*}
Next, using exactly the same technique we used in the proof of Theorem~\ref{thm:convergence-polySFW}, we can bound $\Gamma_r \leq  3 L_0D \,2^{r/2}.$ Thus, we arrive at 
\begin{align*}
    \sum_{t=0}^{2^r-1}\E[\gap_{F_{\cD}}(w_r^t)]\leq&~3D\left(L_0+L_1D\right)\,2^{r/2}+8 D L_0 \left(\sqrt{\frac{\kappa}{b}}+\frac{\sqrt{d\kappa\tkappa\log(1/\delta)}}{b\varepsilon}\right)2^{r/2}\\
    &+32D\left( L_1 D + L_0 \right)\left(\frac{\kappa}{\sqrt{b}}2^{5r/4}+\frac{\sqrt{d\kappa\tkappa\log(1/\delta)}}{b\varepsilon}2^{7r/4}\right)
\end{align*}

Now, summing over $r\in \{0, \ldots, R-1\}$, we have
\begin{align*}
    \sum_{r=0}^{R-1}\sum_{t=0}^{2^r-1}\E[\gap_{F_{\cD}}(w_r^t)]\leq&~ 9 D\left(L_0+L_1D\right)\, 2^{R/2}+24 D L_0 \left(\sqrt{\frac{\kappa}{b}}+\frac{\sqrt{d\kappa\tkappa\log(1/\delta)}}{b\varepsilon}\right)2^{R/2}\\
    &+48 D\left( L_1 D + L_0 \right)\frac{\kappa}{\sqrt{b}}2^{5R/4}+24 D\left( L_1 D + L_0 \right)\frac{\sqrt{d\kappa\tkappa\log(1/\delta)}}{b\varepsilon}2^{7R/4}.
\end{align*}

Since the output $\widehat{w}$ is uniformly chosen from the set of all $2^R$ iterates, then averaging over all the iterates gives the following (after some algebra similar to what we did in the proof of Theorem~\ref{thm:convergence-polySFW})
\begin{align*}
\E[\gap_{F_{\cD}}(\widehat{w})]=&\frac{1}{2^R}\sum_{r=0}^{R-1}\sum_{t=0}^{2^r-1}\E[\gap_{F_{\cD}}(w_r^t)]\leq~ 9 D(L_0+L_1D)2^{-R/2} + 24 DL_0\left(\sqrt{\frac{\kappa}{b}}+\frac{\sqrt{d\kappa\tkappa\log(1/\delta)}}{b\varepsilon}\right)2^{-R/2}\\&+ 48 D(L_0+L_1D)\frac{\kappa}{\sqrt{b}}2^{R/4}+24 D(L_0+L_1D)\frac{\sqrt{d\kappa\tkappa\log(1/\delta)}}{b\varepsilon}2^{3R/4}.
\end{align*}
Plugging $R=\frac{4}{5}\log\left(\frac{n\varepsilon}{\sqrt{d\tkappa\log(1/\delta)}\,\kappa^{5/3}\log^2(n)}\right),$ we finally get
\begin{align*}
\E[\gap_{F_{\cD}}(\widehat{w})]\leq&~ 9D(L_0+L_1D)\kappa^{2/3}\,\frac{d^{1/5}\,\tkappa^{1/5}\log^{1/5}(1/\delta)\log^{4/5}(n)}{n^{2/5}\varepsilon^{2/5}} \\
+&~ 24DL_0\,\kappa^{2/3}\left(\sqrt{\frac{\kappa\log^2(n)}{n}}+\frac{\sqrt{d\kappa\tkappa\log(1/\delta)}\log^2(n)}{n\varepsilon}\right)\frac{d^{1/5}\,\tkappa^{1/5}\log^{1/5}(1/\delta)\log^{4/5}(n)}{n^{2/5}\varepsilon^{2/5}}\\
&+ 48 D(L_0+L_1D)\kappa^{2/3}\frac{\varepsilon^{1/5}\log^{3/5}(n)}{n^{3/10}\left(d\tkappa\log(1/\delta)\right)^{1/10}}+24D(L_0+L_1D)\frac{d^{1/5}\,\tkappa^{1/5}\log^{1/5}(1/\delta)\log^{4/5}(n)}{\kappa^{1/2}\, n^{2/5}\,\varepsilon^{2/5}}\\
=~& O\left(D(L_0+L_1D)\kappa^{2/3}\left(\frac{\varepsilon^{1/5}\log^{3/5}(n)}{n^{3/10}\left(d\tkappa\log(1/\delta)\right)^{1/10}}+\frac{d^{1/5}\,\tkappa^{1/5}\log^{1/5}(1/\delta)\log^{4/5}(n)}{n^{2/5}\varepsilon^{2/5}}\right)\right),
\end{align*}
Now, observe that the bound above is dominated by the first term when $d\,\tkappa = o\left(\frac{n^{1/3}\varepsilon^2}{\log(1/\delta)\log^{2/3}(n)}\right)$. Moreover, note that the first term is decreasing in $d$. Thus, we can obtain a more refined bound via the following simple argument. When $d\,\tkappa = o\left(\frac{n^{1/3}\varepsilon^2}{\log(1/\delta)\log^{2/3}(n)}\right)$, we embed our optimization problem in higher dimensions; namely, in $d'$ dimensions, where $d'$ satisfies: $d'\big(1+\log(d')\cdot\ind(p<2)\big)=\Theta\left(\frac{n^{1/3}\varepsilon^2}{\log(1/\delta)\log^{2/3}(n)}\right)$. In such case, the bound above (with $d=d'$) becomes $O\left(D(L_0+L_1D)\kappa^{2/3}\,\frac{\log^{2/3}(n)}{n^{1/3}}\right)$. When $d\,\tkappa=\Omega\left(\frac{n^{1/3}\varepsilon^2}{\log(1/\delta)\log^{2/3}(n)}\right)$, the bound above is dominated by the second term. Putting these together, we finally arrive at the claimed bound: 
$$O\left(D(L_0+L_1D)\kappa^{2/3}\left(\frac{\log^{2/3}(n)}{n^{1/3}}+\frac{d^{1/5}\,\tkappa^{1/5}\log^{1/5}(1/\delta)\log^{4/5}(n)}{n^{2/5}\varepsilon^{2/5}}\right)\right).$$

\cnote{Where is the end of proof here? I don't see the square.}\rnote{This is because we prove the theorem later and not right after the theorem statement. I don't think it's a big issue. Btw, I moved the sentence that was here regarding the Euclidean case to right after the theorem statement. I think it should have been there not here.}

\section{Algorithm for Weakly Convex Non-smooth Losses}\label{sec:weakcnvx}

Our final setting is DP stochastic {\em weakly convex} optimization. 
Much of the theory of weakly convex \cnote{correction: weakly convex, instead of weakly smooth} functions is available in  \cite{RockaWets:1998}, but we provide a self-contained exposition in Appendix \ref{sec:app-weak_cvx_basics}.\footnote{Our motivation to reproduce the basic theory stems from the fact that \cite{RockaWets:1998} and much of the literature of weakly convex functions focuses on Euclidean settings, whereas we are interested in more general $\ell_p$ settings.} We recall that a function $f:{\cal W}\mapsto\mathbb{R}$
is $\rho$-weakly convex w.r.t.~$\|\cdot\|$ if for all $0\leq\lambda\leq 1$ and $w,v\in{\cal W}$, 
\begin{equation}\label{eqn:wk_cvx}
    f(\lambda w+(1-\lambda)v) \leq \lambda f(w)+(1-\lambda)f(v)+\frac{\rho\lambda(1-\lambda)}{2}\|w-v\|^2.
\end{equation} 
It is easy to see that any $L_1$-smooth function is indeed $L_1$-weakly convex, so weak convexity encompasses smooth non-convex functions (see Corollary \ref{cor:smooth_implies_wk_cvx} in Appendix \ref{sec:app-weak_cvx_basics}). However, this extension is interesting as it also contains some classes of non-smooth functions. 

\subsection{Proximal-Type Operator and Proximal Near Stationarity} \label{sec:prox_operator}

The next property is crucial for regularization of weakly smooth functions, and it would allow us to make sense of a proximal-type operator in some non-Euclidean norms. 
\mnote{Removed appendix reference}

\begin{prop}\label{prop:reg_wk_cvx}
Let $\|\cdot\|$ be a norm such that $\frac12\|\cdot\|^2$ is $\nu$-strongly convex w.r.t.~$\|\cdot\|$. If $f$ is $\rho$-weakly convex and $\nu\beta\geq\rho$, then the function $w\mapsto f(w)+\frac{\beta}{2}\|w-u\|^2$ is $(\nu\beta-\rho)$-strongly convex w.r.t.~$\|\cdot\|$.
\end{prop}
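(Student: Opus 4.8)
The statement is essentially a bookkeeping exercise combining the definition of weak convexity with the strong convexity of $\frac12\|\cdot\|^2$. The plan is to unfold the definition of strong convexity for the regularized function $h(w) \triangleq f(w) + \frac{\beta}{2}\|w-u\|^2$ along an arbitrary segment, and show that the quadratic defect on the right-hand side is at least $(\nu\beta-\rho)$ times the usual quadratic term $\frac{\lambda(1-\lambda)}{2}\|w-v\|^2$. Concretely, fix $w,v\in\cW$ and $\lambda\in[0,1]$, set $w_\lambda = \lambda w + (1-\lambda)v$, and estimate $h(w_\lambda)$ from above by handling the two summands separately.

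First I would bound the $f$ term using $\rho$-weak convexity (\ref{eqn:wk_cvx}): $f(w_\lambda) \leq \lambda f(w) + (1-\lambda)f(v) + \frac{\rho\lambda(1-\lambda)}{2}\|w-v\|^2$. Next I would bound the quadratic term using that $\frac12\|\cdot\|^2$ is $\nu$-strongly convex, i.e.
\[
\tfrac12\|w_\lambda - u\|^2 \leq \lambda\,\tfrac12\|w-u\|^2 + (1-\lambda)\,\tfrac12\|v-u\|^2 - \nu\,\tfrac{\lambda(1-\lambda)}{2}\|w-v\|^2,
\]
so that $\frac{\beta}{2}\|w_\lambda-u\|^2 \leq \lambda\frac{\beta}{2}\|w-u\|^2 + (1-\lambda)\frac{\beta}{2}\|v-u\|^2 - \nu\beta\frac{\lambda(1-\lambda)}{2}\|w-v\|^2$. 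Adding the two inequalities gives
\[
h(w_\lambda) \leq \lambda h(w) + (1-\lambda) h(v) - (\nu\beta - \rho)\tfrac{\lambda(1-\lambda)}{2}\|w-v\|^2,
\]
which is exactly the statement that $h$ is $(\nu\beta-\rho)$-strongly convex w.r.t.~$\|\cdot\|$ (using the convention that $g$ is $\mu$-strongly convex iff $g - \frac{\mu}{2}\|\cdot\|^2$ is convex, equivalently the midpoint/segment inequality above); the hypothesis $\nu\beta\geq\rho$ guarantees this modulus is nonnegative. I would note for completeness that one direction needs the elementary identity/inequality characterizing strong convexity via the segment inequality, which in the non-Euclidean case is the appropriate definition (and is the one presumably recorded in Appendix~\ref{sec:app-weak_cvx_basics}).

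There is really no serious obstacle here; the only mild subtlety is making sure the definition of strong convexity being used for $\frac12\|\cdot\|^2$ is the segment form rather than a subgradient form, since in a general normed space these are not interchangeable without care, and ensuring the same convention is used consistently for the conclusion about $h$. If instead the paper's working definition of $\nu$-strong convexity of $\frac12\|\cdot\|^2$ is phrased via subgradients, I would first pass to the segment inequality (which follows by a standard one-dimensional argument applied to $t\mapsto \frac12\|v + t(w-v) - u\|^2$, or is simply taken as the definition), and then proceed as above. The whole argument is a couple of lines once the two bounds are in place.
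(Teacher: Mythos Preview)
Your proposal is correct and follows exactly the same approach as the paper: write down the segment-form inequality for $\rho$-weak convexity of $f$ and for $\nu$-strong convexity of $w\mapsto\frac{\beta}{2}\|w-u\|^2$ (a translate of $\frac{\beta}{2}\|\cdot\|^2$), then add the two to obtain the $(\nu\beta-\rho)$-strong convexity of $h$. The paper's proof is precisely these two lines plus the addition, with the same use of the hypothesis $\nu\beta\geq\rho$.
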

\begin{proof}
By strong convexity of $\frac12\|\cdot\|^2$ and weak convexity of $f$:
\begin{align*}
\frac{\beta}{2}\|[\lambda w+(1-\lambda)v]-u\|^2
&\leq \lambda \frac{\beta}{2}\|w-u\|^2+(1-\lambda) \frac{\beta}{2}\|v-u\|^2- \frac{\beta \nu\lambda(1-\lambda)}{2}\|w-v\|^2 \\
f(\lambda w+(1-\lambda)v) &\leq \lambda f(w)+(1-\lambda)f(v)+\frac{\rho\lambda(1-\lambda)}{2}\|w-v\|^2
\end{align*}
Adding these inequalities, and using that $\nu\beta\geq\rho$, we conclude the $(\nu\beta-\rho)$-strong convexity of $f(\cdot)+\frac{\beta}{2}\|\cdot-u\|^2$, concluding the proof.
%
\end{proof}

We provide now some useful results regarding a proximal-type mapping for weakly convex functions in normed spaced. This provides a non-Euclidean counterpart to results in \cite{RockaWets:1998,DG:2019,DD:2019}. 
First, given ${\cal W}\subseteq\mathbf{E}$ a closed and convex set, we define the 
proximal-type mapping as: 
\begin{eqnarray}
\prox_{f}^{\beta }(w) &=& \arg\min_{v\in {\cal W}}\big[f(v)+\frac{\beta}{2}\|v-w\|^2\big]. \label{eqn:prox_map}
\end{eqnarray}
Despite the stark similarity with the Euclidean proximal operator, the characterization of proximal points is in general different (due to the formula for the subdifferential of the squared norm), so we need to re-derive some near-stationarity estimates derived in \cite{DD:2019,DG:2019}. 

\begin{lem} \label{lem:prox_estimate}
Let $\|\cdot\|$ be such that $\frac12\|\cdot\|^2$ is differentiable and $\nu$-strongly convex w.r.t.~$\|\cdot\|$, let
$f:\mathbf{E}\mapsto\mathbb{R}$ be a $\rho$-weakly convex subdifferentiable function, ${\cal W}\subseteq\mathbf{E}$ a closed, convex set with diameter $D$, and $\beta>\rho/\nu$.
Then, for any $w\in {\cal W}$, the proximal-type mapping $\hat w=\mbox{prox}_{f}^{\beta}(w)$ (given in \eqref{eqn:prox_map}) 
is well-defined, and moreover there exists $g\in \partial f(\hat w)$ such that
\[ \sup_{v\in {\cal W}}\langle g,\hat w-v\rangle \leq \beta D\|w-\hat w\|.\]
\end{lem}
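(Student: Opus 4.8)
The plan is to exploit Proposition~\ref{prop:reg_wk_cvx}: since $\frac12\|\cdot\|^2$ is $\nu$-strongly convex and $\beta > \rho/\nu$, the function $\phi(v) \triangleq f(v) + \frac{\beta}{2}\|v-w\|^2$ is $(\nu\beta - \rho)$-strongly convex on $\cW$, and in particular it is strongly convex with a unique minimizer on the closed convex set $\cW$; this gives that $\hat w = \prox_f^\beta(w)$ is well-defined. First I would write down the first-order optimality condition for $\hat w$ as a minimizer of $\phi$ over $\cW$: there exists a subgradient $g \in \partial f(\hat w)$ and an element $h$ of the subdifferential of $v\mapsto \frac{\beta}{2}\|v-w\|^2$ at $\hat w$ such that $\langle g + h, v - \hat w\rangle \geq 0$ for all $v \in \cW$. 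Since $\frac12\|\cdot\|^2$ is assumed differentiable, $h = \beta\, \nabla\!\big(\tfrac12\|\cdot\|^2\big)(\hat w - w)$ is a genuine gradient; call it $h = \beta\, s$ where $s = \nabla\!\big(\tfrac12\|\cdot\|^2\big)(\hat w - w)$. Care is needed to justify that the subdifferential of the sum splits as the sum of subdifferentials — this is standard since one summand is convex and differentiable (or finite-valued and convex), so the sum rule applies; and one must also make sure the optimality condition on the constrained problem is of the stated ``variational inequality'' form, which follows from convexity of $\cW$ and $\phi$.

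From the optimality inequality, for any $v \in \cW$ we get
\[
\langle g, \hat w - v\rangle \leq \langle h, v - \hat w\rangle = \beta \langle s, v - \hat w\rangle.
\]
Now the key is to bound $\langle s, v - \hat w\rangle$. The gradient of $\tfrac12\|\cdot\|^2$ at a point $y$ is an element of the subdifferential of $\tfrac12\|\cdot\|^2$ at $y$, and a standard fact is that such an element has dual norm equal to $\|y\|$; hence $\dual{s} = \|\hat w - w\|$. Therefore, by the definition of the dual norm and the diameter bound $\|v - \hat w\| \leq D$,
\[
\beta \langle s, v - \hat w \rangle \leq \beta\, \dual{s}\, \|v - \hat w\| \leq \beta D \|w - \hat w\|.
\]
Taking the supremum over $v \in \cW$ yields $\sup_{v\in\cW}\langle g, \hat w - v\rangle \leq \beta D \|w - \hat w\|$, as desired.

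The main obstacle, such as it is, is the careful handling of the first-order condition and the identification $\dual{\nabla(\tfrac12\|\cdot\|^2)(y)} = \|y\|$ in a general (possibly non-Euclidean) normed space; these are where the non-Euclidean setting differs from the classical Euclidean treatment in \cite{DD:2019,DG:2019}, and the differentiability hypothesis on $\tfrac12\|\cdot\|^2$ is precisely what makes these steps go through cleanly (in the Euclidean case $\nabla(\tfrac12\|\cdot\|^2)(y) = y$ and the identity is trivial). Everything else — existence/uniqueness of $\hat w$, convexity of the subproblem, the sum rule for subdifferentials with one differentiable summand — is routine given Proposition~\ref{prop:reg_wk_cvx}. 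I would also remark that the subgradient $g$ produced here is the specific one $g = -\beta s \in \partial f(\hat w)$ forced by optimality when $\hat w$ lies in the interior of $\cW$, and in general $g = -\beta s - (\text{normal cone element})$; but since the statement only asks for \emph{existence} of such a $g$, the displayed inequality is exactly what the optimality condition delivers.
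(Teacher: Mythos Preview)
Your proposal is correct and follows essentially the same approach as the paper: establish strong convexity via Proposition~\ref{prop:reg_wk_cvx}, invoke the first-order optimality condition together with the subdifferential sum rule, and bound the dual norm of the (sub)gradient of the squared-norm regularizer by $\|\hat w-w\|$. The only minor difference is that the paper applies the chain rule through $\|\cdot\|$ (writing the subdifferential as $\beta\|\hat w-w\|\,\partial(\|\cdot\|)(\hat w-w)$ and proving $\|h\|_*=1$ via Fenchel equality, which forces a case split at $\hat w=w$), whereas you use the differentiability of $\tfrac12\|\cdot\|^2$ directly to get $\dual{s}=\|\hat w-w\|$ in one stroke; this is a slight simplification but the same argument.
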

\begin{proof}
First, notice that the proximal-type mapping can be computed as a solution of the optimization problem
\begin{eqnarray}
\min_{v\in {\cal W}}\big[f(v)+\frac{\beta}{2}\|v-w\|^2\big]. \label{eqn:Moreau}
\end{eqnarray}
By Proposition \ref{prop:reg_wk_cvx}, problem \eqref{eqn:Moreau} is strongly convex, and therefore it has a unique solution; in particular, $\hat w$ is well-defined and unique. 
Next, we use the optimality conditions of constrained convex optimization for problem \eqref{eqn:Moreau}, together with the subdifferential of the sum rule (Theorem \ref{thm:sum_subdiff}), and the chain rule of the convex subdifferential; to conclude that
\begin{equation}\label{eqn:opt_cond_prox_type} 
\Big(\partial f(\hat w)+\beta\|\hat w-w\|\,\partial(\|\cdot\|)(\hat w-w)\Big) \cap -{\cal N}_{\cal W}(\hat{w})\neq \emptyset. 
\end{equation}
First, consider the case where $\hat w=w$, then there exists $g\in \partial f(\hat{w})$ s.t., $\langle g,\, \hat{w}-v\rangle \leq 0$, for all $v\in {\cal W}$, which shows the desired conclusion. In the case $\hat w\neq w$, consider $g\in \partial f(\hat{w})$ and $h\in\partial(\|\cdot\|)(\hat w-w)$ such that by \eqref{eqn:opt_cond_prox_type}, $\langle g+\beta\|\hat w-w\|h,v-\hat w \rangle \geq 0$, for all $v\in {\cal W}$. We first prove that $\|h\|_{\ast}=1$. Indeed, first $\|h\|_{\ast}\leq 1$ since the norm is 1-Lipschitz. The reverse inequality follows from the equality in the Fenchel inequality, when $h$ is a subgradient \cite{HiriartUrruty:2001},
\[ \|\hat w-w\|= \|\hat w-w\|+\chi_{{\cal B}_{\ast}(0,1)}(h)=\langle h,\hat w-w\rangle. \]
Since $\hat w\neq w$, this shows in particular that $\|h\|_{\ast}=1$. We conclude that in this case, $\langle g,\hat w-v \rangle \leq \beta D\|w-\hat w\|$, for all $v\in {\cal W}$, which concludes the proof. \cnote{Made a small correction $p\mapsto h$, as it was unclear to which $p$ I was referring to.}
\end{proof}

The previous lemma is the key insight on the accuracy guarantee and algorithms we will use for  stochastic weakly convex optimization. First, note that in the weakly convex setting it is unlikely to find points with small norm of the gradient or small stationarity gap; however, we will settle for points $w\in{\cal W}$ which are $\vartheta$-{\em close to a nearly-stationary point} \cite{DD:2019,DG:2019}, i.e., that satisfies
\begin{equation}\label{eqn:close_near_stationary}
    (\exists \hat w\in{\cal W})(\exists g\in \partial f(\hat w)): \quad \|w-\hat w\|\leq \vartheta\quad \mbox{ and }\quad \sup_{v\in{\cal W}}\langle g,\hat w-v\rangle\leq \vartheta.
\end{equation}
Above, $\vartheta\geq 0$ is the accuracy parameter. This accuracy measure states that $w$ is at distance at most $\vartheta$ from a $\vartheta$-nearly stationary point. 
It is then apparent how the proximal-type operator can certify \eqref{eqn:close_near_stationary}. For convenience, we define a notion of efficiency in weakly-convex DP-SO, particularly geared towards algorithms that certify close to near stationarity via the proximal-type mapping.

\begin{defn}[Proximal Near Stationarity] \label{def:prox_near_stat} A randomized algorithm  
${\cal A}:{\cal Z}^n\mapsto\mathbf{E}$, for the stochastic optimization problem $\min_{w\in {\cal W}}F_{\cal D}(w)$, achieves $(\vartheta,\beta)$-proximal  
near stationarity if
\begin{equation} \label{eqn:prox_near_stat}
\mathbb{E}_{S\sim{\cal D}^n,{\cal A}}\big[\|\prox_{F_{\cal D}}^{\beta}({\cal A}(S))-{\cal A}(S)\| \big] \leq \vartheta/\max\{1,\beta D\}. 
\end{equation}
\end{defn}
Notice the maximum in the denominator is a normalizing factor, inspired by Lemma \ref{lem:prox_estimate}.
Note further that, by Lemma  \ref{lem:prox_estimate}, an algorithm with proximal near stationarity ensures closeness to nearly stationary points through its proximal-type mapping: namely, if ${\cal A}$ satisfies Definition \ref{def:prox_near_stat}, then 
\begin{equation*} 
\mathbb{E}_{S\sim{\cal D}^n,{\cal A}}\big[\|\prox_{F_{\cal D}}^{\beta}({\cal A}(S))-{\cal A}(S)\| \big] \leq \vartheta \qquad\mbox{ and }\qquad 
\mathbb{E}_{S\sim{\cal D}^n,{\cal A}} \big[\gap_{F_{\cal D}}\big(\prox_{F_{\cal D}}^{\beta}({\cal A}(S))\big)\big] \leq \vartheta.
\end{equation*}
In the above, some technical caution must be taken to define the gap function in the stochastic non-smooth case, which we defer to Appendix \ref{sec:PNS_conseq}.
Although not defined under this name, this is precisely the certificate achieved in weakly-convex SO in recent literature \cite{DG:2019,DD:2019}.

\subsection{Proximally Guided Private Stochastic Mirror Descent} \label{sec:prox_guided_DP_SMD}

\begin{algorithm}[!h]
\caption{Proximally Guided Private Stochastic Mirror Descent}
\begin{algorithmic}[1]

\REQUIRE Private dataset $S=(z_1,\ldots,z_n)\in{\cal Z}^n$, number of rounds $R$, $\beta>0$ regularization parameter

\STATE Let $\overline{p}=\max\{p,1+1/\log d\}$, and choose initialization $w_1\in {\cal W}$

\FOR{$r =1$ to $R$}
\STATE Extract batch $S_r$ from $S\setminus \bigcup_{l<r} S_r$ of size, $n_r=n/R$

\STATE Let $w_{r+1}$ the the output of $\Asc$ on data $S_r$ for the objective
    \begin{equation} \label{eqn:proximal_weak_cvx_step} \min_{w\in{\cal W}}F_r(w):=\big\{F_{\cal D}(w)+\frac{\beta}{2}\|w-w_r\|_{\overline{p}}^2\big\}
    \end{equation}

\ENDFOR

\STATE Output: Output $\overline{w}^R$, chosen uniformly at random from $(w_r)_{r\in[R]}$.
\end{algorithmic}
\label{Alg:PGPSMD}
\end{algorithm}

Now we provide an algorithm (Algorithm \ref{Alg:PGPSMD}) \mnote{Added alg reference} for DP-SO with weakly convex losses that certifies proximal near stationarity. This algorithm is inspired by the {\em proximally guided stochastic subgradient method} of Davis and Grimmer \cite{DG:2019}, where the proximal subproblems are solved using an optimal algorithm for DP-SCO in the strongly convex case, proposed in \cite{Asi:2021}, that we call $\Asc$ (see Theorem \ref{thm:sc_dp_sco} below). 
Our algorithm works in rounds $r=1,\ldots, R$, and at each round the  proximal-type mapping subproblem
\[ \min_{w\in {\cal W}} F_r(w)=\Big\{ F_{\cal D}(w)+\frac{\beta}{2}\|w-w_r\|_{\bar p}^2 \Big\}, \]
is approximately solved using a separate minibatch of size $n/R$ with algorithm $\Asc$. The $\bar{p}$ used in the subproblem norm is chosen as $\bar p =\max\{p,1+1/\log d\}$, in order to control the strong convexity.
Finally, the output is chosen uniformly at random from the iterates. 
\mnote{Removed appendix reference}



\begin{thm}[Theorem~8 in \cite{Asi:2021}] \label{thm:sc_dp_sco}
Consider the $\ell_p$ setting of $\lambda$-strongly convex stochastic optimization, where $1\leq p\leq 2$.
There exists an $(\varepsilon,\delta)$-differentially private algorithm $\Asc$ with excess risk
$$ O\Big(\frac{L_0^2}{\lambda}\Big[\frac{\kappa}{n}+\frac{\tilde{\kappa}\kappa^2 d\log(1/\delta)}{ n^2\varepsilon^2}\Big]  \Big), $$
where $\kappa=\min\{1/(p-1),\log d\}$ and $\tilde{\kappa} = 1+ \log{d}\cdot\ind(p<2)$. 
This algorithm runs in time $O(\log n\cdot\log\log n\cdot\min\{n^{3/2}\sqrt{\log d},n^2\varepsilon/\sqrt{d} \})$.
\end{thm}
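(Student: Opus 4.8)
This statement is quoted verbatim from \cite[Theorem~8]{Asi:2021}, so the plan is to invoke that reference; for completeness we sketch the argument one would reconstruct. The algorithm $\Asc$ is a private \emph{localized} (phased) stochastic mirror-descent method. The first ingredient is a regularizer adapted to the $\ell_p$ geometry: for $1<p\le 2$ set $\bar p=\max\{p,1+1/\log d\}$ and take $\psi(w)=\tfrac{1}{2(\bar p-1)}\norm{w}_{\bar p}^2$, which is $1$-strongly convex w.r.t.\ $\norm{\cdot}_{\bar p}$ and has range $O(\kappa D^2)$ over a set of $\norm{\cdot}_p$-diameter $D$, where $\kappa=\min\{1/(p-1),\log d\}$; this is the same $\kappa$-regularity of $\ell_q$, $q\ge 2$, used throughout Section~\ref{sec:smooth-ncnvx}.

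The second ingredient is noisy mirror descent. At each step one perturbs a minibatch stochastic gradient by isotropic Gaussian noise calibrated to the $\ell_2$-sensitivity of that gradient: a single loss gradient is $L_0$-bounded in the dual norm $\norm{\cdot}_q$, hence at most $L_0 d^{1/p-1/2}$ in $\ell_2$, so a minibatch of size $m$ (sampled without replacement) has $\ell_2$-sensitivity $O(L_0 d^{1/p-1/2}/m)$; adding $\cN(0,\sigma^2\mathbb{I}_d)$ with $\sigma=O\big(L_0 d^{1/p-1/2}\sqrt{\log(1/\delta)}/(m\varepsilon)\big)$ and composing over the iterations by advanced composition (Lemma~\ref{lem:adv_comp}) gives $(\varepsilon,\delta)$-DP for one pass. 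The price of the noise in the mirror-descent regret is $O\big(\kappa\,\ex{}{\norm{N}_q^2}\big)$ per step, and bounding a $d$-dimensional Gaussian in $\ell_q$ yields $\ex{}{\norm{N}_q^2}=O\big(d^{2/q}\tilde{\kappa}\,\sigma^2\big)$, with $\tilde{\kappa}=1+\log d\cdot\ind(p<2)$ absorbing the $\ell_\infty\!\to\!\ell_q$ slack when $p$ is close to $1$ --- exactly the estimate used in the proof of Lemma~\ref{lem:grad-error-noisySFW}.

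The third ingredient handles strong convexity by localization. Since $F_{\cD}$ is $\lambda$-strongly convex w.r.t.\ $\norm{\cdot}_p$, one splits $S$ into $O(\log n)$ disjoint batches and runs the above noisy mirror descent in phases, where phase $j$ is constrained to a ball of radius $D_j$ about the previous iterate with step size and batch size tuned so the combined optimization-plus-noise error is at most $\lambda D_j^2/8$; strong convexity then contracts $D_{j+1}\le D_j/2$, and after $O(\log n)$ phases the squared distance to the optimum, hence the excess risk, is $O\big(\tfrac{L_0^2}{\lambda}[\tfrac{\kappa}{n}+\tfrac{\tilde{\kappa}\kappa^2 d\log(1/\delta)}{n^2\varepsilon^2}]\big)$. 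The running-time bound follows from the choice of per-step minibatch size: minibatches of size $\Theta(\sqrt d/\varepsilon)$ make the noisy iteration as sample-efficient as non-private SGD and cost $O(n^2\varepsilon/\sqrt d)$ gradient evaluations, while an alternative implementation that solves each strongly-convex ERM subproblem to high accuracy costs $O(n^{3/2}\sqrt{\log d})$; the extra $\log n\cdot\log\log n$ factor is the number of phases times the accuracy/line-search overhead inside a phase.

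The main obstacle, were one writing this from scratch, is the tension between geometry and privacy: privacy accounting forces \emph{isotropic} noise, which must then be measured in the non-Euclidean regret, so getting the final exponents ($\kappa^2$, $\tilde{\kappa}$, linear in $d$) correct --- rather than a lossy bound like $\kappa\,d^{1/p}$ --- hinges on the sharp $\ell_q$-norm-of-Gaussian estimate together with the $\kappa$-regularity inequality~\eqref{eqn:k_equiv_norm}, plus keeping the range of $\psi$ under control via the truncated exponent $\bar p$.
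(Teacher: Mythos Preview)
Your proposal is correct and matches the paper's treatment: the paper does not prove this theorem but simply cites \cite[Theorem~8]{Asi:2021}, adding only the one-line remark that although that result is stated for $\ell_1$, the same mirror-descent algorithm and strong-convexity reduction extend verbatim to $\ell_p$, $1\le p\le 2$. Your sketch of the underlying argument (regularizer with truncated exponent $\bar p$, Gaussian-noised minibatch gradients with $\ell_2$-sensitivity $L_0 d^{1/p-1/2}/m$, the $\ell_q$-norm-of-Gaussian estimate giving the $\tilde{\kappa}$ factor, and phased localization for strong convexity) is more detailed than anything the paper itself provides and is consistent with how \cite{Asi:2021} proceeds.
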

We note in passing that \cite[Theorem~8]{Asi:2021} is stated only for the $\ell_1$-setting; however, since their mirror descent algorithm and reduction to the strongly convex case works more generally, the theorem stated above is a more general version of their result. 

We now state and prove our main result in this section.


\begin{thm} \label{thm:nsmth_ncnvx}
Consider the $\ell_p$ setting of $\rho$-weakly convex stochastic  optimization, where $1\leq p\leq 2$. Let $\kappa=\min\{1/(p-1),\log d\}$, $\tilde{\kappa} =1+ \log{d}\cdot\ind(p<2)$, and $\beta=2\rho\kappa$. Suppose that  $n d\geq \rho D/L_0$. 
Then the output of the Proximally Guided Private Stochastic Mirror Descent (Algorithm \ref{Alg:PGPSMD}) is $(\varepsilon,\delta)$-DP, and for \linebreak $R=\Big\lfloor\min\Big\{\sqrt{\frac{ n D \rho}{ \kappa L_0}}, \frac{1}{(\tilde{\kappa}\kappa^2)^{1/3}}\big(\frac{D(n\varepsilon)^2\rho}{L_0d\log(1/\delta)}\big)^{1/3} \Big\}\Big\rfloor$, it is
guaranteed to provide a $(\vartheta,\beta)$-proximal nearly stationary point, with 
\begin{equation} \label{eqn:CTNSP_DP_prox}
\vartheta= \frac{\max\{1,2\rho D\kappa\}}{\sqrt{\rho}}O\Big(\Big(\frac{L_0^{3}D\kappa}{n\rho}\Big)^{1/4} + (\tilde{\kappa}\kappa^2)^{1/6}(L_0^2D)^{1/3}\Big(\frac{d\log(1/\delta)}{(n\varepsilon)^2\rho}\Big)^{1/6}  \Big).
\end{equation}
The running time of this algorithm is upper bounded by $O\left(\log n \cdot\log\log n\cdot\min\left(n^{3/2}\sqrt{\log d},n^2\varepsilon/\sqrt{d}\right)\right).$ 
\cnote{Moved this sentence up here. It was cutoff by the proof.}
\begin{proof}
The privacy of this algorithm is certified by parallel composition and the privacy guarantees of $\Asc$. For the accuracy,  
first consider the case $p\geq 1+1/\log d$. Here, recall that $w\mapsto\frac{1}{2}\|w-\bar w\|_p^2$ is a $1/\kappa$-strongly convex function w.r.t.~$\|\cdot\|_p$ \cite{Beck:2017}, so we can choose $\nu=1/\kappa=(p-1)$ as the strong convexity parameter. 
Let $\hat w_r=\prox_{F_{\cal D}}^{\beta}(w_r)$ be the optimal solution to problem \eqref{eqn:proximal_weak_cvx_step}. Our goal now is to show that $\overline{w}^R$ is $\vartheta$-proximal nearly stationary. First, by Proposition \ref{prop:reg_wk_cvx}, $F_r$ is $(\beta/\kappa-\rho)$-strongly convex w.r.t.~$\|\cdot\|_p$. Since $(\beta/\kappa-\rho)=\rho$, we have by Theorem \ref{thm:sc_dp_sco} that
for all $r=1,\ldots,R$, 
\begin{equation} \label{eqn:guarantee_Asc} \mathbb{E}\big[F_r(w_{r+1})-F_r(\hat w_{r})\big] =
 O\Big(\frac{L_0^2}{\rho}\Big[\frac{\kappa}{ n_r}+\frac{\tilde{\kappa}\kappa^2 d\log(1/\delta)}{ n_r^2\varepsilon^2}\Big]  \Big). 
\end{equation}
By strong convexity of $F_r$, we have almost surely:
\begin{align} 
   F_{\cal D}(w_r) = F_{r}(w_r) &\geq F_{r}(\hat w_r) +\frac{\rho}{2}\|\hat w_r-w_r\|_p^2.   \label{eqn:dist_prox_risk} 
\end{align}
Hence, using \eqref{eqn:guarantee_Asc} and \eqref{eqn:dist_prox_risk}, we get
\begin{align*}
    \mathbb{E}\Big[F_{\cal D}(w_{r+1})+\frac{\beta}{2}\|w_{r+1}- w_r\|_p^2\Big] 
    &= \mathbb{E}[F_r(w_{r+1})]
    \leq \mathbb{E}[F_r(\hat w_r)]+O\Big(\frac{L_0^2}{\rho}\Big[\frac{\kappa}{n_r}+\frac{\tilde{\kappa}\kappa^2 d\log(1/\delta)}{ n_r^2\varepsilon^2}\Big]  \Big) \\
    &= \mathbb{E}\Big[F_{\cal D}(w_r)-\frac{\rho}{2}\|\hat w_r-w_r\|_p^2\Big]+ O\Big(\frac{L_0^2}{\rho}\Big[\frac{\kappa }{ n_r}+\frac{\tilde{\kappa}\kappa^2 d\log(1/\delta)}{ n_r^2\varepsilon^2}\Big]  \Big),
\end{align*}
and summing from $r=1,\ldots,R$, we obtain 
\begin{align*}
\frac1R\sum_{r=1}^R\mathbb{E}\|\hat w_r-w_r\|^2
&\leq \frac{2}{R\rho}\Big[\mathbb{E}[F(w_1)-F(w_{R+1})]+O\Big(\sum_{r=1}^R\frac{L_0^2}{\rho}\Big[\frac{\kappa}{ n_r}+\frac{\tilde{\kappa}\kappa^2 d\log(1/\delta)}{ n_r^2\varepsilon^2}\Big]\Big)\Big] \\
&= O\Big(\frac{1}{\rho}\Big\{\frac{L_0D}{R} +
\frac{L_0^2}{\rho}\Big[\kappa\frac{R}{ n}+\frac{\tilde{\kappa}\kappa^2 d\log(1/\delta)}{\varepsilon^2}\frac{R^2}{n^2}\Big]\Big\}\Big).
\end{align*}
Now we use that $R=\Big\lfloor\min\Big\{\sqrt{\frac{ n D \rho}{ \kappa L_0}}, \frac{1}{(\tilde{\kappa}\kappa^2)^{1/3}}\big(\frac{D(n\varepsilon)^2\rho}{L_0d\log(1/\delta)}\big)^{1/3} \Big\}\Big\rfloor$, which is at most $n$ by the assumption $n d\geq \rho D/L_0$. Then,
\begin{align*}
 \mathbb{E}\big[\|\prox_{F_{\cal D}}(\overline{w}^R)-\overline{w}^R\|_p^2\big]
&=  \frac1R\sum_{r=1}^R\mathbb{E}\big[\|\hat w_r-w_r\|_p^2\big]
=O\left(\frac{1}{\rho}\Big[\Big(\frac{L_0^{3}D\kappa}{n\rho}\Big)^{1/2} +(\tilde{\kappa}\kappa^2)^{1/3}(L_0^2D)^{2/3}\Big(\frac{d\log(1/\delta)}{(n\varepsilon)^2\rho} \Big)^{1/3}\Big]\right).  
\end{align*}
Finally, by the Jensen inequality, we have that
\[ 
\mathbb{E}\big[\max\{1,\beta D\}\|\prox_{F_{\cal D}}(\overline{w}^R)-\overline{w}^R\|_p\big]
\leq \frac{\max\{1,2\rho D\kappa\}}{\sqrt{\rho}}O\Big(\Big(\frac{L_0^{3}D\kappa}{n\rho}\Big)^{1/4} + (\tilde{\kappa}\kappa^2)^{1/6}(L_0^2D)^{1/3}\Big(\frac{d\log(1/\delta)}{(n\varepsilon)^2\rho}\Big)^{1/6}  \Big).
\]

Next, in the case $1\leq p<1+1/\log d$, we can use that $\|\cdot\|_{\bar p}$ and $\|\cdot\|_p$ are equivalent with a constant factor (recall that here $\bar p=1+1/\log d$). Using then $\|\cdot\|_{\bar p}$ in the algorithm and argument above clearly leads to the same conclusion with $\kappa=\log d$. Finally, the running time upper bound follows by Theorem \ref{thm:sc_dp_sco}.
%
\end{proof}
\end{thm}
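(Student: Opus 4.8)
The privacy claim is the quick part: the rounds $r=1,\ldots,R$ operate on disjoint subsamples $S_r$, each round runs only the $(\varepsilon,\delta)$-DP subroutine $\Asc$ on $S_r$, and the produced iterate is post-processed; by parallel composition and closure of DP under post-processing, Algorithm~\ref{Alg:PGPSMD} is $(\varepsilon,\delta)$-DP. For accuracy, the plan is to reduce each proximal subproblem~\eqref{eqn:proximal_weak_cvx_step} to a strongly convex DP-SCO instance, apply Theorem~\ref{thm:sc_dp_sco} per round, and telescope.

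I would first treat $p\ge 1+1/\log d$ (so $\bar p=p$), using that $\tfrac12\norm{\cdot}_p^2$ is $(p-1)$-strongly convex w.r.t.\ $\norm{\cdot}_p$; set $\nu=p-1=1/\kappa$. Let $\hat w_r=\prox_{F_{\cD}}^{\beta}(w_r)$ be the exact minimizer of $F_r$. Since $\beta=2\rho\kappa$ gives $\nu\beta=2\rho\ge\rho$, Proposition~\ref{prop:reg_wk_cvx} makes $F_r$ exactly $(\nu\beta-\rho)=\rho$-strongly convex w.r.t.\ $\norm{\cdot}_p$ — this is precisely the reason for the choice of $\beta$. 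Theorem~\ref{thm:sc_dp_sco} applied with $\lambda=\rho$ to the batch of size $n_r=n/R$ then yields $\E[F_r(w_{r+1})-F_r(\hat w_r)]=O\big(\tfrac{L_0^2}{\rho}[\tfrac{\kappa}{n_r}+\tfrac{\tilde\kappa\kappa^2 d\log(1/\delta)}{n_r^2\varepsilon^2}]\big)$ (the effective Lipschitz constant of $F_r$ near $\hat w_r$ is $O(L_0)$, since optimality of $\hat w_r$ forces $\norm{\hat w_r-w_r}_p\le 2L_0/\beta$, so the quadratic term contributes only $O(L_0)$). Combining this with the quadratic-growth inequality $F_{\cD}(w_r)=F_r(w_r)\ge F_r(\hat w_r)+\tfrac{\rho}{2}\norm{\hat w_r-w_r}_p^2$ and discarding the nonnegative term $\tfrac{\beta}{2}\norm{w_{r+1}-w_r}_p^2$ gives $\tfrac{\rho}{2}\E\norm{\hat w_r-w_r}_p^2\le \E[F_{\cD}(w_r)-F_{\cD}(w_{r+1})]+O\big(\tfrac{L_0^2}{\rho}[\tfrac{\kappa}{n_r}+\tfrac{\tilde\kappa\kappa^2 d\log(1/\delta)}{n_r^2\varepsilon^2}]\big)$.

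Summing over $r=1,\ldots,R$, the $F_{\cD}$ differences telescope to at most the range $L_0 D$ of $F_{\cD}$ (it is $L_0$-Lipschitz on a set of diameter $D$), and with $n_r=n/R$ (so $\sum_r 1/n_r=R^2/n$, $\sum_r 1/n_r^2=R^3/n^2$) one obtains
\[\tfrac1R\sum_{r=1}^R\E\norm{\hat w_r-w_r}_p^2 = O\Big(\tfrac1\rho\Big\{\tfrac{L_0D}{R}+\tfrac{L_0^2}{\rho}\big[\tfrac{\kappa R}{n}+\tfrac{\tilde\kappa\kappa^2 d\log(1/\delta)}{\varepsilon^2}\tfrac{R^2}{n^2}\big]\Big\}\Big),\]
whose left side is $\E\norm{\prox_{F_{\cD}}^{\beta}(\overline{w}^R)-\overline{w}^R}_p^2$ since $\overline{w}^R$ is drawn uniformly from $\{w_r\}$. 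The prescribed $R$ is exactly the value balancing $L_0D/R$ against each statistical term ($L_0D/R\asymp\kappa L_0^2 R/(\rho n)\Rightarrow R\asymp\sqrt{nD\rho/(\kappa L_0)}$, and $L_0D/R\asymp\tilde\kappa\kappa^2 L_0^2 d\log(1/\delta)R^2/(\rho\varepsilon^2n^2)\Rightarrow R\asymp(\tilde\kappa\kappa^2)^{-1/3}(D\rho(n\varepsilon)^2/(L_0d\log(1/\delta)))^{1/3}$), the minimum of the two being $\le n$ by the hypothesis $nd\ge\rho D/L_0$ so that all batches are well defined. This yields the displayed $O(\cdot)$ bound on $\E\norm{\prox_{F_{\cD}}^{\beta}(\overline{w}^R)-\overline{w}^R}_p^2$, and Jensen's inequality ($\E\sqrt X\le\sqrt{\E X}$) together with subadditivity of $\sqrt{\cdot}$ and $\beta=2\rho\kappa$ converts it into $\E[\max\{1,\beta D\}\,\norm{\prox_{F_{\cD}}^{\beta}(\overline{w}^R)-\overline{w}^R}_p]\le\vartheta$, i.e.\ the claimed $(\vartheta,\beta)$-proximal near stationarity (Definition~\ref{def:prox_near_stat}). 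The remaining case $1\le p<1+1/\log d$ reduces to the above by running the algorithm with $\norm{\cdot}_{\bar p}$, $\bar p=1+1/\log d$, which is within a constant factor of $\norm{\cdot}_p$, giving the same bound with $\kappa=\log d$. The running time is $R$ invocations of $\Asc$ on $n/R$ points each, which by Theorem~\ref{thm:sc_dp_sco} and $R\ge 1$ totals $O(\log n\cdot\log\log n\cdot\min\{n^{3/2}\sqrt{\log d},\,n^2\varepsilon/\sqrt d\})$.

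The delicate point — and the main obstacle — is the non-Euclidean bookkeeping around the proximal regularizer: one must pick $\beta=\Theta(\rho\kappa)$ so that, via the $1/\kappa$-strong convexity of $\tfrac12\norm{\cdot}_p^2$ and Proposition~\ref{prop:reg_wk_cvx}, the regularized objective $F_r$ is $\Theta(\rho)$-strongly convex (not weaker, else the $\Asc$ bound degrades; not wastefully stronger, else proximal steps are over-damped), and one must separately handle the $p\to1$ regime in which $\tfrac12\norm{\cdot}_p^2$ loses strong convexity. Everything else — the privacy argument, the telescoping identity, the three-way balancing of $R$, and the Jensen step — is routine once this is in place.
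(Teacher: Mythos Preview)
Your proposal is correct and follows essentially the same approach as the paper's proof: parallel composition for privacy, the $\bar p$-norm case split to secure $1/\kappa$-strong convexity of the squared norm, Proposition~\ref{prop:reg_wk_cvx} to make $F_r$ exactly $\rho$-strongly convex, Theorem~\ref{thm:sc_dp_sco} per round, the quadratic-growth inequality and telescoping to bound $\tfrac1R\sum_r\E\|\hat w_r-w_r\|_p^2$, balancing of $R$, and Jensen to finish. Your additional remark that the effective Lipschitz constant of $F_r$ stays $O(L_0)$ near the proximal point is a nice detail the paper elides.
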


\begin{rem}
Some comments are in order. 
First, the bound from eqn.~\eqref{eqn:CTNSP_DP_prox} takes the particular form for $p=1$ and $p=2$, respectively,
\[ 
\vartheta = \left\{
\begin{array}{ll}
\frac{\max\{1,2\rho D\log d\}}{\sqrt{\rho}}O\left(\Big(\frac{L_0^{3}D\log d}{n\rho}\Big)^{1/4} + \sqrt{\log d}(L_0^2 D)^{1/3}\Big(\frac{d\log(1/\delta)}{(n\varepsilon)^2\rho}\Big)^{1/6}  \right) & p=1\\
\frac{\max\{1,2\rho D\}}{\sqrt{\rho}}O\left(\Big(\frac{L_0^{3}D}{n\rho}\Big)^{1/4} +(L_0^2 D)^{1/3}\Big(\frac{d\log(1/\delta)}{(n\varepsilon)^2\rho}\Big)^{1/6}  \right) & p=2.
\end{array}
\right.
\]
Second, the upper bound in running time can be further refined, taking into account the precise value of $R$. We omit the resulting bound, only for simplicity. 
Finally, we note that the accuracy of our algorithm can be further refined, if one considers the initial optimality gap, $\Delta_F=F_{\cal D}(w_1)-F_{\cal D}(w^{\ast})$, instead of the crude upper bound $\Delta_F\leq L_0D$. We make this choice only for simplicity, and to keep consistency with the previous sections.
\end{rem}

\section*{Acknowledgements}\label{sec:ack}
RB's and MM's research is supported by NSF Award AF-1908281, Google Faculty Research Award, and the OSU faculty start-up support. CG's research is partially supported by INRIA through the INRIA Associate Teams project and FONDECYT 1210362 project.

\bibliographystyle{alpha} 

\bibliography{ref1,ref2,ref3}

\newcommand{\etalchar}[1]{$^{#1}$}
\begin{thebibliography}{DKM{\etalchar{+}}06}

\bibitem[ABRW12]{Agarwal:2012}
Alekh Agarwal, Peter~L. Bartlett, Pradeep Ravikumar, and Martin~J. Wainwright.
\newblock Information-theoretic lower bounds on the oracle complexity of
  stochastic convex optimization.
\newblock {\em {IEEE} Trans. Inf. Theory}, 58(5):3235--3249, 2012.

\bibitem[ACD{\etalchar{+}}19]{ACDFSW:2019}
Yossi Arjevani, Yair Carmon, John~C. Duchi, Dylan~J. Foster, Nathan Srebro, and
  Blake~E. Woodworth.
\newblock Lower bounds for non-convex stochastic optimization.
\newblock {\em CoRR}, abs/1912.02365, 2019.

\bibitem[AFKT21]{Asi:2021}
Hilal Asi, Vitaly Feldman, Tomer Koren, and Kunal Talwar.
\newblock Private stochastic convex optimization: Optimal rates in l1 geometry.
\newblock {\em CoRR}, abs/2103.01516, 2021.

\bibitem[Bec17]{Beck:2017}
Amir Beck.
\newblock {\em First-order methods in optimization}.
\newblock SIAM, 2017.

\bibitem[BFGT20]{bassily2020stability}
Raef Bassily, Vitaly Feldman, Crist{\'{o}}bal Guzm{\'{a}}n, and Kunal Talwar.
\newblock Stability of stochastic gradient descent on nonsmooth convex losses.
\newblock In {\em Advances in Neural Information Processing Systems 33, NeurIPS
  2020, December 6-12, 2020, virtual}, 2020.

\bibitem[BFTT19]{BFTT:19}
Raef Bassily, Vitaly Feldman, Kunal Talwar, and Abhradeep Thakurta.
\newblock Private stochastic convex optimization with optimal rates.
\newblock In H.~Wallach, H.~Larochelle, A.~Beygelzimer, F.~d~Alch\'{e}-Buc,
  E.~Fox, and R.~Garnett, editors, {\em Advances in Neural Information
  Processing Systems}, volume~32. Curran Associates, Inc., 2019.

\bibitem[BGN21]{BGN:2021}
Raef Bassily, Crist{\'{o}}bal Guzm{\'{a}}n, and Anupama Nandi.
\newblock Non-euclidean differentially private stochastic convex optimization.
\newblock {\em ArXiv}, abs/2103.01278, 2021.

\bibitem[BLST10]{bhaskar2010discovering}
Raghav Bhaskar, Srivatsan Laxman, Adam Smith, and Abhradeep Thakurta.
\newblock Discovering frequent patterns in sensitive data.
\newblock In {\em Proceedings of the 16th ACM SIGKDD international conference
  on Knowledge discovery and data mining}, pages 503--512, 2010.

\bibitem[BST14]{BST}
Raef Bassily, Adam Smith, and Abhradeep Thakurta.
\newblock Private empirical risk minimization: Efficient algorithms and tight
  error bounds.
\newblock In {\em IEEE 55th Annual Symposium on Foundations of Computer Science
  (FOCS 2014). (arXiv preprint arXiv:1405.7085)}, pages 464--473. 2014.

\bibitem[Can11]{Can11}
Emmanuel Candes.
\newblock Mathematical optimization.
\newblock {\em Lec. notes: MATH 301}, Lec, notes: MATH 301, 2011.

\bibitem[CMS11]{CMS}
Kamalika Chaudhuri, Claire Monteleoni, and Anand~D Sarwate.
\newblock Differentially private empirical risk minimization.
\newblock {\em Journal of Machine Learning Research}, 12(Mar):1069--1109, 2011.

\bibitem[DD19]{DD:2019}
Damek Davis and Dmitriy Drusvyatskiy.
\newblock Stochastic model-based minimization of weakly convex functions.
\newblock {\em SIAM Journal on Optimization}, 29(1):207--239, 2019.

\bibitem[DG19]{DG:2019}
Damek Davis and Benjamin Grimmer.
\newblock Proximally guided stochastic subgradient method for nonsmooth,
  nonconvex problems.
\newblock {\em {SIAM} J. Optim.}, 29(3):1908--1930, 2019.

\bibitem[DKM{\etalchar{+}}06]{DKMMN06}
Cynthia Dwork, Krishnaram Kenthapadi, Frank McSherry, Ilya Mironov, and Moni
  Naor.
\newblock Our data, ourselves: Privacy via distributed noise generation.
\newblock In {\em EUROCRYPT}, 2006.

\bibitem[DR14]{DR14}
Cynthia Dwork and Aaron Roth.
\newblock The algorithmic foundations of differential privacy.
\newblock {\em Foundations and Trends{\textregistered} in Theoretical Computer
  Science}, 9(3--4):211--407, 2014.

\bibitem[DRV10]{DRV10}
Cynthia Dwork, Guy~N. Rothblum, and Salil~P. Vadhan.
\newblock Boosting and differential privacy.
\newblock In {\em FOCS}, 2010.

\bibitem[FGV17]{FGV:16}
Vitaly Feldman, Cristobal Guzman, and Santosh Vempala.
\newblock Statistical query algorithms for mean vector estimation and
  stochastic convex optimization.
\newblock In {\em Proceedings of the Twenty-Eighth Annual ACM-SIAM Symposium on
  Discrete Algorithms}, SODA '17, page 1265–1277, USA, 2017. Society for
  Industrial and Applied Mathematics.

\bibitem[FKT20]{FKT:2020}
Vitaly Feldman, Tomer Koren, and Kunal Talwar.
\newblock Private {Stochastic} {Convex} {Optimization}: {Optimal} {Rates} in
  {Linear} {Time}.
\newblock page~22, 2020.

\bibitem[HKMS20]{Hassani:2020}
Hamed Hassani, Amin Karbasi, Aryan Mokhtari, and Zebang Shen.
\newblock Stochastic conditional gradient++: (non)convex minimization and
  continuous submodular maximization.
\newblock {\em {SIAM} J. Optim.}, 30(4):3315--3344, 2020.

\bibitem[HRS16]{hardt-recht-singer'16}
M.~Hardt, B.~Recht, and Y.~Singer.
\newblock Train faster, generalize better: stability of stochastic gradient
  descent.
\newblock In {\em ICML}, 2016.

\bibitem[HUL01]{HiriartUrruty:2001}
J.-B. Hiriart-Urruty and C.~Lemar{\'e}chal.
\newblock {\em Convex Analysis And Minimization Algorithms}, volume I and II.
\newblock Springer, 2001.

\bibitem[JKT12]{jain2012differentially}
Prateek Jain, Pravesh Kothari, and Abhradeep Thakurta.
\newblock Differentially private online learning.
\newblock In {\em 25th Annual Conference on Learning Theory (COLT)}, pages
  24.1--24.34, 2012.

\bibitem[JN08]{Juditsky:2008}
Anatoli Juditsky and Arkadi Nemirovski.
\newblock Large deviations of vector-valued martingales in 2-smooth normed
  spaces.
\newblock Rapport de recherche hal-00318071, HAL, 2008.

\bibitem[JT14]{JTOpt13}
Prateek Jain and Abhradeep Thakurta.
\newblock (near) dimension independent risk bounds for differentially private
  learning.
\newblock In {\em ICML}, 2014.

\bibitem[KLL21]{KLL:2021}
Janardhan Kulkarni, Yin~Tat Lee, and Daogao Liu.
\newblock Private {Non}-smooth {Empirical} {Risk} {Minimization} and
  {Stochastic} {Convex} {Optimization} in {Subquadratic} {Steps}.
\newblock {\em arXiv:2103.15352 [cs, stat]}, March 2021.
\newblock arXiv: 2103.15352.

\bibitem[KST12]{kifer2012private}
Daniel Kifer, Adam Smith, and Abhradeep Thakurta.
\newblock Private convex empirical risk minimization and high-dimensional
  regression.
\newblock In {\em Conference on Learning Theory}, pages 25--1, 2012.

\bibitem[Mor65]{Moreau:1965}
Jean~Jacques Moreau.
\newblock Proximit\'e et dualit\'e dans un espace hilbertien.
\newblock {\em Bulletin de la Soci\'et\'e Math\'ematique de France},
  93:273--299, 1965.

\bibitem[Nem95]{Nemirovski95_notes}
A~Nemirovski.
\newblock Information based complxity of convex programming.
\newblock 1995.

\bibitem[Nes05]{Nes05}
Yu~Nesterov.
\newblock Smooth minimization of non-smooth functions.
\newblock {\em Math. Program.}, 103(1):127–152, May 2005.

\bibitem[NY83]{NY82}
A.S. Nemirovsky and D.B. Yudin.
\newblock {\em Problem Complexity and Method Efficiency in Optimization}.
\newblock A Wiley-Interscience publication. Wiley, 1983.

\bibitem[RW98]{RockaWets:1998}
{R. Tyrrell} Rockafellar and Roger J.-B. Wets.
\newblock {\em Variational Analysis}.
\newblock Springer Verlag, Heidelberg, Berlin, New York, 1998.

\bibitem[SSBD14]{shalev2014understanding}
Shai Shalev-Shwartz and Shai Ben-David.
\newblock {\em Understanding machine learning: From theory to algorithms}.
\newblock Cambridge university press, 2014.

\bibitem[SSTT21]{SSTT:21}
Shuang Song, Thomas Steinke, Om~Thakkar, and Abhradeep Thakurta.
\newblock Evading the curse of dimensionality in unconstrained private glms.
\newblock In Arindam Banerjee and Kenji Fukumizu, editors, {\em Proceedings of
  The 24th International Conference on Artificial Intelligence and Statistics},
  volume 130 of {\em Proceedings of Machine Learning Research}, pages
  2638--2646. PMLR, 13--15 Apr 2021.

\bibitem[TTZ15]{TTZ15a}
Kunal Talwar, Abhradeep Thakurta, and Li~Zhang.
\newblock Nearly optimal private lasso.
\newblock In {\em NIPS}, 2015.

\bibitem[TTZ16]{talwar_private_2016}
Kunal Talwar, Abhradeep Thakurta, and Li~Zhang.
\newblock Private {Empirical} {Risk} {Minimization} {Beyond} the {Worst}
  {Case}: {The} {Effect} of the {Constraint} {Set} {Geometry}.
\newblock {\em arXiv:1411.5417 [cs, stat]}, November 2016.
\newblock arXiv: 1411.5417.

\bibitem[WCX19]{wang19c}
Di~Wang, Changyou Chen, and Jinhui Xu.
\newblock Differentially private empirical risk minimization with non-convex
  loss functions.
\newblock In Kamalika Chaudhuri and Ruslan Salakhutdinov, editors, {\em
  Proceedings of the 36th International Conference on Machine Learning},
  volume~97 of {\em Proceedings of Machine Learning Research}, pages
  6526--6535. PMLR, 09--15 Jun 2019.

\bibitem[WJEG19]{WJEG:19}
Lingxiao Wang, Bargav Jayaraman, David Evans, and Quanquan Gu.
\newblock Efficient privacy-preserving nonconvex optimization.
\newblock {\em CoRR}, abs/1910.13659, 2019.

\bibitem[WX19]{WX:19}
Di~Wang and Jinhui Xu.
\newblock Differentially private empirical risk minimization with smooth
  non-convex loss functions: A non-stationary view.
\newblock In {\em Proceedings of the AAAI Conference on Artificial
  Intelligence}, volume~33, pages 1182--1189, 2019.

\bibitem[WYX17]{wang_differentially_2017}
Di~Wang, Minwei Ye, and Jinhui Xu.
\newblock Differentially {Private} {Empirical} {Risk} {Minimization}
  {Revisited}: {Faster} and {More} {General}.
\newblock In I.~Guyon, U.~V. Luxburg, S.~Bengio, H.~Wallach, R.~Fergus,
  S.~Vishwanathan, and R.~Garnett, editors, {\em Advances in {Neural}
  {Information} {Processing} {Systems}}, volume~30. Curran Associates, Inc.,
  2017.

\bibitem[ZCH{\etalchar{+}}20]{ZCHWB:20}
Yingxue Zhou, Xiangyi Chen, Mingyi Hong, Zhiwei~Steven Wu, and Arindam
  Banerjee.
\newblock Private stochastic non-convex optimization: Adaptive algorithms and
  tighter generalization bounds.
\newblock {\em CoRR}, abs/2006.13501, 2020.

\bibitem[ZSM{\etalchar{+}}20]{zhang2020one}
Mingrui Zhang, Zebang Shen, Aryan Mokhtari, Hamed Hassani, and Amin Karbasi.
\newblock One sample stochastic frank-wolfe.
\newblock In {\em International Conference on Artificial Intelligence and
  Statistics}, pages 4012--4023. PMLR, 2020.

\bibitem[ZZMW17]{zhang_efficient}
Jiaqi Zhang, Kai Zheng, Wenlong Mou, and Liwei Wang.
\newblock Efficient private erm for smooth objectives.
\newblock In {\em Proceedings of the 26th International Joint Conference on
  Artificial Intelligence}, IJCAI'17, page 3922–3928. AAAI Press, 2017.

\end{thebibliography}

\appendix
\section{Missing Details of Section~\ref{sec:cnvx_glms}} \label{sec:app_glm}

\subsection{Proof of lemma \ref{lem:f_smoothness}}
The Lipschitzness guarantee follows straightforwardly from Lemma \ref{lem:moreau}. 
For the smoothness guarantee, note that $\nabla f_\beta(w,(x,y))=\dellb(\langle w,x \rangle)x$. Since $\ellb$ is $\beta$-smooth, for any $w,w'\in\cW$ we have
\begin{align*}
    \|\nabla f_{\beta}(w,(x,y)) - \nabla f_{z,\beta}(w',(x,y))\|_{*} & = \|\dellb(\langle w,x \rangle)x - \dellb(\langle w',x \rangle)x\|_{*} \\
    & = \|x\|_{*} \cdot |\dellb(\langle w,x \rangle) - \dellb (\langle w',x \rangle)|  \\
    & \leq \|x\|_{*} \beta|\langle w,x \rangle - \langle w',x \rangle| \\
    & \leq \|x\|_{*} ^2\beta\|w-w'\|,
\end{align*}
where the last step follows from the definition of the dual norm. 
For the accuracy, by the guarantees of the Moreau envelope of $\elly$ it holds that for all $w\in\re^d$ and $(x,y)\in\cX\times\re$ that
\begin{align*}
    |f(w,(x,y))-f_{\beta}(w,(x,y))| & = |\elly(\langle w,x \rangle) - \ellb(\langle w,x \rangle )| \\
    & \leq \frac{L_0^2}{2\beta}.
\end{align*}

\section{Missing Details of Section~\ref{sec:weakcnvx}}\label{app:weakcnvx}

For this section, we will occasionally require the use of indicator functions. Given a closed convex set ${\cal W}$, we define the (convex) indicator function as
\[
\chi_{\cal W}(w)=\left\{ 
\begin{array}{rl}
0 & w\in {\cal W}\\
+\infty & w\notin {\cal W}.
\end{array}
\right.
\]
Also recall the definition of the normal cone of ${\cal W}$ at point $\overline{w}\in {\cal W}$, ${\cal N}_{\cal W}(\overline{w})=\{p\in {\cal W}:\,\langle p,w-\overline{w}\rangle \leq 0\,\, \forall w\in{\cal W}\}.$ The normal cone is the subdifferential of the indicator function: ${\cal N}_{\cal W}(w)=\partial \chi_{\cal W}(w)$.  

\subsection{Background Information on Weakly Convex Functions and their Subdifferentials} \label{sec:app-weak_cvx_basics}

\begin{defn}
We say that a function $f:{\cal W}\mapsto\mathbb{R}$ is $\rho$-weakly convex w.r.t.~norm $\|\cdot\|$ if for all $0\leq\lambda\leq 1$ and $w,v\in{\cal W}$, we have
\begin{equation*}
    f(\lambda w+(1-\lambda)v) \leq \lambda f(w)+(1-\lambda)f(v)+\frac{\rho\lambda(1-\lambda)}{2}\|w-v\|^2.
\end{equation*} 
\end{defn}

For nonconvex functions, defining the subdifferential can be done in a local fashion. 

\begin{defn}
Let $f:\mathbf{E}\mapsto\mathbb{R}$. We define the {\em (regular) subdifferential} 
of $f$ at point $w\in\mathbf{E}$, denoted $\partial f(w)$, as the set of vectors $g\in \mathbf{E}$ such that
$$ \liminf_{v\to w,v\neq w} \frac{f(v)-f(w)-\langle g,v-w\rangle}{\|v-w\|}\geq 0. $$
We say that $f$ is subdifferentiable at $w$ if $\partial f(w)\neq \emptyset$. We will say $f$ is subdifferentiable if it is subdifferentiable at every point.
\end{defn}

We will need a characterization of the regular subdifferential in terms of directional derivatives. We recall the definition of the directional derivative of a function $f$ at point $w$ in direction $\dir$:
$$ f^{\prime}(x;\dir) :=\liminf_{\varepsilon\to0, c\to \dir} \frac{f(w+\varepsilon \dir)-f(w)}{\varepsilon}. $$

\begin{prop}[Regular subdifferential and directional derivatives] \label{prop:loc_subdiff}
Let $f:\mathbf{E}\mapsto\mathbb{R}$ be a Lipschitz function which is subdifferentiable at $w$, then 
$$ \partial f(w)=\{g\in\mathbf{E}:\, \langle g,\dir\rangle \leq f^{\prime}(w;\dir)\,\, \forall \dir\in\mathbf{E} \}.$$
\end{prop}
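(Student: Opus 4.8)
The plan is to prove the two inclusions separately, in each case passing between the ``$v\to w$'' description of $\partial f(w)$ and the ``direction $e$, scaling $\varepsilon\downarrow 0$'' description of $f^{\prime}(w;e)$ via the substitution $v=w+\varepsilon c$. I read the directional derivative with $\varepsilon\downarrow 0$ (one-sided). Since $f$ is Lipschitz, with constant $L$, replacing $c$ by a nearby $e$ inside a difference quotient changes it by at most $L\|c-e\|$, so the ``$c\to e$'' and ``fixed-$e$'' forms of $f^{\prime}(w;e)$ agree here; I will use whichever is convenient.

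For $\partial f(w)\subseteq\{g:\langle g,e\rangle\leq f^{\prime}(w;e)\ \forall e\}$: fix $g\in\partial f(w)$ and a direction $e$ (if $e=0$ then $f^{\prime}(w;0)=0=\langle g,0\rangle$ by Lipschitzness, so assume $e\neq 0$). Choose $\varepsilon_k\downarrow 0$ and $c_k\to e$ realizing the liminf defining $f^{\prime}(w;e)$, and set $v_k=w+\varepsilon_k c_k$, so $v_k\to w$, $v_k\neq w$ for large $k$, $v_k-w=\varepsilon_k c_k$ and $\|v_k-w\|=\varepsilon_k\|c_k\|$. By definition of $\partial f(w)$, for every $\eta>0$ we have $f(v_k)-f(w)-\langle g,v_k-w\rangle\geq-\eta\|v_k-w\|$ for all large $k$; dividing by $\varepsilon_k>0$ gives $\tfrac{f(v_k)-f(w)}{\varepsilon_k}\geq\langle g,c_k\rangle-\eta\|c_k\|$, and letting $k\to\infty$ yields $f^{\prime}(w;e)\geq\langle g,e\rangle-\eta\|e\|$. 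Since $\eta>0$ is arbitrary, $f^{\prime}(w;e)\geq\langle g,e\rangle$.

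For the reverse inclusion, suppose $\langle g,e\rangle\leq f^{\prime}(w;e)$ for all $e$ but $g\notin\partial f(w)$. Then there is a sequence $v_k\to w$, $v_k\neq w$, along which $q_k:=\tfrac{f(v_k)-f(w)-\langle g,v_k-w\rangle}{\|v_k-w\|}$ converges to some $\ell<0$; here Lipschitzness of $f$ together with $|\langle g,v\rangle|\leq\dual{g}\,\|v\|$ keeps $(q_k)$ bounded, so after passing to a subsequence we may take $\ell$ finite. Write $\varepsilon_k=\|v_k-w\|\downarrow 0$ and $c_k=(v_k-w)/\varepsilon_k$, which lie on the unit sphere of $(\mathbf{E},\|\cdot\|)$. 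Since $\mathbf{E}$ is finite-dimensional, this sphere is compact, so along a further subsequence $c_k\to e$ with $\|e\|=1$. Then $v_k=w+\varepsilon_k c_k$ and $\tfrac{f(w+\varepsilon_k c_k)-f(w)}{\varepsilon_k}=\langle g,c_k\rangle+q_k\to\langle g,e\rangle+\ell$; replacing $c_k$ by $e$ changes this by at most $L\|c_k-e\|\to0$. Hence, by the definition of $f^{\prime}(w;e)$ as a liminf over all admissible sequences, $f^{\prime}(w;e)\leq\langle g,e\rangle+\ell<\langle g,e\rangle$, contradicting the hypothesis.

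I expect the reverse inclusion to be the main obstacle, and within it the compactness step: the argument hinges on extracting a convergent subsequence of the normalized displacements $c_k=(v_k-w)/\|v_k-w\|$, which is exactly where finite-dimensionality of $\mathbf{E}$ (equivalently, sequential compactness of its unit sphere) is used. Lipschitzness of $f$ enters only to keep the quotients $q_k$ bounded, so that $\ell$ is a finite negative number rather than $-\infty$ (which would only make the contradiction more immediate), and to swap $c_k$ for $e$ inside the quotient. The one point I would flag explicitly at the outset is the one-sided convention $\varepsilon\downarrow 0$ in $f^{\prime}(w;e)$: a two-sided limit would flip a sign in the first inclusion and break the equivalence.
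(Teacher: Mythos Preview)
Your proof is correct and follows essentially the same approach as the paper: both inclusions are proved via the substitution $v=w+\varepsilon c$, and the reverse inclusion hinges on compactness of the unit sphere in finite dimensions together with Lipschitzness to swap $c_k$ for its limit direction $e$. The only cosmetic difference is that the paper frames the reverse inclusion as a direct estimate $f(v)\geq f(w)+\langle g,v-w\rangle+o(\|v-w\|)$ for an arbitrary sequence $v\to w$, whereas you argue by contradiction from a sequence witnessing $g\notin\partial f(w)$; the underlying mechanics are identical.
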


\begin{proof}
Let $L_0$ be the Lipschitz constant of $f$ w.r.t.~$\|\cdot\|$.
We prove both inclusions. First ($\subseteq$), if $g\in \partial f(w)$, then let $\dir\in\mathbf{E}\setminus\{0\}$. Using the definition of subdifferential for $w$ and $v=w+\varepsilon c$ (where $\varepsilon\to0$  and $c\to \dir$), we get
\begin{align*}
\liminf_{\varepsilon\to0,c\to \dir}\frac{f(w+\varepsilon c)-f(w)}{\varepsilon\|c\|}-\frac{\langle g,c\rangle}{\|c\|}\geq 0
\end{align*}
Taking first the limit $c\to \dir$ and then $\varepsilon\to0$, we get $f^{\prime}(w;\dir)\geq\langle g,\dir\rangle$, concluding the desired inclusion.

For the reverse inclusion ($\supseteq$), let $g\in\mathbf{E}$ be s.t.~$\langle g,\dir\rangle \leq f^{\prime}(w;\dir)$, for all $\dir\in \mathbf{E}$. Now let $v\to w$, and consider any $\dir\in \mathbf{E}$ accumulation point of $(v-w)/\|v-w\|$ (they exist by compactness of the unit sphere). Next, let $\varepsilon=\|v-w\|$, and notice that $\varepsilon\to0$. Then
\begin{eqnarray*}
f(v) &=& f(w)+[f(v)-f(w+\varepsilon \dir)]+[f(w+\varepsilon \dir)-f(w)]\\
&\geq & f(w)-L_0\|(v-w)-\varepsilon \dir\| + \frac{f(w+\varepsilon \dir)-f(w)}{\varepsilon} \varepsilon\\
&\geq& f(w)+\frac{f(w+\varepsilon \dir)-f(w)}{\varepsilon} \varepsilon-L_0\|v-w\| \Big(\frac{v-w}{\|v-w\|}- \dir\Big).
\end{eqnarray*}
Taking $v\to w$ (which is equivalent to $\varepsilon \to 0$), we get
\begin{eqnarray*}
 f(v) 
 &\geq& f(w) + f^{\prime}(w;\dir)\varepsilon+o(\|v-w\|) \\
&\geq& f(w) + \langle g,\varepsilon \dir\rangle +o(\|v-w\|)\\
&=& f(w) + \langle g,v-w\rangle
+\varepsilon \Big\langle g,  \dir-\frac{(v-w)}{\varepsilon}\Big\rangle
+o(\|v-w\|)\\
&=& f(w) + \langle g,v-w\rangle
+o(\|v-w\|),
\end{eqnarray*}
where in the second step we used the starting assumption.
\end{proof}

Finally, we present the well-known fact that weak convexity implies that the variation of the function compared to its subgradient approximation is lower bounded by a negative quadratic.

\begin{prop}[Characterization of weak convexity from the regular subdifferential] \label{prop:wk_cvx_subdifferential}
Let $f:{\cal W}\mapsto \mathbb{R}$ be subdifferentiable and Lipschitz w.r.t.~$\|\cdot\|$. Then $f$ is $\rho$-weakly convex if and only if for all $w,v\in \mathbf{E}$, and $g\in \partial f(w)$
\begin{equation} \label{eqn:subgrad_weak_cvx}
 f(v)\geq f(w)+\langle g,v-w\rangle-\frac{\rho}{2}\|v-w\|^2.  
\end{equation}
\end{prop}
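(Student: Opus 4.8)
The plan is to prove both implications of the equivalence between $\rho$-weak convexity and the subgradient inequality~\eqref{eqn:subgrad_weak_cvx}, leveraging the directional-derivative characterization of the regular subdifferential from Proposition~\ref{prop:loc_subdiff}. The cleanest route is to observe that $f$ is $\rho$-weakly convex if and only if the function $h(w) := f(w) + \frac{\rho}{2}\|w\|^2$ is convex (with respect to $\|\cdot\|$, using that $\frac12\|\cdot\|^2$ need not be induced by an inner product — so I should be careful here and argue directly from the definition rather than via a quadratic expansion). Actually, since $\|\cdot\|$ is a general norm, the identity $\|w-v\|^2 = \|w\|^2 - 2\langle w,v\rangle + \|v\|^2$ fails, so the ``add a quadratic to convexify'' trick does not literally work; instead I would work directly with the weak-convexity inequality~\eqref{eqn:wk_cvx} and Proposition~\ref{prop:loc_subdiff}.

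For the forward direction ($\Rightarrow$), assume $f$ is $\rho$-weakly convex, fix $w,v \in \mathbf{E}$ and $g \in \partial f(w)$. Set $\dir = v - w$ and for $\lambda \in (0,1]$ write $w + \lambda \dir = \lambda v + (1-\lambda) w$. Applying~\eqref{eqn:wk_cvx} gives
\[
f(w+\lambda\dir) \leq \lambda f(v) + (1-\lambda) f(w) + \frac{\rho\lambda(1-\lambda)}{2}\|v-w\|^2,
\]
hence
\[
\frac{f(w+\lambda\dir) - f(w)}{\lambda} \leq f(v) - f(w) + \frac{\rho(1-\lambda)}{2}\|v-w\|^2.
\]
Taking $\liminf$ as $\lambda \to 0^+$ on the left and using Proposition~\ref{prop:loc_subdiff} (which gives $\langle g, \dir\rangle \leq f'(w;\dir) \leq \liminf_{\lambda\to 0^+} \frac{f(w+\lambda \dir)-f(w)}{\lambda}$) yields $\langle g, v-w\rangle \leq f(v) - f(w) + \frac{\rho}{2}\|v-w\|^2$, which is~\eqref{eqn:subgrad_weak_cvx}.

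For the reverse direction ($\Leftarrow$), assume~\eqref{eqn:subgrad_weak_cvx} holds for all $w,v$ and all $g \in \partial f(w)$. Fix $w,v \in \mathbf{E}$, $\lambda \in [0,1]$, and let $w_\lambda = \lambda w + (1-\lambda) v$. Since $f$ is Lipschitz, it is subdifferentiable at $w_\lambda$ (this is the one point where I need $\partial f(w_\lambda) \neq \emptyset$; if the paper's standing assumption is that $f$ is subdifferentiable everywhere, invoke it, otherwise note Lipschitz functions on finite-dimensional spaces have nonempty Clarke/regular subdifferential a.e. and a limiting argument suffices — but I expect the cleanest path is to just assume subdifferentiability as in the hypothesis). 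Pick $g \in \partial f(w_\lambda)$ and apply~\eqref{eqn:subgrad_weak_cvx} twice, once with $v \leftarrow w$ and once with $v \leftarrow v$, both expanded around $w_\lambda$:
\[
f(w) \geq f(w_\lambda) + \langle g, w - w_\lambda\rangle - \tfrac{\rho}{2}\|w - w_\lambda\|^2, \qquad
f(v) \geq f(w_\lambda) + \langle g, v - w_\lambda\rangle - \tfrac{\rho}{2}\|v - w_\lambda\|^2.
\]
Now note $w - w_\lambda = (1-\lambda)(w-v)$ and $v - w_\lambda = -\lambda(w-v)$, so $\|w-w_\lambda\|^2 = (1-\lambda)^2\|w-v\|^2$ and $\|v-w_\lambda\|^2 = \lambda^2\|w-v\|^2$. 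Forming the convex combination $\lambda \cdot(\text{first}) + (1-\lambda)\cdot(\text{second})$, the linear terms $\lambda\langle g, w-w_\lambda\rangle + (1-\lambda)\langle g, v-w_\lambda\rangle = \langle g, \lambda w + (1-\lambda) v - w_\lambda\rangle = 0$ cancel, and we get
\[
\lambda f(w) + (1-\lambda) f(v) \geq f(w_\lambda) - \tfrac{\rho}{2}\big(\lambda(1-\lambda)^2 + (1-\lambda)\lambda^2\big)\|w-v\|^2 = f(w_\lambda) - \tfrac{\rho\lambda(1-\lambda)}{2}\|w-v\|^2,
\]
using $\lambda(1-\lambda)^2 + (1-\lambda)\lambda^2 = \lambda(1-\lambda)$. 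Rearranging gives exactly~\eqref{eqn:wk_cvx}.

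The main obstacle I anticipate is purely a matter of care rather than depth: in the reverse direction one must ensure $\partial f(w_\lambda)$ is nonempty at the relevant intermediate point, and in the forward direction one must correctly pass the $\liminf$ through the difference quotient using Proposition~\ref{prop:loc_subdiff} rather than assuming differentiability. Since the proposition is stated for Lipschitz subdifferentiable $f$ and the hypothesis here already assumes subdifferentiability, both concerns are handled by the standing assumptions; the norm-squared identities used above ($\|\alpha u\|^2 = \alpha^2\|u\|^2$) are valid for any norm, so no inner-product structure is needed.
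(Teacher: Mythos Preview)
Your proposal is correct and follows essentially the same approach as the paper's proof: in the forward direction you pass from the weak-convexity inequality to a difference-quotient bound, take the $\liminf$ to obtain the directional derivative, and invoke Proposition~\ref{prop:loc_subdiff}; in the reverse direction you pick a subgradient at the intermediate point $w_\lambda$, apply \eqref{eqn:subgrad_weak_cvx} twice, and combine with weights $\lambda$ and $1-\lambda$ so the linear terms cancel and the quadratic terms collapse via $\lambda(1-\lambda)^2+(1-\lambda)\lambda^2=\lambda(1-\lambda)$. The only differences from the paper are notational (your $\lambda\to 0^+$ versus the paper's $\lambda\to 1$, and your $w_\lambda=\lambda w+(1-\lambda)v$ versus $(1-\lambda)w+\lambda v$), and your explicit remark that the ``add $\tfrac{\rho}{2}\|\cdot\|^2$'' convexification trick fails for general norms is a correct observation that the paper does not make but also does not need.
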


\begin{proof}
We prove both implications. For $\Rightarrow$, let $v,w\in \mathbf{E}$, and $0< \lambda<1$. By $\rho$-weak convexity:
\begin{align*}
f((1-\lambda)v+\lambda w) 
&\leq (1-\lambda) f(v)+\lambda f(w)+\frac{\rho\lambda(1-\lambda)}{2}\|v-w\|^2\\
\Longrightarrow \quad 
(1-\lambda)[f(v)-f(w)]
&\geq f((1-\lambda)v+\lambda w)-f(w)-\frac{\rho\lambda(1-\lambda)}{2}\|v-w\|^2\\
\Longrightarrow \quad 
f(v)-f(w)
&\geq \lim\inf_{\lambda\to 1} \Big[\frac{ f(w+(1-\lambda)(v-w))-f(w)}{(1-\lambda)}-\frac{\rho\lambda}{2}\|v-w\|^2\Big]\\
&= f^{\prime}(w;v-w)-\frac{\rho}{2}\|v-w\|^2\\
&\geq \langle g,v-w\rangle-\frac{\rho}{2}\|v-w\|^2,
\end{align*}
where in the last inequality we used Proposition \ref{prop:loc_subdiff}.

Next, for $\Leftarrow$, let $v,w\in\mathbf{E}$ and $0\leq \lambda\leq 1$. Then, letting $g\in \partial f((1-\lambda)w+\lambda v)$, and using \eqref{eqn:subgrad_weak_cvx} twice, we get
\begin{eqnarray*}
f(v) &\geq& f((1-\lambda)w+\lambda v)+\langle g,(1-\lambda)(v-w)\rangle -\frac{\rho}{2}\|(1-\lambda)(v-w)\|^2\\
f(w) &\geq& f((1-\lambda)w+\lambda v)+\langle g,\lambda(w-v)\rangle -\frac{\rho}{2}\|\lambda(v-w)\|^2.
\end{eqnarray*}
Multiplying the first inequality by $\lambda$ and the second one by $(1-\lambda)$, gives
\begin{eqnarray*}
\lambda f(v)+(1-\lambda)f(w)
&\geq& f((1-\lambda)w+\lambda v)-\frac{\rho\lambda(1-\lambda)}{2}\|v-w\|^2,
\end{eqnarray*}
which concludes the proof.
\end{proof}

From the previous proposition, we can easily conclude that any smooth function is weakly convex.
\begin{cor} \label{cor:smooth_implies_wk_cvx}
Let $f:{\cal W}\mapsto\mathbb{R}$ be a $L_1$-smooth function (i.e., $\|\nabla f(v)-\nabla f(w)\|_{\ast}\leq L_1\|v-w\|$, for all $v,w\in {\cal W}$). Then $f$ is $L_1$-weakly convex.
\end{cor}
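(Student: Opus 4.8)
The plan is to invoke the subdifferential characterization of weak convexity already established in Proposition~\ref{prop:wk_cvx_subdifferential}: it suffices to show that for every $w,v\in{\cal W}$ and every $g\in\partial f(w)$,
\[
f(v)\geq f(w)+\langle g,v-w\rangle-\frac{L_1}{2}\|v-w\|^2.
\]
Since $f$ is $L_1$-smooth it is in particular differentiable, so first I would argue that $\partial f(w)=\{\nabla f(w)\}$. This follows from Proposition~\ref{prop:loc_subdiff}: for a differentiable function the directional derivative is linear, $f'(w;e)=\langle\nabla f(w),e\rangle$, so $g\in\partial f(w)$ iff $\langle g-\nabla f(w),e\rangle\leq 0$ for all directions $e$, which forces $g=\nabla f(w)$ (note $f$ is Lipschitz on ${\cal W}$ when ${\cal W}$ is bounded, or one restricts attention to the relevant segment; in any case smoothness gives differentiability and hence a well-defined gradient). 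Thus the inequality to prove reduces to the single-subgradient statement with $g=\nabla f(w)$.

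Next I would establish the quadratic lower bound by the standard integral argument along the segment $[w,v]$. Writing $\phi(t)=f(w+t(v-w))$ for $t\in[0,1]$, we have
\[
f(v)-f(w)-\langle\nabla f(w),v-w\rangle=\int_0^1\langle\nabla f(w+t(v-w))-\nabla f(w),\,v-w\rangle\,dt.
\]
Applying the defining inequality of the dual norm followed by the $L_1$-smoothness hypothesis,
\[
\langle\nabla f(w+t(v-w))-\nabla f(w),\,v-w\rangle\geq-\|\nabla f(w+t(v-w))-\nabla f(w)\|_\ast\,\|v-w\|\geq-L_1 t\,\|v-w\|^2,
\]
and integrating over $t\in[0,1]$ gives exactly $-\tfrac{L_1}{2}\|v-w\|^2$. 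Substituting back yields the desired inequality, and Proposition~\ref{prop:wk_cvx_subdifferential} then concludes that $f$ is $L_1$-weakly convex.

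There is no real obstacle here; the only point requiring a line of care is justifying that the regular subdifferential of a differentiable function is the singleton $\{\nabla f(w)\}$, which I would handle exactly via Proposition~\ref{prop:loc_subdiff} as above, and the mild technical matter that the integral identity requires $t\mapsto\nabla f(w+t(v-w))$ to be (say) continuous along the segment, which is immediate from $L_1$-Lipschitzness of $\nabla f$. Everything else is the routine ``descent lemma'' computation in the dual-norm setting.
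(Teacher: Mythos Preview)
Your proposal is correct and follows essentially the same route as the paper: both use the Fundamental Theorem of Calculus along the segment $[w,v]$, bound the integrand via the dual-norm inequality and $L_1$-Lipschitzness of $\nabla f$, and then invoke Proposition~\ref{prop:wk_cvx_subdifferential}. Your additional remark that $\partial f(w)=\{\nabla f(w)\}$ is a point the paper leaves implicit, so if anything your version is slightly more careful.
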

\begin{proof}
Let $v,w\in {\cal W}$. Then by the Fundamental Theorem of Calculus:
\begin{align*}
    f(v) &= f(w) +\int_0^1 \langle \nabla f(w+s(v-w)),v-w\rangle ds \\
    &=  f(w) + \langle \nabla f(w),v-w \rangle+\int_0^1 \langle \nabla f(w+s(v-w))-\nabla f(w),v-w\rangle ds \\
    &\geq f(w) + \langle \nabla f(w),v-w \rangle-L_1\|v-w\|^2 \int_0^1 s ds .
\end{align*}
We conclude by Proposition \ref{prop:wk_cvx_subdifferential} that $f$ is $L_1$-weakly convex.
\end{proof}

\subsubsection{Basic Rules of the Subdifferential, Optimality Conditions and Stationarity Gap}

We know provide some basic tools regarding subdifferentials and optimality conditions in weakly convex programming, which will also allow us to introduce the notion of stationarity gap in this setting.

To start, we provide a basic calculus rule for the subdifferential of a sum of weakly convex functions.

\begin{thm}[Corollary 10.9 from \cite{RockaWets:1998}] \label{thm:sum_subdiff}
If $f:\mathbf{E}\mapsto\mathbb{R}$ be weakly convex, and $g:\mathbf{E}\mapsto\mathbb{R}
\cup\{+\infty\}$ be convex, lower semicontinuous, and such that $w\in\mbox{dom}(g)$. Then $\partial (f+g)(w)=\partial f(w)+\partial g(w).$
\end{thm}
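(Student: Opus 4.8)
The plan is to reduce the statement to the classical Moreau--Rockafellar sum rule for two convex functions. The first observation is that the regular subdifferential $\partial$ used here is insensitive to the choice of an equivalent norm in its definition: if $\|\cdot\|_a\le c\|\cdot\|_b$ and $\|\cdot\|_b\le c'\|\cdot\|_a$, then for any $\varphi$ the condition $\liminf_{v\to w}\frac{\varphi(v)-\varphi(w)-\langle\xi,v-w\rangle}{\|v-w\|}\ge 0$ is equivalent whether $\|v-w\|$ is measured in $\|\cdot\|_a$ or in $\|\cdot\|_b$ (the two quotients differ by a bounded factor). Since $\mathbf{E}$ is finite-dimensional, all norms are equivalent, so I would assume w.l.o.g.\ that $\|\cdot\|$ is the Euclidean norm $\|\cdot\|_2$; by the same equivalence $f$ is $\rho'$-weakly convex w.r.t.\ $\|\cdot\|_2$ for some $\rho'\ge 0$. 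Setting $q(x):=\tfrac{\rho'}{2}\|x\|_2^2$, the parallelogram identity $\|\lambda x+(1-\lambda)y\|_2^2=\lambda\|x\|_2^2+(1-\lambda)\|y\|_2^2-\lambda(1-\lambda)\|x-y\|_2^2$ shows $\phi:=f+q$ is convex and finite-valued; thus $f=\phi-q$ with $\phi$ finite convex and $q$ a smooth convex quadratic.

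Next I would record the elementary ``smooth sum rule'': for any lower semicontinuous $\psi:\mathbf{E}\mapsto\mathbb{R}\cup\{+\infty\}$ and any $C^1$ function $q$, $\partial(\psi+q)(w)=\partial\psi(w)+\nabla q(w)$. This is immediate from the definition of the regular subdifferential, since adding $q$ changes the numerator of the difference quotient by $q(v)-q(w)-\langle\nabla q(w),v-w\rangle=o(\|v-w\|)$, which leaves the $\liminf$ unchanged; applying this once with $q$ and once with $-q$ gives the equality. Using it with $\psi=\phi$ yields $\partial f(w)=\partial\phi(w)-\nabla q(w)$, and using it with $\psi=\phi+g$ (note $\phi+g=(f+g)+q$) yields $\partial(f+g)(w)=\partial(\phi+g)(w)-\nabla q(w)$.

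It then remains to prove $\partial(\phi+g)(w)=\partial\phi(w)+\partial g(w)$, where $\phi,g$ are both convex, $\phi$ is finite-valued (hence continuous on all of $\mathbf{E}$, in particular locally Lipschitz), $g$ is lower semicontinuous, and $w\in\operatorname{dom}(g)$. For convex functions the regular subdifferential coincides with the convex subdifferential: for $\phi$ this follows from Proposition~\ref{prop:loc_subdiff} (as $\phi$ is locally Lipschitz), and for $g$ one argues directly from monotonicity of the difference quotients $t\mapsto\frac{g(w+td)-g(w)}{t}$ along rays together with the $\liminf$ condition. Hence the claim is exactly Moreau--Rockafellar: the inclusion $\supseteq$ is obtained by adding the two subgradient inequalities, and the inclusion $\subseteq$ holds because $\phi$ is continuous at a point of $\operatorname{dom}(g)$, which is the required constraint qualification --- this is the only place the hypotheses on $g$ (convexity, lower semicontinuity, $w\in\operatorname{dom}(g)$) are used, and it is precisely why the theorem covers indicator functions $g=\chi_{\cal W}$ as needed in Lemma~\ref{lem:prox_estimate}. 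Chaining the three identities gives $\partial(f+g)(w)=\partial(\phi+g)(w)-\nabla q(w)=\partial\phi(w)+\partial g(w)-\nabla q(w)=\big(\partial\phi(w)-\nabla q(w)\big)+\partial g(w)=\partial f(w)+\partial g(w)$.

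The only genuinely nontrivial ingredient is the $\subseteq$ inclusion of the convex sum rule; if a self-contained argument is wanted it follows in a few lines from the separating hyperplane theorem applied to the (suitably shifted) epigraphs of $\phi$ and $-g$, otherwise one simply invokes \cite{RockaWets:1998} (Theorem~10.1 / Corollary~10.9). Everything else --- norm-invariance of $\partial$, the parallelogram identity, and the $o(\cdot)$ estimate behind the smooth sum rule --- is routine, so I expect no real obstacle beyond getting the constraint-qualification bookkeeping right for the extended-real-valued $g$.
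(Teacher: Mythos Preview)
The paper does not actually prove this theorem; it is stated purely as a citation of Corollary~10.9 in \cite{RockaWets:1998}, and is then used as a black box in the proofs of Lemma~\ref{lem:prox_estimate} and Proposition~\ref{prop:opt_cond_wk_cvx}. So there is no ``paper's own proof'' to compare against.

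Your argument, on the other hand, is a correct self-contained derivation. The three reductions you make --- (i) norm-invariance of the regular subdifferential to pass to the Euclidean norm, (ii) the parallelogram identity to write $f=\phi-q$ with $\phi$ convex and $q$ a smooth quadratic, and (iii) the elementary $C^1$ sum rule $\partial(\psi+q)=\partial\psi+\nabla q$ --- cleanly reduce the weakly convex statement to the classical Moreau--Rockafellar sum rule for two convex functions, where the continuity of the finite-valued $\phi$ supplies exactly the constraint qualification needed. Your sketch of why the regular and convex subdifferentials coincide for $g$ (monotonicity of difference quotients along rays, applied after restricting the $\liminf$ to a fixed direction) is also sound. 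The only step you defer is the $\subseteq$ inclusion in Moreau--Rockafellar itself, which is indeed the substantive part; invoking \cite{RockaWets:1998} there is consistent with what the paper does anyway. In short, your proposal supplies strictly more than the paper, and nothing in it is wrong.
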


Next, we provide a relation between directional derivatives and the regular subdifferential.

\begin{prop}[From Proposition 8.32 in \cite{RockaWets:1998}]
If $\varphi:\mathbb{E}\mapsto\mathbb{R}\cup\{+\infty\}$ is weakly convex, then \[\mbox{\em dist}(0,\partial \varphi(w))=-\inf_{\|\dir\|\leq 1} \varphi^{\prime}(w;\dir).\]
\end{prop}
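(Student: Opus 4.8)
The plan is to reduce the identity to the classical convex-analysis fact that, for a convex function finite near $w$, the directional derivative $\dir\mapsto g^{\prime}(w;\dir)$ is precisely the support function of $\partial g(w)$, and then to pass from a support function to a distance by minimax duality. Throughout, the distance to a subset of the dual space is measured in $\dual{\cdot}$.

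First I would reduce to the convex case. Pick a $C^1$ function $q$ with $q(w)=0$, $\grd q(w)=0$, and $g:=\varphi+q$ convex; e.g.\ $q(v)=\frac{c}{2}\|v-w\|^2$ with $c$ chosen from $\rho$ and the strong-convexity modulus of $\tfrac12\|\cdot\|^2$, so that convexity of $g$ follows directly from the weak-convexity inequality~\eqref{eqn:wk_cvx}. Since $q$ is differentiable with $\grd q(w)=0$, passing to $g$ changes neither object of interest: from the $\liminf$ definition, $\varphi^{\prime}(w;\dir)=g^{\prime}(w;\dir)-\langle\grd q(w),\dir\rangle=g^{\prime}(w;\dir)$, and by Theorem~\ref{thm:sum_subdiff} (applied to the weakly convex summand $\varphi$ and the convex summand $q$), $\partial g(w)=\partial\varphi(w)+\partial q(w)=\partial\varphi(w)$. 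Now $g$ is convex and finite near $w$, hence locally Lipschitz there, so its difference quotients in a fixed direction are monotone nondecreasing in the step size; therefore $g^{\prime}(w;\cdot)$ is a finite, continuous, sublinear function. Combining this with Proposition~\ref{prop:loc_subdiff} applied to $g$, which gives $\partial g(w)=\{u:\langle u,\dir\rangle\le g^{\prime}(w;\dir)\ \forall\dir\}$, the standard biduality of closed sublinear functions yields $g^{\prime}(w;\dir)=\sigma_{\partial g(w)}(\dir):=\sup_{u\in\partial g(w)}\langle u,\dir\rangle$. Hence $\varphi^{\prime}(w;\dir)=\sup_{u\in\partial\varphi(w)}\langle u,\dir\rangle$.

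It then remains to convert the support function into a distance. Since the unit ball is symmetric, replacing $\dir$ by $-\dir$ gives
\begin{align*}
-\inf_{\|\dir\|\le 1}\varphi^{\prime}(w;\dir)
&=\sup_{\|\dir\|\le 1}\Bigl(-\sup_{u\in\partial\varphi(w)}\langle u,\dir\rangle\Bigr)
=\sup_{\|\dir\|\le 1}\ \inf_{u\in\partial\varphi(w)}\langle u,\dir\rangle .
\end{align*}
The functional $(\dir,u)\mapsto\langle u,\dir\rangle$ is bilinear, $\{\dir:\|\dir\|\le 1\}$ is convex and compact in the finite-dimensional space $\bE$, and $\partial\varphi(w)$ is convex and closed (an intersection of half-spaces, by Proposition~\ref{prop:loc_subdiff}), so Sion's minimax theorem permits swapping $\sup$ and $\inf$, and the right-hand side equals $\inf_{u\in\partial\varphi(w)}\sup_{\|\dir\|\le 1}\langle u,\dir\rangle=\inf_{u\in\partial\varphi(w)}\dual{u}=\operatorname{dist}\bigl(0,\partial\varphi(w)\bigr)$, which is the claim. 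The degenerate case $\partial\varphi(w)=\emptyset$ is consistent: the support function is then $\equiv-\infty$, making both sides $+\infty$.

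The main obstacle is not the duality bookkeeping but the two places where the general norm matters: (i) exhibiting $q$ in the reduction so that $\varphi+q$ is genuinely convex, which—unlike the Euclidean case in \cite{RockaWets:1998}—needs $\tfrac12\|\cdot\|^2$ to be smooth and strongly convex, exactly the hypothesis already invoked in this section (cf.\ Proposition~\ref{prop:reg_wk_cvx}); and (ii) ensuring $\varphi^{\prime}(w;\cdot)$ is proper and finite so that the convex-analytic facts above apply, which requires $\varphi$ to be (locally) Lipschitz near $w$, the standing assumption for our weakly convex losses. Everything else is the routine support-function/minimax computation.
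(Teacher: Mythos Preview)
The paper does not actually prove this proposition: it is imported verbatim from \cite{RockaWets:1998} (Proposition~8.32) and stated without argument. So there is nothing to compare your approach against on the paper's side.

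Your argument is sound under the extra hypotheses you flag. The reduction to the convex case via $q(v)=\frac{c}{2}\|v-w\|^2$ with $\nabla q(w)=0$, followed by the support-function identity $g'(w;\cdot)=\sigma_{\partial g(w)}(\cdot)$ and Sion's minimax, is exactly the right convex-analytic route, and you correctly identify that it needs (i) $\tfrac12\|\cdot\|^2$ smooth and strongly convex (so that Proposition~\ref{prop:reg_wk_cvx} makes $g$ convex and $q$ is $C^1$), and (ii) $\varphi$ finite and Lipschitz near $w$ (so that the support-function representation of $g'(w;\cdot)$ holds without lower-semicontinuity subtleties). One small expository point: in the minimax display, the step from $-\sup_{u}\langle u,\dir\rangle$ to $\inf_{u}\langle u,\dir\rangle$ is not an identity on its own---it becomes one only after the substitution $\dir\mapsto -\dir$ you announce, so it would read more cleanly to first write $-\sup_u\langle u,\dir\rangle=\inf_u\langle u,-\dir\rangle$ and then relabel.

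The one place where your proof is narrower than the stated proposition is the extended-real-valued case: the paper later applies this result to $\varphi=f+\chi_{\cal W}$ at a minimizer $w^\ast$ that may lie on $\partial{\cal W}$, where $\varphi$ is not finite in a neighborhood and your ``locally Lipschitz near $w$'' assumption fails. Handling that boundary case requires the more delicate version of the support-function identity for proper convex functions on the relative interior/boundary of their domain (this is precisely what \cite{RockaWets:1998} supplies). For the standing assumptions on losses elsewhere in the paper your version suffices, but if you want to match the proposition as written you should either invoke \cite{RockaWets:1998} for that case or extend the argument to allow $g'(w;\dir)=+\infty$ in infeasible directions.
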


With these results, we can now provide optimality conditions for weakly convex optimization

\begin{prop}[Stationarity conditions for weakly convex optimization] \label{prop:opt_cond_wk_cvx}
Let $f:{\cal W}\mapsto\mathbb{R}$ be $\rho$-weakly convex and $L_0$-Lipschitz w.r.t.~$\|\cdot\|$, and ${\cal W}$ a closed and convex set. Then, if $w^{\ast}\in\arg\min\{ f(w):\, w\in{\cal W}\},$ then there exists $g\in \partial f(w^{\ast})$ such that
\[ \langle g,v-w^{\ast}\rangle \geq 0 \qquad (\forall v\in{\cal W}). \]
\end{prop}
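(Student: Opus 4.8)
The plan is to combine the first-order optimality condition for constrained convex problems with the weak-convexity machinery developed earlier in the appendix. Since $f$ is $\rho$-weakly convex and $L_0$-Lipschitz on the closed convex set $\cW$, we can regularize it to a convex function. Concretely, I would fix a norm-squared $\frac12\|\cdot\|^2$ that is $\nu$-strongly convex w.r.t.~$\|\cdot\|$ and pick $\beta\geq \rho/\nu$; then by Proposition~\ref{prop:reg_wk_cvx} the function $\varphi(w):=f(w)+\frac{\beta}{2}\|w-w^{\ast}\|^2$ is convex (indeed $(\nu\beta-\rho)$-strongly convex) on $\cW$. The point of anchoring the quadratic at $w^{\ast}$ is that $\varphi$ and $f$ differ by a term that vanishes to first order at $w^{\ast}$, so the two functions share the same local variational behavior there.

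Next I would argue that $w^{\ast}$ minimizes $\varphi$ over $\cW$ as well: for any $v\in\cW$, $\varphi(v)=f(v)+\frac{\beta}{2}\|v-w^{\ast}\|^2\geq f(w^{\ast})=\varphi(w^{\ast})$ since $f(v)\geq f(w^{\ast})$ and the quadratic term is nonnegative. Now $\varphi$ is convex, lower semicontinuous, and finite on $\cW$, so the standard optimality condition for constrained convex minimization applies: $0\in \partial\varphi(w^{\ast})+\cN_{\cW}(w^{\ast})$. Using the sum rule for the subdifferential of a weakly convex plus convex function (Theorem~\ref{thm:sum_subdiff}) together with the chain rule for the convex subdifferential of $\frac{\beta}{2}\|\cdot-w^{\ast}\|^2$, and evaluating the latter at $w^{\ast}$ where $\partial\big(\frac{\beta}{2}\|\cdot-w^{\ast}\|^2\big)(w^{\ast})=\{0\}$, we get $\partial\varphi(w^{\ast})=\partial f(w^{\ast})$. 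Hence there exists $g\in\partial f(w^{\ast})$ with $-g\in\cN_{\cW}(w^{\ast})$, i.e.\ $\langle -g, v-w^{\ast}\rangle \leq 0$ for all $v\in\cW$, which is exactly $\langle g, v-w^{\ast}\rangle\geq 0$ for all $v\in\cW$.

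Alternatively, and perhaps more cleanly, one can bypass the regularization entirely and argue directly via directional derivatives. Since $w^{\ast}$ is a minimizer over the convex set $\cW$, for any $v\in\cW$ the feasible direction $d=v-w^{\ast}$ satisfies $f^{\prime}(w^{\ast};d)\geq 0$, because $f(w^{\ast}+\varepsilon d)\geq f(w^{\ast})$ for all small $\varepsilon\geq 0$ (the point $w^{\ast}+\varepsilon d$ stays in $\cW$ by convexity). Then I would invoke Proposition~\ref{prop:loc_subdiff} to relate $\partial f(w^{\ast})$ to these directional derivatives. The subtlety here is that $\partial f(w^{\ast})=\{g: \langle g,d\rangle\leq f^{\prime}(w^{\ast};d)\ \forall d\}$ certifies an upper bound on $\langle g,d\rangle$, not a lower bound, so one cannot immediately conclude $\langle g, v-w^{\ast}\rangle\geq 0$ for a \emph{generic} $g$; the claim is existential. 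This is why one still needs either the optimality-condition route above (which produces the right $g$) or a separation/compactness argument showing that $0\in\partial\chi_{\cW}(w^{\ast})+\partial f(w^{\ast})$, equivalently that the convex compact set $\partial f(w^{\ast})$ meets $-\cN_{\cW}(w^{\ast})$.

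The main obstacle is precisely this: the statement is an \emph{existence} claim about a subgradient, so it is not enough to manipulate an arbitrary $g\in\partial f(w^{\ast})$; one must actually produce the subgradient that lies in $-\cN_{\cW}(w^{\ast})$. The regularization-plus-sum-rule argument handles this automatically because the standard constrained-optimality theorem for the convex function $\varphi$ yields $0=g + n$ with $g\in\partial\varphi(w^{\ast})=\partial f(w^{\ast})$ and $n\in\cN_{\cW}(w^{\ast})$. So I would adopt that route, with the key steps being: (i) form $\varphi=f+\frac{\beta}{2}\|\cdot-w^{\ast}\|^2$ and invoke Proposition~\ref{prop:reg_wk_cvx} for convexity; (ii) check $w^{\ast}$ minimizes $\varphi$ over $\cW$; (iii) apply the convex constrained optimality condition $0\in\partial\varphi(w^{\ast})+\cN_{\cW}(w^{\ast})$; (iv) apply Theorem~\ref{thm:sum_subdiff} and the chain rule to identify $\partial\varphi(w^{\ast})=\partial f(w^{\ast})$; (v) read off the desired $g$.
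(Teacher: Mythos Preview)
Your regularization argument is correct and gives a valid proof, but it takes a different route from the paper. The paper does not regularize $f$ into a convex function; instead it works directly with the weakly convex objective, writing the constrained problem as the unconstrained minimization of $\varphi:=f+\chi_{\cal W}$, and then uses the directional-derivative characterization of the regular subdifferential (specifically the identity $\mathrm{dist}(0,\partial\varphi(w))=-\inf_{\|e\|\le1}\varphi'(w;e)$ for weakly convex $\varphi$) to conclude that $0\in\partial\varphi(w^\ast)$. The sum rule then gives $0\in\partial f(w^\ast)+{\cal N}_{\cal W}(w^\ast)$ exactly as you need.

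Your approach trades that piece of nonsmooth analysis for the more classical constrained-convex optimality condition, at the cost of needing a quadratic whose squared norm is strongly convex w.r.t.~$\|\cdot\|$. One caveat: the proposition is stated for an arbitrary norm, and $\tfrac12\|\cdot\|^2$ need not be strongly convex (e.g.\ $\|\cdot\|_1$). You can patch this easily in finite dimensions by using the Euclidean quadratic $\tfrac{\beta}{2}\|w-w^\ast\|_2^2$ and norm equivalence (so $f$ is also weakly convex w.r.t.~$\|\cdot\|_2$, with a rescaled constant), but you should say so explicitly. With that fix, steps (i)--(v) go through: $\varphi$ is convex, $w^\ast$ minimizes it over ${\cal W}$, the convex sum rule gives $0\in\partial\varphi(w^\ast)+{\cal N}_{\cal W}(w^\ast)$, and $\partial\varphi(w^\ast)=\partial f(w^\ast)$ since the quadratic has subdifferential $\{0\}$ at its anchor. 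Your discussion of the alternative directional-derivative route is also accurate: the upper-bound characterization in Proposition~\ref{prop:loc_subdiff} alone does not produce the existential subgradient, which is why the paper brings in the distance formula.
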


\begin{proof}
First, we observe that without loss of generality, $f:\mathbf{E}\mapsto\mathbb{R}$ (this is a consequence of the Lipschitz extension Theorem). Let now $g(w)=\chi_{\cal W}(w)$ (i.e., the convex indicator function, as defined in the beginning of this section). Since $w^{\ast}\in {\cal W}$, by Proposition \ref{thm:sum_subdiff}, we have
$\partial (f+g)(w^{\ast})=\partial f(w^{\ast})+\partial g(w^{\ast}).$ Now we apply Proposition \ref{prop:opt_cond_wk_cvx} to $\varphi(w)=f(w)+g(w)$; since $w^{\ast}$ is a minimizer of $\varphi$, we have that $\varphi^{\prime}(w^{\ast};\dir)\geq 0$ for all $\dir$, and hence $\mbox{dist}(0,\partial \varphi(w^{\ast}))=0$. Since $\partial g(w^{\ast})={\cal N}(w^{\ast})$, we get that 
\[ 0= \mbox{dist}(0,\partial f(w^{\ast})+{\cal N}_{\cal W}(w^{\ast})), \]
and this implies that there exists $g\in \partial f(w^{\ast})$, such that $g\in -{\cal N}_{\cal W}(w^{\ast})$, i.e.,
\[ \langle g,v-w^{\ast}\rangle \geq 0 \qquad(\forall v\in {\cal W}). \]
\end{proof}

The previous result leads to a natural definition of the stationarity gap in weakly convex optimization:
\begin{equation}\label{eqn:stat_gap_wk_cvx} \gap_f(w) = \inf_{g\in\partial f(w)} \sup_{v\in {\cal W}} \langle g,v-w\rangle. 
\end{equation}
Notice that, by Proposition \ref{prop:opt_cond_wk_cvx}, any minimizer of a weakly convex and Lipschitz function is such that its stationarity gap is equal to zero.

\subsection{Missing proofs from Section \ref{sec:prox_operator}} \label{sec:app-prox}

\subsubsection{Missing Details in Consequences of Proximal Near Stationarity}

\label{sec:PNS_conseq}

Now we explain some technical details behind the derivation of the following consequence for proximal nearly-stationary algorithms
\begin{equation} \label{eqn:PNS_conseq}
\mathbb{E}_{S\sim{\cal D}^n,{\cal A}}\big[\|\prox_{F_{\cal D}}^{\beta}({\cal A}(S))-{\cal A}(S)\| \big] \leq \vartheta \qquad\mbox{ and }\qquad 
\mathbb{E}_{S\sim{\cal D}^n,{\cal A}} \big[\gap_{F_{\cal D}}\big(\prox_{F_{\cal D}}^{\beta}({\cal A}(S))\big)\big] \leq \vartheta.
\end{equation}
First, we suppose ${\cal A}$ is $(\vartheta,\beta)$-proximal nearly stationary. From this, we directly conclude the first property,
\[ \mathbb{E}_{S\sim{\cal D}^n,{\cal A}}\big[\|\prox_{F_{\cal D}}^{\beta}({\cal A}(S))-{\cal A}(S)\| \big] \leq \vartheta.  \]
For the second property, we first recall the stationarity gap in weakly convex
optimization (see eqn.~\eqref{eqn:stat_gap_wk_cvx}): here, for  $w\in {\cal W}$ and objective $f:{\cal W}\mapsto\mathbb{R}$, define
\[ \gap_{f}(w) = \inf_{g\in \partial f(w)}\sup_{v\in {\cal W}} \langle g,w-v\rangle. \]
Now, if  ${\cal B}:\cZ^n\mapsto \mathbb{R}$ is a randomized algorithm, its expected gap corresponds to
\[ \mathbb{E}_{S\sim {\cal D}^n, {\cal B}} [\gap_{F_{\cal D}}({\cal B}(S))] = \mathbb{E}_{S\sim {\cal D}^n, {\cal B}} \Big[ \inf_{g\in \partial F_{\cal D}({\cal B}(S))}\sup_{v\in {\cal W}} \langle g,{\cal B}(S)-v\rangle\Big]. \]

Finally, under this definition of the expected gap, we have that if ${\cal B}(S)=\prox_{F_{\cal D}}^{\beta}({\cal A}(S))$, then by Lemma~\ref{lem:prox_estimate} and $(\vartheta,\beta)$-proximal near stationarity,
\begin{align*}
    \gap_{F_{\cal D}}({\cal B}) &= \mathbb{E}_{S\sim {\cal D}^n, {\cal B}} \Big[ \inf_{g\in \partial F_{\cal D}({\cal B}(S))}\sup_{v\in {\cal W}} \langle g,{\cal B}(S)-v\rangle\Big]
    \leq \mathbb{E}_{S\sim {\cal D}^n, {\cal B}} \Big[ \beta D\|{\cal B}(S)-{\cal A}(S)\|\Big]\\
    &\leq \vartheta,
\end{align*}
concluding the claim.

\end{document}